\documentclass[10pt]{article} 
\usepackage[accepted]{tmlr}


\usepackage{amsmath,amsfonts,bm}









\def\eqref#1{equation~\ref{#1}}









\def\1{\bm{1}}










\DeclareMathAlphabet{\mathsfit}{\encodingdefault}{\sfdefault}{m}{sl}
\SetMathAlphabet{\mathsfit}{bold}{\encodingdefault}{\sfdefault}{bx}{n}













\DeclareMathOperator{\Prox}{Prox}
\DeclareMathOperator{\dist}{dist}

\usepackage{amsthm}
\usepackage{wrapfig}
\usepackage{enumitem}
\usepackage{multirow}
\usepackage{array}
\usepackage{ragged2e}
\usepackage{booktabs}
\usepackage{colortbl}
\usepackage{multirow}
\usepackage{diagbox}
\usepackage{bm}
\usepackage{color}
\usepackage{graphicx}
\definecolor{CColor}{rgb}{0.01,0.31,0.59}
\usepackage[colorlinks,
            linkcolor=black,       
            anchorcolor=black,  
            citecolor=blue,        
            ]{hyperref}
\usepackage{mathrsfs}
\usepackage{amssymb}
\usepackage{amsfonts}
\usepackage{caption}
\usepackage{subcaption}
\usepackage{algorithm}
\usepackage[noend]{algpseudocode}
\newtheorem{assumption}{Assumption} 
\newtheorem{definition}{Definition} 
\newtheorem{remark}{Remark} 
\newtheorem{theorem}{Theorem}
\newtheorem{lemma}{Lemma}

\newtheorem{fact}{Fact}
 \usepackage{titlesec}
\usepackage{titletoc}
\makeatletter
\def\ttl@Hy@steplink#1{%
	\Hy@MakeCurrentHrefAuto{#1*}%
	\edef\ttl@Hy@saveanchor{%
		\noexpand\Hy@raisedlink{%
			\noexpand\hyper@anchorstart{\@currentHref}%
			\noexpand\hyper@anchorend
			\def\noexpand\ttl@Hy@SavedCurrentHref{\@currentHref}%
			\noexpand\ttl@Hy@PatchSaveWrite
		}%
	}%
}%
\def\ttl@Hy@PatchSaveWrite{%
	\begingroup
	\toks@\expandafter{\ttl@savewrite}%
	\edef\x{\endgroup
		\def\noexpand\ttl@savewrite{%
			\let\noexpand\@currentHref
			\noexpand\ttl@Hy@SavedCurrentHref
			\the\toks@
		}%
	}%
	\x
}%
\def\ttl@Hy@refstepcounter#1{%
	\let\ttl@b\Hy@raisedlink
	\def\Hy@raisedlink##1{%
		\def\ttl@Hy@saveanchor{\Hy@raisedlink{##1}}%
	}%
	\refstepcounter{#1}%
	\let\Hy@raisedlink\ttl@b
}%
\def\ttl@gobblecontents#1#2#3#4{\ignorespaces}%
\makeatother

\newcommand\DoToC{%
	\startcontents
	\printcontents{}{1}{\textbf{Organization of Appendix}\vskip3pt\hrule\vskip5pt}
	\vskip3pt\hrule\vskip5pt
}

\definecolor{Gray}{gray}{0.9}

\title{ Fusion of Global and Local Knowledge for  Personalized \\Federated Learning  }
\author{\name Tiansheng Huang\thanks{Work was done during internships at JD Explore Academy.} \email thuang374@gatech.edu  \\
      \addr Georgia Institute of Technology
      \AND
      \name Li Shen\thanks{Li Shen is the corresponding author.}
      \email mathshenli@gmail.com \\
      \addr JD Explore Academy
      \AND
      \name Yan Sun\footnotemark[1] \email ysun9899@uni.sydney.edu.au\\
      \addr The University of Sydney
      \AND
      \name Weiwei Lin \email linww@scut.edu.cn\\
      \addr 
       South China University of Technology \\
       Peng Cheng Laboratory
       \AND
    \name Dacheng Tao \email dacheng.tao@gmail.com\\
      \addr JD Explore Academy
      }



\begin{document}
	\definecolor{apricot}{rgb}{0.98, 0.81, 0.69}
	\definecolor{flamingopink}{rgb}{0.99, 0.56, 0.67}
	\definecolor{lemonchiffon}{rgb}{1.0, 0.98, 0.8}
	\maketitle
	\begin{abstract}
		Personalized federated learning, as a variant of federated learning, trains customized models for clients using their heterogeneously distributed data. However, it is still inconclusive about how to design personalized models with better representation of shared global knowledge and personalized pattern. To bridge the gap, we in this paper explore personalized models with low-rank and sparse decomposition. Specifically, we employ proper regularization to 	extract a low-rank global knowledge representation (GKR), so as to distill global knowledge into a compact representation.  Subsequently, we employ a sparse component over the obtained GKR to fuse the personalized pattern into the global knowledge. As a solution, we propose a two-stage proximal-based algorithm named \textbf{Fed}erated learning with mixed \textbf{S}parse and \textbf{L}ow-\textbf{R}ank representation (FedSLR) to efficiently search for the mixed models.  Theoretically, under proper assumptions,  we show that the GKR trained by FedSLR can at least sub-linearly converge to a stationary point of the regularized problem, and that the sparse component being fused can converge to its stationary point under proper settings. Extensive experiments also demonstrate the superior empirical performance of FedSLR. Moreover, FedSLR reduces the number of parameters, and lowers the down-link communication complexity, which are all desirable for federated learning algorithms. Source code is available in \url{https://github.com/huangtiansheng/fedslr}.
	\end{abstract}
	
	\section{Introduction}
Federated Learning (FL) \citep{mcmahan2016communication} has emerged as a solution to exploit data on the edge in a privacy-preserving manner. In FL, clients train a global model collaboratively without sharing private data. However, the global model may not produce good accuracy performance for each client's task due to the existence of Non-IID issue \citep{zhao2018federated}.

	To bridge the gap, the concept of Personalized Federated Learning (PFL) is proposed as a remedy.  PFL deploys customized models for each client, which typically incorporates a global component that represents global knowledge from all the clients, and a personalized component that represents the local knowledge. The main stream of PFL can be classified into four genres, i.e., layer-wise separation \citep{arivazhagan2019federated}, linear interpolation \citep{deng_edge_2019}, regularization \citep{li2021ditto,wang2023fedabc}, and personalized masks \citep{huang2022achieving,li2021fedmask, li2020lotteryfl,dai2022dispfl}. All of these studies either explicitly or implicitly enforce a global component and a personalized component to compose the personalized models.   Despite a plethora of research on this topic having emerged in sight, PFL is still far from its maturity. 	Particularly, it is still unclear how to better represent the global knowledge with the global component and the personalized pattern with the local component, and more importantly, how to merge them together to produce better performance.

Moreover, existing PFL studies usually involve additional personalized parameters in their local models, e.g., \citep{deng2020adaptive1}. However, the local knowledge should be complementary to the global knowledge, which means it does not need dominating expressiveness in the parameter space. On the other hand, the global knowledge, which represents clients' commonality, and is complemented by personalized component, may not need a large global parameter space to achieve its functionality. All in all, we are trying to explore:
\begin{quote}
    Do we need such amount of parameters to represent the local pattern, as well as the global knowledge? Can  the expressiveness of local models be optimized by intersecting the parameter space of the personalized and global component?
\end{quote}

	 To answer above questions, we utilize the idea of decomposing personalized models as a mixture of a low-rank Global Knowledge Representation (GKR) and a sparse personalized component, wherein the GKR is used to extract the core common information across the clients, and the sparse component serves as representing the personalized pattern for each client. To realize the low-rank plus sparse model, we employ proper regularization on the GKR and the sparse component in two separate optimization problems.  The regularization-involved problem, for which there is not an existing solution to our best knowledge, are alternatively solved via the proposed two-stage algorithm (named FedSLR). The proposed algorithm enjoys superior performance over the personalized tasks, while simultaneously reducing the communication and model complexity. In summary, we highlight the following characteristics that FedSLR features: (i) The GKR component can better represent the global knowledge. Its accuracy performance is increased compared to general FL solution (e.g., FedAvg). (ii) The mixed personalized models, which are fused with local pattern represented by their sparse components, obtain SOTA performance for each client's task.    
	(iii) Both the two models being generated can be compressed to have a smaller amount of parameters via factorization, which is easier to deploy in the commonality hardware. Besides, down-link communication cost in the training phase can be largely reduced according to our training mechanism.
	
	Empirically, we conduct extensive experiment to validate the performance of FedSLR. Specifically, our experiments on CIFAR-100 verify that: (i) the Global Knowledge Representation (GKR) can better represent the global knowledge (GKR model achieves 7.23\% higher accuracy compared to FedAvg), and the mixed models can significantly improve the model accuracy of personalized tasks (the mixed model produced by FedSLR achieves 3.52\% higher accuracy to Ditto). (ii) Moreover, both GKR and the mixed models, which respectively represent the global and personalized knowledge, are more compact (GKR model achieve 50.40\% less parameters while the mixed model pruned out 40.11\% parameters compared to a model without pruning). (iii) The downlink communication is lowered (38.34\% fewer downlink communication in one session of FL). 
    Theoretically, we establish the convergence property of GKR and the sparse personalized component, which showcases that both components asymptotically converge to their stationary points under proper settings. 
 
 To the end, our contributions are summarized as follows, 
	\begin{itemize}[leftmargin=*]
		\item We derive shared global knowledge with a low-rank global knowledge representation (GKR), which alone may not represent personalized knowledge covered by local data. Therefore, we propose to perform fusion of personalized pattern via merging sparse personalized components with GKR, which only incur minor extra parameters for the mixed models.
		\item We propose a new methodology named FedSLR for optimizing the proposed two-stage problem. FedSLR only requires minor extra computation of proximal operator on the server, while can significantly boost performance, reduce communication size, and lighten model complexity.  
		\item We present convergence analysis to FedSLR, which concludes that under the assumption of Kurdyka-Lojasiewicz condition, the  GKR produced by FedSLR could at least sub-linearly converge to a stationary point. Also, the sparse components can asymptotically converge to their stationary points under proper settings. Extensive empirical experiments also demonstrate the superiority of FedSLR.
	\end{itemize}

	\par
	

	
	\section{Related Work}
	\textbf{Federated 
	learning.} FL was first proposed in \citep{mcmahan2016communication} to enable collaborative training of a network model without exchanging the clients' data. However, the data residing in clients are intrinsically heterogeneous (or Non-IID), which leads to performance degradation of global model \citep{zhao2018federated}. Many attempts have been proposed to alleviate the data heterogeneity issue. SCAFFOLD \citep{karimireddy2020scaffold}, for example, introduces a variance reduction technique to correct the drift of local training. Similarly, FedCM \citep{xu2021fedcm} uses  client-level momentum to perform drift correction to the local gradient update. Another genre, the prox-based algorithm, e.g., FedProx \citep{li2018federated}, incorporates a proximal term in the local loss to constrain the inconsistency of the local objective. In addition to the proximal term, a subsequent study, \citep{acar2021federated} further incorporates a linear term in its local loss to strengthen local consistency. FedPD \citep{zhang2021fedpd}, FedSpeed \citep{sunfedspeed} and FedAdmm \citep{wang2022fedadmm} explore the combination of  the method of ADMM into FL to counter the Non-IID issue.

	\textbf{Personalized federated learning.} PFL is a relaxed variant of FL. In PFL, the goal is to train customized models for each client, and each customized model is used to perform an individual task for each client. We classify the existing PFL solutions in four genres. The first genre is to separate the global layers and personalized layers, e.g., \citep{collins2021exploiting,liang2020think,arivazhagan2019federated}, which respectively contain global knowledge and personalized pattern of each specific client. The second genre is linear interpolation. To illustrate, \citep{deng2020adaptive1, mansour2020three} and \citep{gasanov2021flix} propose to linearly interpolate personalized components and global component to produce the personalized models. The third genre, inspired by the literature on multi-task learning, is to introduce proximal regularizer to constrain the proximity of global model and personalized models, e.g., \citep{li2021ditto,yu2020salvaging}. The last genre is to personalize the global model with a sparse mask, see  \citep{huang2022achieving, li2021fedmask}.\par 
	{\color{black}\textbf{Robust PCA and low-rank plus sparse.}  In robust PCA \citep{candes2011robust, xu2010robust}, a data matrix $X$ is decomposed to a low-rank matrix $L$ and a sparse matrix $S$, in which $L$ is the principal component that preserves the maximum amount of information of the data matrix, and  $S$ is used to represent the outlier. The low-rank plus sparse model formulation is known to increase the model expressiveness compared to the pure low-rank and pure sparse formulation. In recent years, the idea of low-rank plus sparse is also adopted in statistical model \citep{sprechmann2015learning}, deep learning models, e.g., CNN \citep{yu2017compressing}, and more recently, Transformer \citep{chen2021pixelated, chen2021scatterbrain}. Motivated by the existing literature,  we aim to preserve the model expressiveness from compression by mixing the two compression techniques. Analogous to robust PCA, in our PFL setting, we use the low-rank component to represent the global knowledge in order to preserve the maximum amount of knowledge shared by clients, and we use the sparse component to represent the local knowledge, which is like an outlier from the global knowledge.     }
 \par
	In this paper, 	we apply the idea of sparse plus low-rank decomposition to bridge FL and PFL.
	Specifically, we train the low-rank GKR via a proximal-based descent method. Then, along the optimization trajectory of GKR, each client simultaneously optimizes the personalized sparse component using their local data.  In this way, the GKR shares the commonality among all user's data, and the personalized component captures the personalized pattern, which facilitates the mixed model to acquire better local performance. We emphasize that our proposed idea, that the personalized model can be decomposed to a low-rank GKR and a sparse personalized component, is novel over the existing literature. 
	
	\begin{figure}
		\vspace{-2.5cm}
		\centering
		\includegraphics[ width=0.95\linewidth]{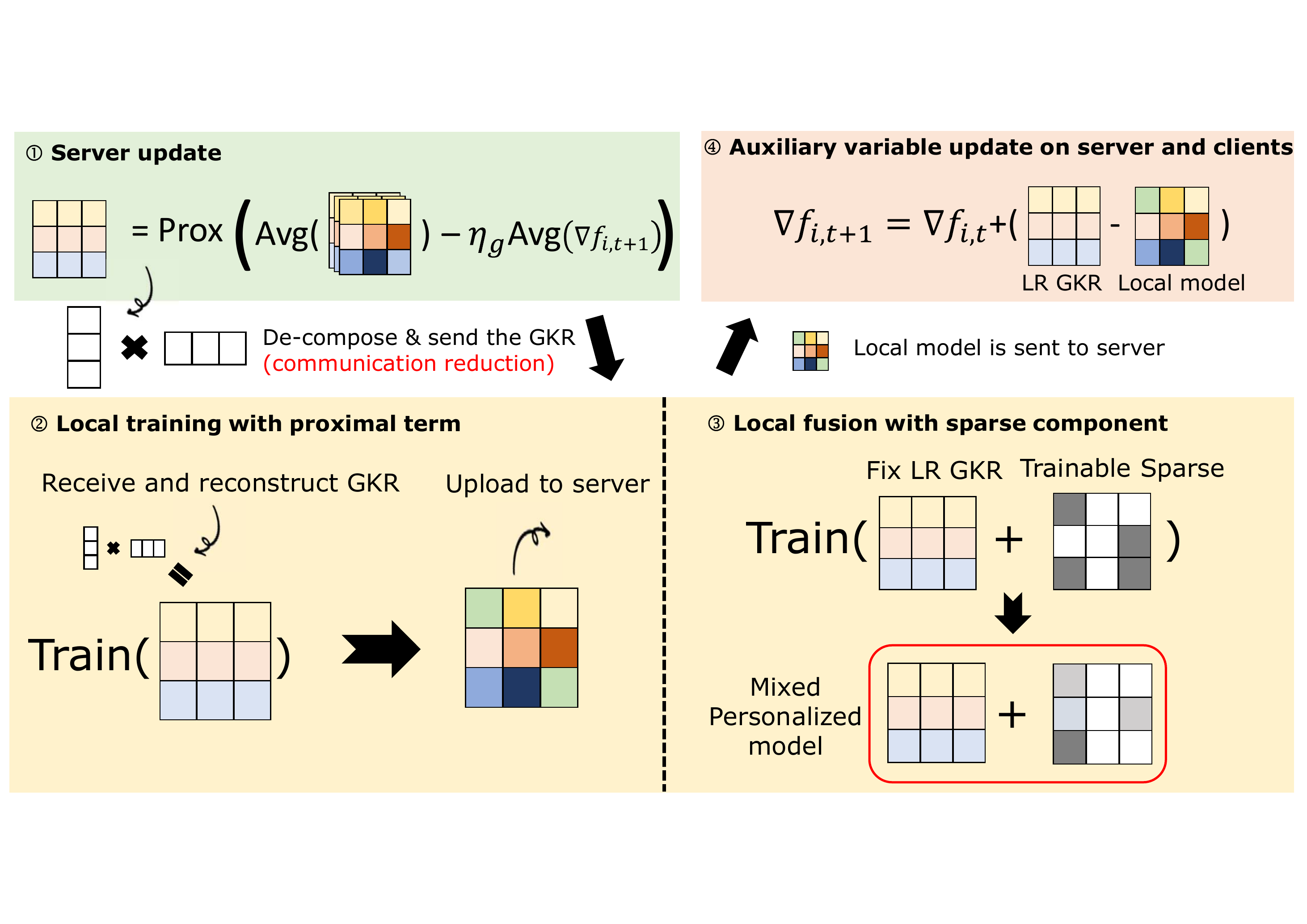}
	\vspace{-1.8cm}
		\caption{Illustration of  FedSLR. Firstly, the server performs proximal operator over the average of local models to enforce low-rank global knowledge representation (GKR). Secondly, {\color{black} each layer weights of GKR are decomposed into two smaller matrices and are distributed to clients. Clients reconstruct the GKR using these matrices and start local training with it.}  Thirdly,  clients fuse the personalized sparse component with respect to GKR. Finally, the local gradient (equivalently, the auxiliary variable $\bm \gamma_{i,t}$) is updated on both servers and clients according to the local models communicated by the clients.}
            \vspace{-0.4cm}
		\label{overview of FedSpa}
	\end{figure}
	
	\section{Problem Setup}
 We first assume there are $M$ clients (indexed by $i$) in the system. Each client hosts $n_i$ pieces of data.\par
	\textbf{Classical FL problem.}
	The FL problem in \citep{mcmahan2016communication} is formulated  as follows:
	\begin{equation}
			\bm w =\operatorname{\arg \min}_{ \bm w } \left \{ f(\bm w) := \frac{1}{M} \sum_{i=1}^M  f_i( \bm w )  \right \} 
	\end{equation}
	where $f_i(\bm w) := \frac{1}{n_i}\sum_{j=1}^{n_i} \text{Loss}( \bm w ; (\bm x_j, y_j) )$ is the the empirical loss (e.g., average of cross entropy) of the $i$-th client.  However, the global model obtained by minimizing global consensus may not perform well in personalized tasks, in other words, is incapable of minimizing each specific $f_i(\cdot)$.  
	
	Alternatively, under the assumption that the model deployed in each client can be not exactly the same,  our PFL formulation can be separated into two sub-problems, as follows.
	\par
	\textbf{Global knowledge representation (GKR)}. To compress the global knowledge into a smaller size of model, we alternatively establish the following \textit{global knowledge representation problem}.
	\begin{equation}
		\label{low rank fl problem}
		\begin{split}
			&\textbf{(P1)} \quad  \text{Find a stationary point }   \hat{\bm w}^* \text{such that } 0 \in \partial\hat{\bm f}(\hat{\bm w}^{*})  ,
			\\&  \qquad \qquad \text{where }\hat f(\bm w) := \left \{ f(\bm w) :=  \frac{1}{M} \sum_{i=1}^M  f_i( \bm w ) \right \}   + \left \{\mathcal{R}(\bm w ) :=  \lambda \sum_{l=1}^L \| \bm W_l \|_* \right \}  \\
		\end{split}
	\end{equation}
	Here, we use  $\mathcal{R}(\bm w )$  to impose low-rank regularizer for model with $L$ layers, where $\| \cdot \|_*$ is the nuclear norm. Prior to enter the model weights into the regularizer, we follow scheme 2 in \citep{idelbayev2020low} to do the matrix transformation for different layers of a neural network, i.e., $\bm W_l \in \mathbb{R}^{d_{l,1} \times d_{l,2}}=\pi(\bm w_l)$, where $\pi(\bm w_l)$ maps the original weights of $l$-th layer into the  transformed matrix   \footnote{Here we show the detailed implementation for this matrix transformation. We apply different transformations for different layer architecture of a deep neural network.   For the convolutional layer, we reshape the original vector weights of  convolutional layer $\bm w_l \in \mathbb{R}^{o \times d \times i \times d} $ to matrix $\bm W_l \in \mathbb{R}^{o\times d, i \times d}$, where $o,i,d$ respectively represents the output/input channel size and kernel size. For linear layer, we map vector $\bm w_l \in \mathbb{R}^{o \times i}$  to matrix $\bm w_l \in \mathbb{R}^{o,i} $, where $o$ and $i$ are the output/input neuron size. }. By factorization, one can decompose matrix weights of each layer $\bm W_l$ into  $\bm W_l = \bm U_l \bm V^T_l$ where $\bm U_l \in \mathbb{R}^{d_{l,1},r}$, $\bm V_l \in \mathbb{R}^{d_{l,2},r}$ and $r$ is the rank of matrix. With sufficiently large intensity of the regularizer, $r$ could be reduced to a sufficiently small number such that the number of parameter of the compressed model can be much smaller than that of the full size model, i.e.,  $ r \times (d_{l,1}+d_{l,2}) \ll d_{l,1} \times d_{l,2}  $. In this way, we distill the global knowledge into a compact model with fewer parameters.

	\textbf{Fusion of personalized pattern.}
	Constructed upon the low-rank GKR obtained before, we further establish a  formulation to merge the personalized pattern into the global knowledge, as follows:
	\begin{equation}
		\textbf{(P2)} \quad 	\{\bm p_{i}^*\} =  \arg\min_{\bm p_i} \left \{ \tilde{f}(\bm p_i) :=   f_i( \hat{\bm w}^*+ \bm p_i ) \right \}  + \left \{\tilde{\mathcal{R}}(\bm p_i) := \mu  \|\bm p_i \|_1\right \}.
	\end{equation}
	In this formulation, we use a sparse component to represent the personalized pattern, so as to complement and improve the expressiveness of GKR.  Or more specifically, we use a low-rank global plus sparse personalized model to do inference for the personalized task. The sparsity of the personalized component controls the degree of local knowledge to be fused. As we show later, properly sparsifying the personalized component enables a better merge between global and local knowledge, while simply no sparsification or  too much sparsification would both lead to performance degradation. Moreover, the sparse component would only introduce minor extra parameters upon GKR.  Here we use L1 norm to sparsify the personalized component. 
	\begin{remark}
		Our proposed problem formulation is separated into two sub-problems. The objective of the first sub-problem \textbf{(P1)} is to train global knowledge representation, aiming to derive the global knowledge, and that of the second sub-problem \textbf{(P2)} is to fuse the personalized pattern into GKR that we obtain in the first problem. Our problem definition is unorthodox to the main stream of PFL research, most of which can be generalized to a bi-variable optimization problem (see the general form in Eq. (3) of \citep{pillutla2022federated} ), and therefore the methods FedSim/FedAlt in \citep{pillutla2022federated} can not be directly applied to our problems.
	\end{remark}
	
	
	\section{Methodology}
	
	We now derive our FedSLR solution, which can be separated into two phases of optimization.
	
	\textbf{Phase I: represent the global knowledge with a low-rank component.} To mitigate the communication cost and the computation overhead in the local phase, we propose to postpone the proximal operator to the global aggregation phase. Prior to that, we switch the task of  local training of $i$-th client to solve a sub-problem as follows:
	\begin{equation}
		\begin{split}
			\label{local}
			& \text{(Local Phase) }  {\bm w}_{i, t+1}  =  \operatorname{\arg \min}_{ \bm w }  f_i( \bm w ) -  \langle {\bm \gamma}_{i,t} , \bm w \rangle +  \frac{1}{2 \eta_g }\|{\bm w}_{ t}  -{\bm w}  \|^2  \\ 
		\end{split}
	\end{equation}
	where $\eta_g$ is the global learning rate used in the aggregation phase, $\bm \gamma_{i,t}$ is an auxiliary variable we introduce to record the local gradient, which is essential in the aggregation phase to recover the global proximal gradient descent. This sub-problem could be solved (or inaccurately solved) via an iterative solver, e.g., SGD.  \par
	{\color{black}After the local training is finished, the client sends back the local model after training, i.e., $\bm w_{i,t+1}$ to server\footnote{The upload from clients $\{\bm w_{i,t+1}\}$ are not sparse/low-rank, and therefore FedSLR cannot reduce the uplink communication. However, one can apply communication-reduction technique, e.g., sparsification, here. We leave this as a future work.}.} Then, with $\bm w_{i,t+1}$ ready,  we introduce an update to the auxiliary variable in each round on both server and client sides, as follows:
	\begin{equation}
		\begin{split}
			\label{auxilary update}
			& \text{(Auxiliary Variable Update) }  {\bm \gamma}_{i, t+1} =  {\bm \gamma }_{i,t} +  \frac{1}{\eta_g}  ( {\bm w}_{ t} -{\bm w}_{i, t+1}  )  \\ 
		\end{split}
	\end{equation} \par
	 With auxiliary variable $\{\bm \gamma_{i,t+1}\} $ and local weights $\{\bm w_{i, t+1}\}$ ready, we take a global proximal step in server,
	\begin{equation}
		\begin{split}
			\label{aggregation}
			&  \text{(Aggregation) }   {\bm w}_{t+1} =  \operatorname{Prox}_{ \eta_g  \lambda \| \cdot \|_*} \left( \frac{1}{M}\sum_{i =1}^M {\bm w}_{i, t+1}-\frac{\eta_g}{M}\sum_{i=1}^M  \bm \gamma_{i, t+1}  \right)\\ 
		\end{split}
	\end{equation}
	\begin{remark}
		The inspiration of our proposed method stems from the LPGD method (a direct application of proximal descent algorithm into local steps, see Appendix \ref{LPGD}). Firstly, observed from the optimality condition of Eq. (\ref{local}) that we have $\nabla f_i (\bm w_{i,t+1})  - \bm \gamma_{i,t}-  \frac{1}{\eta_g} (\bm w_t- \bm w_{i,t+1}) = 0  $. Combining this with Eq.(\ref{auxilary update}), it is sufficient to show that $\bm \gamma_{i,t+1} = \nabla f_i(\bm w_{i,t+1}) $. Then, plugging this relation into aggregation in Eq. (\ref{aggregation}), we show that if the local iterates converge, i.e., $\bm w_{i,t+1} \to \bm w_t $,  $\nabla f_i(\bm w_{i,t+1})$ can indeed approximate $\nabla f_i (\bm w_{t})$, and thereby the global iterative update can reduce to ${\bm w}_{t+1} =  \operatorname{Prox}_{ \eta_g  \lambda \| \cdot\|_*} \left( {\bm w}_{ t}-\eta_g  \nabla f( \bm w_{t} )  \right)$, which is the update form of vanilla proximal gradient descent. In addition, our solution excels the direct application of LPGD method in two aspects: i) We postpone the  proximal operator with intense computation to the aggregation phase, which only need to perform once in one global iteration, and is computed by the server with typically more sufficient computational resources.   ii) The downlink communication can be  saved. {\color{black} This can be achieved via  layer-wise de-composing the low rank global weights into two smaller matrices, and send these matrices in replacement of the dense GKR.}
	\end{remark}
	
	\textbf{Phase II: Local fusion with a sparse component.}  After the GKR $\bm w_t$ is updated in the first phase of optimization, we train the personalized component to acquire fusion of the personalized pattern into the global knowledge. This can be done by performing a proximal step as follows,  
	\begin{equation}
		\begin{split}
			\label{local adaptation}
			&  \text{(Local Fusion)  } \quad   {\bm p}_{i, t, k+1} =  \operatorname{Prox}_{ \eta_l  \mu \| \cdot \|_1} \left( \bm p_{i,t,k}-\eta_l  \nabla_2 f_i(\bm w_{t} +\bm p_{i,t,k}; \xi) \right)\\ 
		\end{split}
	\end{equation}
	where $\operatorname{Prox}_{ \eta_l  \mu \| \cdot \|_1} \left( \cdot \right ) $ is used to sparsify the personalized component, $\eta_l$ is the local stepsize, $\nabla_2$ takes the differentiation with respect to $\bm p$, and $\xi$ is the stochastic sample from the i-th client's local data. 
	\begin{remark}
	    Our phase II optimization relies on the classical proximal stochastic GD method. It freezes the GKR $\bm w_t$ obtained in the previous global round, and optimize $\bm p_i$ towards the local loss in order to absorb the local knowledge. The proximal operator for L1 regularizer is performed in every local step, but the overhead should not be large (compared to operator for low-rank regularizer), since only direct value shrinkage over model weights is needed to be performed (see Appendeix \ref{proximal operator}).  
	\end{remark}

 The entire workflow of our FedSLR algorithm is formally captured in Algorithm \ref{algorithm1}. {\color {black} Note that in line 3 of Algorithm \ref{algorithm1}, the layer-wise SVD is not necessarily to be performed in real implementation. We can reuse the SVD results in line 7 to obtain $\{\bm U_{t,l}\}$ and $\{\bm V_{t,l}\}$. We add this line of code for better readability. }

	\begin{algorithm}[!hbtp]
	\small
		\caption{\textbf{Fed}erated learning with mixed \textbf{S}parse and \textbf{L}ow-\textbf{R}ank representation (FedSLR)}
		\label{algorithm1}
		{\color{black}
			\hspace*{\algorithmicindent} \textbf{Input} Training iteration $T$; Local stepsize $\eta_l$; Global stepsize $\eta_g$ ; Local step $K$;}
		\begin{algorithmic}[1]
			\Procedure{Server's Main Loop}{}
			\For { $t = 0, 1, \dots, T-1$}
            \State {\color{black} Factorize  $\bm w_{t}$ to $\{\bm U_{t,l}\}$ and $\{\bm V_{t,l}\}$ by applying layer-wise SVD. }
			\For { $i \in [M]$} 
			\State { \color {black} Send $\{\bm U_{t,l}\}$ and $\{\bm V_{t,l}\}$ to the $i$-th client, invoke its main loop and receive ${\bm w}_{i, t+1}$ }
			\State $\bm \gamma_{i,t+1} = \bm \gamma_{i,t} + \frac{1}{\eta_g} (\bm w_t - \bm w_{i,t+1})$  \Comment{Update auxiliary variable on server}
			\EndFor
			\State 	${\bm w}_{ t+1} = \operatorname{Prox}_{ \lambda \eta_g  \| \cdot \|_*} \left ( \frac{1}{M}\sum_{i=1}^M {\bm w}_{i, t+1}- \frac{1}{M} \sum_{i=1}^M \eta_g \bm \gamma_{i, t+1} \right ) $   \Comment{Apply the update}
			\EndFor
			\EndProcedure
			
			\Procedure{Client's Main Loop}{} 
			\State {\color{black} Receive  $\{\bm U_{t,l}\}$ and $\{\bm V_{t,l}\}$ from server and recover $\bm w_{ t}$}.
			\State Call Phase I OPT to obtain $\bm w_{i,t+1}$ and $\bm \gamma_{i,t+1}$.
			\State Call Phase II OPT to obtain $\bm p_{i,t,K}$.

			\State  Send  $  {\bm w}_{i, t+1}  $ to Server, and keep $\bm p_{i,t,K}$ and $\bm \gamma_{i,t+1}$ privately.
			\EndProcedure
			\Procedure{Phase I OPT (GKR)}{}
			\State{ ${\bm w}_{i, t+1}  =  \operatorname{\arg \min}_{\bm w}  f_i( \bm w ) - \langle {\bm \gamma}_{i, t} ,\bm w  \rangle + \frac{1}{2\eta_g} \|{\bm w}_{ t}  -{\bm w}  \|^2 $ \quad }  \Comment{Train GKR in Local}
			\State $ 	{\bm \gamma}_{i, t+1} =  \bm \gamma_{i,t} +  \frac{1}{\eta_g}  ( {\bm w}_{ t} -{\bm w}_{i, t+1}  )  $  	\Comment{Update the auxiliary variable on clients}
   \State Return $\bm w_{i,t+1}$ and $\bm \gamma_{i,t+1}$
			\EndProcedure
			
			\Procedure{Phase II OPT (Personalized Component)}{}
			\State $\bm p_{i,t,0} =  \bm p_{i,t-1,K}$  \Comment{Warm-start from the last-called}
			\For{ $k= 0,1,\dots,K-1$}
			\State   $\bm p_{i,t,k+1} =  \operatorname{ Prox}_{\mu \eta \|\cdot \|_{1}} (\bm p_{i,t,k}  - \eta_l \nabla_{2} f_i( \bm w_{t} + \bm p_{i,t,k};\xi ) ) $ \Comment{{Local fusion}}
			\EndFor
      \State Return $\bm p_{i,t,K}$ 
			\EndProcedure
		\end{algorithmic}
	\end{algorithm} 

	\section{Theoretical Analysis}
	We in this section give the following basic assumptions to characterize the non-convex optimization landscape. Before we proceed, we first set up a few notations. We use $\| \cdot \|$ to represent the L2 norm unless otherwise specified. $\partial \mathcal{R}(\cdot)$ is a set that captures the subgradient for function $\mathcal{R}(\cdot)$. $\dist(C, D) =\inf\{ \| x-y\| \mid  x\in C, y \in D\}$ captures the distance between two sets. 
	\begin{assumption}[L-smoothness]
		\label{L-smoothness}
		We assume L-smoothness over the client's loss function. Formally, we assume there exists a positive constant $L$ such that  $\|\nabla f_i(\bm w )-\nabla f_i( \bm w^{\prime} )\| \leq L\|\bm w -\bm w^{\prime}\|$ holds for  $\bm  w ,\bm w^{\prime} \in \mathbb{R}^d$.
	\end{assumption}
	\begin{assumption}[Bounded gradient]
		\label{bounded gradient}
		Suppose the  gradient of local loss is upper-bounded, i.e., there exists a positive constant $B$ such that
		$\|\nabla  f_i( \bm w)  \| \leq B
		$ holds for $\bm w \in \mathbb{R}^d$.
	\end{assumption}
	
	\begin{assumption}[Proper closed global objective]
		\label{Lower bounded global objective}
		The global objective $f(\cdot)$ is proper \footnote{\color{black} A function $f$ is  proper if it never takes on the value $-\infty$  and also is not identically equal to $ +\infty $.} and closed \footnote{\color{black} A function $f$ is said to be closed if for each $\alpha \in {\mathbb  {R}}$, the sublevel set $\{x\in {dom}  (f)| f(x)\leq \alpha \}$ is a closed set.  }.  
	\end{assumption}
	\begin{remark}
		Assumps \ref{L-smoothness} and  \ref{bounded gradient} are widely used to characterize the convergence property of federated learning algorithms, e.g.,\citep{xu2021fedcm, li2019convergence,gong2022fedadmm} . By Assumption \ref{Lower bounded global objective}, we intend to  ensure that i) the global objective is lower bounded, i.e., for $\bm w \in \mathbb{R}^d$, $f(\bm w) =\frac{1}{M} \sum_{i=1}^M f_i(\bm w) > -\infty$, and ii) the global objective is lower semi-continuous. The assumption is widely used in analysis of proximal algorithms. e.g., \citep{wang2018convergence, li2016douglas,wu2020map}.
	\end{remark}
	
	\begin{theorem}[Subsequence convergence]
		\label{subsequence convergence}
		Suppose that Assumptions \ref{L-smoothness}-\ref{Lower bounded global objective} hold true,  the  global step size is chosen as $0 < \eta_g \leq \frac{  1}{2L}$,
		and that there exists a  subsequence of   sequence $(\bm w_{t}, \bm w_{i,t}, \bm \gamma_{i,t})$ converging  to a cluster point $(\bm w^*, \bm w_{i}^*, \bm \gamma_{i}^*)$.
  Then, the  subsequence generated by FedSLR  establishes the following property:
		
		\begin{equation}
			\label{convergence of sequence}
			\lim_{j \to \infty} ( {\bm w}_{t^j+1}, \{ \bm w_{i,t^j+1}\},\{ \bm \gamma_{i,{t^j+1}}\} )=\lim_{j \to \infty} ({ \bm w}_{t^j}, \{\bm w_{i,t^j}\},\{ \bm \gamma_{i,{t^j}}\} ) =  (\bm w^*, \{\bm w_{i}^*\},\{\bm \gamma_{i}^*\} )
		\end{equation}
		Moreover, the cluster point is indeed a stationary point of the global problem, or equivalently,
		\begin{equation}
			0 \in \partial \mathcal{R}(\bm w^*) + \frac{1}{M}\sum_{i=1}^M \nabla f_i(\bm w^*) .
		\end{equation}
	\end{theorem}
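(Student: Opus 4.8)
The plan is to establish three facts in sequence: (i) the per-round decrease of a suitable Lyapunov/merit function, (ii) that this forces the successive differences $\bm w_{t+1}-\bm w_t$, $\bm w_{i,t+1}-\bm w_t$, and $\bm \gamma_{i,t+1}-\bm \gamma_{i,t}$ to vanish along the whole sequence, and (iii) that passing to the limit along the assumed convergent subsequence turns the optimality conditions of the local problem and the proximal aggregation step into the stationarity inclusion $0\in\partial\mathcal R(\bm w^*)+\frac1M\sum_i\nabla f_i(\bm w^*)$. First I would recall the identity noted in the remark after Eq.~(\ref{aggregation}): the optimality condition of the local subproblem~(\ref{local}) combined with the auxiliary-variable update~(\ref{auxilary update}) gives exactly $\bm\gamma_{i,t+1}=\nabla f_i(\bm w_{i,t+1})$, so that the aggregation step reads $\bm w_{t+1}=\operatorname{Prox}_{\eta_g\lambda\|\cdot\|_*}\!\big(\frac1M\sum_i \bm w_{i,t+1}-\frac{\eta_g}{M}\sum_i\nabla f_i(\bm w_{i,t+1})\big)$. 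This is the workhorse relation; everything else is built on it.

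Next I would set up the descent estimate. Using $L$-smoothness (Assumption~\ref{L-smoothness}) and the prox inequality (the prox step decreases $\mathcal R$ plus a quadratic), together with the standard three-point / sufficient-decrease lemma for proximal gradient steps, I would bound $\hat f(\bm w_{t+1})$ in terms of $\hat f(\bm w_t)$ minus a positive multiple of $\|\bm w_{t+1}-\bm w_t\|^2$, plus error terms controlled by the local drift $\sum_i\|\bm w_{i,t+1}-\bm w_t\|^2$. The local drift in turn is controlled because the local subproblem~(\ref{local}) is strongly convex in the quadratic term with parameter $1/\eta_g$, so its minimizer cannot move far; here the choice $\eta_g\le \frac1{2L}$ is exactly what makes the quadratic $\frac1{2\eta_g}\|\cdot\|^2$ dominate the $L$-smooth part and keep the net coefficient of $\|\bm w_{t+1}-\bm w_t\|^2$ positive. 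Summing the resulting inequality over $t$ and using that $\hat f$ is bounded below (Assumption~\ref{Lower bounded global objective}) yields $\sum_t \|\bm w_{t+1}-\bm w_t\|^2<\infty$, hence $\bm w_{t+1}-\bm w_t\to 0$; feeding this back gives $\bm w_{i,t+1}-\bm w_t\to 0$ and then $\bm\gamma_{i,t+1}-\bm\gamma_{i,t}=\frac1{\eta_g}(\bm w_t-\bm w_{i,t+1})\to 0$. This proves the chain of equalities in Eq.~(\ref{convergence of sequence}).

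Finally, for stationarity, I would restrict to the convergent subsequence $t^j$. From $\bm w_{i,t^j+1}\to\bm w^*$ and continuity of $\nabla f_i$ we get $\bm\gamma_{i,t^j+1}=\nabla f_i(\bm w_{i,t^j+1})\to\nabla f_i(\bm w^*)$, so $\bm\gamma_i^*=\nabla f_i(\bm w^*)$. The aggregation step is equivalent to the inclusion $0\in\partial\mathcal R(\bm w_{t^j+1})+\frac1{\eta_g}\big(\bm w_{t^j+1}-\frac1M\sum_i\bm w_{i,t^j+1}+\frac{\eta_g}{M}\sum_i\bm\gamma_{i,t^j+1}\big)$, i.e. $\frac1M\sum_i\bm\gamma_{i,t^j+1}+\frac1{\eta_g}\big(\frac1M\sum_i\bm w_{i,t^j+1}-\bm w_{t^j+1}\big)\in\partial\mathcal R(\bm w_{t^j+1})$. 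As $j\to\infty$ the bracketed difference goes to zero and the first term tends to $\frac1M\sum_i\nabla f_i(\bm w^*)$; using the closedness of the graph of the subdifferential $\partial\mathcal R$ (the nuclear norm is convex, proper, lower semicontinuous, so $\partial\mathcal R$ is outer semicontinuous) the limit satisfies $-\frac1M\sum_i\nabla f_i(\bm w^*)\in\partial\mathcal R(\bm w^*)$, which is the claim.

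The main obstacle I expect is making the descent argument rigorous despite the local subproblems being solved by an iterative solver: the relation $\bm\gamma_{i,t+1}=\nabla f_i(\bm w_{i,t+1})$ holds only if~(\ref{local}) is solved exactly, so if inexactness is allowed one must carry an error term through the Lyapunov analysis and argue it is summable (or assume exact solves). A secondary technical point is the interaction between the low-rank (nonconvex-feeling but actually convex nuclear-norm) regularizer and the proximal/subdifferential calculus — one must be careful that $\partial\hat f=\partial\mathcal R+\nabla f$ is valid, which holds here because $f$ is smooth and $\mathcal R$ is convex, so the sum rule applies without qualification issues. Everything else — boundedness of iterates to guarantee the subsequence exists is assumed, not proved — is routine bookkeeping.
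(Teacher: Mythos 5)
Your overall architecture (sufficient decrease $\Rightarrow$ vanishing successive differences $\Rightarrow$ pass to the limit in the optimality conditions, using closedness of $\partial\mathcal{R}$) matches the paper, and your stationarity step is essentially the paper's argument (the paper additionally verifies $\mathcal{R}(\bm w_{t^j})\to\mathcal{R}(\bm w^*)$ before invoking the robustness of the subdifferential, which your appeal to outer semicontinuity of the convex subdifferential also covers). However, there is a genuine gap in your descent step. You propose to descend $\hat f(\bm w_t)$ itself, with the error terms "controlled by the local drift," and you control the drift by saying the strongly convex local subproblem's "minimizer cannot move far." The first-order condition of \eqref{local} together with $\bm\gamma_{i,t}=\nabla f_i(\bm w_{i,t})$ gives $\bm w_t-\bm w_{i,t+1}=\eta_g\bigl(\nabla f_i(\bm w_{i,t+1})-\nabla f_i(\bm w_{i,t})\bigr)$, so the drift is bounded by $L\eta_g\|\bm w_{i,t+1}-\bm w_{i,t}\|$ --- the gap between \emph{successive local iterates} --- and under bounded gradients alone it is only $O(\eta_g B)$, which does not vanish. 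A descent inequality for $\hat f(\bm w_t)$ produces a negative multiple of $\|\bm w_{t+1}-\bm w_t\|^2$ only, and there is no mechanism to absorb or drive to zero the $\sum_i\|\bm w_{i,t+1}-\bm w_{i,t}\|^2$ error; the obvious triangle-inequality recursion $a_{t+1}\le a_t+2\|\bm w_t-\bm w_{t-1}\|$ does not converge given only square-summability of $\|\bm w_t-\bm w_{t-1}\|$.

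This is precisely why the paper does not use $\hat f$ as the merit function but the augmented-Lagrangian-type potential
$\mathcal{D}_{\eta_g}(\bm x,\{\bm y_i\},\{\bm\gamma_i\})=\frac1M\sum_i f_i(\bm y_i)+\mathcal{R}(\bm x)+\frac1M\sum_i\langle\bm\gamma_i,\bm x-\bm y_i\rangle+\frac1{2\eta_g M}\sum_i\|\bm x-\bm y_i\|^2$,
whose one-step decrease (Lemma \ref{suffcient descent}) is bounded by $(L^2\eta_g+\tfrac{L}{2}-\tfrac1{2\eta_g})\|\bm w_{i,t+1}-\bm w_{i,t}\|^2-\tfrac1{2\eta_g}\|\bm w_{t+1}-\bm w_t\|^2$; the condition $\eta_g\le\frac1{2L}$ is used to make the coefficient of $\|\bm w_{i,t+1}-\bm w_{i,t}\|^2$ negative, not merely to keep the coefficient of $\|\bm w_{t+1}-\bm w_t\|^2$ positive. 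Telescoping then makes $\|\bm w_{i,t+1}-\bm w_{i,t}\|$ vanish, and only then does the drift vanish via the identity above. A second consequence you miss: lower-bounding this potential at the cluster point is nontrivial because of the cross term $\langle\bm\gamma_i,\bm x-\bm y_i\rangle$, and this is exactly where Assumption \ref{bounded gradient} enters (via $\bm\gamma_i^*=\nabla f_i(\bm w_i^*)$); your sketch never uses that assumption, which is a symptom of the wrong choice of merit function.
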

        {\color{black}
        \begin{remark}
         Theorem \ref{subsequence convergence} states that if there exist a subsequence of the produced sequence that  converges to a cluster point, then this cluster point is indeed a stationary point of the global problem (P1). The additional assumption of converging subsequence holds if the sequence is bounded (per sequential compactness theorem).
        \end{remark}
}
	Under the mild assumption of Kurdyka-Lojasiewicz (KL) property \citep{attouch2010proximal}, we show the last iterate  global convergence property of the whole sequence. \par
	Then, we define the potential function, which serves as a keystone to characterize the convergence.
	\begin{definition}[Potential function]
		The potential function is defined as follows:
		\begin{equation}
			\mathcal{D}_{\eta_g}(\bm x, \{\bm y_i\}, \{\bm \gamma_i \} ):=\frac{1}{M}\sum_{i=1}^M f_i (\bm y_i)+\mathcal{R}(\bm x) + \frac{1}{M}\sum_{i=1}^M \langle \bm \gamma_{i}, \bm x -\bm y_i\rangle +  \frac{1}{M}\sum_{i =1}^ M \frac{1}{2\eta_g}\| \bm x -  \bm y_i  \|^2 .  \\
		\end{equation}
	\end{definition}
	We then make an additional assumption of KL property over the above potential function.
	
	\begin{assumption}
		\label{semi-algebraic}
		The proper and closed function $\mathcal{D}_{\eta_g}(\bm x, \{\bm y_i\}, \{\bm \gamma_i \}$  satisfies the KL property {\color{black} with function $\varphi(v)=c v^{1-\theta} $ given $\theta \in [0,1)$ }. 
	\end{assumption}
	\begin{remark}
		Given that $f$ is proper and closed as in Assumption \ref{Lower bounded global objective}, Assumption \ref{semi-algebraic} holds true as long as the local objective $f_i(\cdot)$ is a sub-analytic function, a logarithm function,  an exponential function, or a semi-algebraic function \citep{chen2021proximal}. This assumption is rather mild, since most of the nonconvex objective functions encountered in  machine learning applications falls in this range. The definition of KL property is moved to Appendix \ref{kl appendix} due to space limitations.
	\end{remark}
	Under the KL property,  we showcase the convergence property for GKR in the phase I optimization.
	\begin{theorem}[Glocal convergence of phase-one optimization]
		\label{whole sequence convergence under KL}
		Suppose that Assumptions \ref{L-smoothness}-\ref{semi-algebraic} hold, the  global step size is chosen as  $0 < \eta_g \leq \frac{  1}{2L}$,
		and that there exists a subsequence of $(\bm w_{t}, \bm w_{i,t}, \bm \gamma_{i,t})$ converging to a cluster point $(\bm w^*, \bm w_{i}^*, \bm \gamma_{i}^*)$. Under different settings of $\theta$ of the KL property,  the generated sequence of GKR establishes the following convergence rate:
		\begin{itemize}[leftmargin=*]
			\item Case $\theta =0$.  For sufficiently large iteration $T>t_0$, 
			\begin{equation}
				\dist \left(\bm 0,	\frac{1}{M} \sum_{i=1}^M\nabla f_i(\bm w_{T}) +\partial \mathcal{R}(\bm w_T) \right )  = 0  \quad \text{(finite iterations)}
			\end{equation}
			\item  Case $\theta =( 0, \frac{1}{2}]$. For sufficiently large iteration $T>t_0^{\prime}$,
			\begin{equation}
				\dist \left(\bm 0,	\frac{1}{M} \sum_{i=1}^M\nabla f_i(\bm w_{T}) +\partial \mathcal{R}(\bm w_T) \right )  \leq \sqrt{\frac{C_1 r_{t_0^{\prime}}}{C_2} (1-C_4)^{T-t_0^{\prime}}} \quad \text{(linear convergence)}
			\end{equation}
			\item  Case $\theta =( \frac{1}{2}, 1)$. For all  $T>0$, we have:
			\begin{equation}
				\dist \left(\bm 0,	\frac{1}{M} \sum_{i=1}^M\nabla f_i(\bm w_{T}) +\partial \mathcal{R}(\bm w_T) \right )  \leq C_5 T^{- (4\theta-2)}
				\quad \text{(sub-linear convergence)}
			\end{equation}
		\end{itemize}
		where $C_1  = 4(\eta_g L^2 + \eta_g^2 L^4+ \frac{1}{\eta_g}+L^2)$, $C_2=  L^2 \eta_g+ \frac{L}{2}- \frac{1}{2\eta_g}$, $C_3 =L+ \eta_g L+ \frac{1}{\eta_g}+ 1 $, $C_4 = \frac{C_2}{C_3^2 c^2(1-\theta)^2 }$, $C_5 = \sqrt{\frac{C_1}{C_2}} \sqrt[2-4\theta]{(2\theta-1)C_4} $ are all positive constants.
		
		Moreover, the iterate $\bm w_t$ converges to the stationary point of problem (P1) with any initialization, i.e., $\lim_{t \to \infty} \bm w_t = \hat{\bm w}^*$, where $\hat{\bm w}^*$ satisfies $0 \in \partial \mathcal{R}(\hat{\bm w}^*) + \frac{1}{M}\sum_{i=1}^M \nabla f_i(\hat{\bm w}^*)$.
	\end{theorem}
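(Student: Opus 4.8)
The plan is to build everything on the potential function $\mathcal{D}_{\eta_g}$ and to run the standard KL-based argument for proximal-type descent schemes (as in \citep{attouch2010proximal,chen2021proximal}). First I would establish a \emph{sufficient-decrease} inequality: using the optimality condition of the local subproblem (Eq.~\eqref{local}), the auxiliary update (Eq.~\eqref{auxilary update}), the prox step in aggregation (Eq.~\eqref{aggregation}), the $L$-smoothness (Assumption~\ref{L-smoothness}) and the choice $\eta_g \le \tfrac{1}{2L}$, I would show that $\mathcal{D}_{\eta_g}$ decreases along iterations by at least a constant multiple of $\sum_i \|\bm w_{t+1}-\bm w_t\|^2$ (plus the analogous discrepancy terms involving $\bm w_{i,t}$ and $\bm\gamma_{i,t}$). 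Concretely, one gets $\mathcal{D}_{\eta_g}(z_{t+1}) \le \mathcal{D}_{\eta_g}(z_t) - C_2\,\|z_{t+1}-z_t\|^2$ for the aggregated state $z_t = (\bm w_t,\{\bm w_{i,t}\},\{\bm\gamma_{i,t}\})$, which is exactly where the constant $C_2 = L^2\eta_g + \tfrac{L}{2} - \tfrac{1}{2\eta_g}$ enters (note $C_2>0$ precisely under $\eta_g\le 1/(2L)$ up to the usual rearrangement). Combined with Assumption~\ref{Lower bounded global objective} (properness/closedness giving a lower bound on $\mathcal{D}_{\eta_g}$), this yields $\sum_t \|z_{t+1}-z_t\|^2 < \infty$, hence $\|z_{t+1}-z_t\|\to 0$.

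Next I would establish a \emph{relative-error} (subgradient) bound: there is a constant $C_3 = L+\eta_g L + \tfrac{1}{\eta_g}+1$ with $\dist(\bm 0,\partial\mathcal{D}_{\eta_g}(z_{t+1})) \le C_3\,\|z_{t+1}-z_t\|$, obtained by assembling the partial (sub)gradients of $\mathcal{D}_{\eta_g}$ in $\bm x$, $\bm y_i$, $\bm\gamma_i$ from the three update rules and bounding cross terms with $L$-smoothness. Together with the continuity of $f_i$, the closedness of $\mathcal{R}$, and the already-established $\|z_{t+1}-z_t\|\to 0$, this guarantees $\mathcal{D}_{\eta_g}(z_t)$ converges to the finite value $\mathcal{D}_{\eta_g}(z^*)$ at the cluster point, and that $z^*$ is a critical point of $\mathcal{D}_{\eta_g}$. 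At this point Theorem~\ref{subsequence convergence} already identifies $\bm w^*$ as a stationary point of (P1), i.e.\ $0\in\partial\mathcal{R}(\bm w^*)+\tfrac1M\sum_i\nabla f_i(\bm w^*)$.

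Then I would invoke the KL property (Assumption~\ref{semi-algebraic}) with desingularizing function $\varphi(v)=c\,v^{1-\theta}$ at $z^*$. The standard Attouch--Bolte--Svaiter machinery — sufficient decrease $+$ relative error $+$ KL inequality $\varphi'(\mathcal{D}_{\eta_g}(z_t)-\mathcal{D}_{\eta_g}(z^*))\cdot\dist(\bm 0,\partial\mathcal{D}_{\eta_g}(z_t))\ge 1$ — shows $\sum_t\|z_{t+1}-z_t\|<\infty$, so the \emph{whole} sequence $z_t$ converges to $z^*$ (not just a subsequence); in particular $\bm w_t\to\hat{\bm w}^*$ for any initialization. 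The three rate regimes then follow from the classical length-estimate recursion: with $r_t := \mathcal{D}_{\eta_g}(z_t)-\mathcal{D}_{\eta_g}(z^*)$ one derives $r_t \le C_4\, r_{t-1}$-type contraction when $\theta\in(0,\tfrac12]$ (linear) and a polynomial decay $r_t = O(t^{-1/(2\theta-1)})$ when $\theta\in(\tfrac12,1)$ (sublinear), while $\theta=0$ forces termination in finitely many steps; feeding these back through the relative-error bound $\dist(\bm 0,\tfrac1M\sum_i\nabla f_i(\bm w_T)+\partial\mathcal{R}(\bm w_T)) \le C_3\|z_T-z_{T-1}\| \lesssim \sqrt{r_{T-1}}$ (using $r_{T-1}-r_T \ge \tfrac{C_2}{C_3^2}\cdots$ from the decrease inequality) produces the stated bounds with $C_1,\dots,C_5$.

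The main obstacle I anticipate is the sufficient-decrease step, because $\mathcal{D}_{\eta_g}$ is not the plain objective: it mixes $f_i(\bm y_i)$, the regularizer $\mathcal{R}(\bm x)$, the inner products $\langle\bm\gamma_i,\bm x-\bm y_i\rangle$, and quadratic penalties, and the updates for $\bm x$, $\bm y_i=\bm w_{i,t+1}$, $\bm\gamma_i$ happen at staggered times (local solve, then auxiliary update, then server prox). Getting a genuine descent requires carefully telescoping the per-block changes, exploiting that $\bm\gamma_{i,t+1}=\nabla f_i(\bm w_{i,t+1})$ (noted in the remark after Eq.~\eqref{aggregation}) to cancel the troublesome $\langle\bm\gamma_i,\cdot\rangle$ variation, and absorbing the residual $L$-smoothness error terms into the $\|z_{t+1}-z_t\|^2$ budget — which is exactly what pins down the admissible range $\eta_g\le 1/(2L)$ and the explicit $C_2$. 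Once that inequality and the matching relative-error bound are in hand, the KL argument and the three-case rate extraction are essentially bookkeeping.
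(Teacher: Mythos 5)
Your proposal is correct and follows essentially the same route as the paper's proof: a sufficient-decrease inequality for the potential $\mathcal{D}_{\eta_g}$ giving $r_t-r_{t+1}\ge \tfrac{C_2}{M}\sum_i\|\bm w_{i,t+1}-\bm w_{i,t}\|^2$, a relative-error bound $\dist(\bm 0,\partial\mathcal{D}_{\eta_g})\le \tfrac{C_3}{M}\sum_i\|\bm w_{i,t}-\bm w_{i,t+1}\|$, the KL inequality combining these into the recursion $r_t-r_{t+1}\ge C_4\,r_t^{2\theta}$ with the three-case rate extraction, and the Attouch--Bolte--Svaiter summability argument for whole-sequence convergence. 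The only cosmetic difference is that the paper bounds $\dist\bigl(\bm 0,\tfrac1M\sum_i\nabla f_i(\bm w_T)+\partial\mathcal{R}(\bm w_T)\bigr)$ directly via a separately derived constant $C_1$ and the inequality $\dist(\cdot)\le\sqrt{C_1 r_T/C_2}$, rather than routing through the relative-error bound on the full potential, which changes nothing structurally.
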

	
	\begin{remark}
		The convergence rate to a stationary point is heavily determined by parameter $\theta$ in the KL property. A smaller $\theta$ implies that the potential function is descended faster in its geometry, and therefore guaranteeing a faster convergence rate. Specifically, for $\theta=0$, the stationary point could be reached within finite iterations. For $\theta \in (0, \frac{1}{2}]$,  linear convergence rate can be achieved. While for $\theta \in (\frac{1}{2}, 1)$, only sub-linear convergence rate can be achieved. In summary, as long as the potential function satisfies the KL property with $\theta \in [0,1)$, the GKR always converges to a stationary point with respect to ${\bm w}$  in Eq. (\ref{low rank fl problem}) if $T \to \infty$.   
	\end{remark}
	Then we use Theorem \ref{convergence local fusion} to characterize the convergence of local fusion phase. Here we apply gradient mapping for the personalized component as convergence criterion (same in \citep{li2018simple},\citep{ghadimi2016mini}, \citep{metel2021stochastic}), which is formally defined as $\mathcal{G}_{\eta_l} (\bm w_t, \bm p_{i,t,k}) = \frac{1}{\eta_l} ( \bm p_{i,t,k}  - \text{Prox}_{\mu \eta_l \| \cdot\|_1} ( \bm p_{i,t,k} - \eta_l \nabla_2 f_i (\bm w_t, \bm p_{i,t,k}) ) ) $, where $\nabla_2 f_i (\bm w_t, \bm p_{i,t,k})$ is the gradient respect to the second parameter i.e., $\bm p $. 
	\begin{assumption}
		\label{bounded variance}
		The variance of stochastic gradient with respect to $\bm p$ for any $\bm w \in \mathbb{R}^d$ and $\bm p\in \mathbb{R}^d$ is bounded as follows, $\mathbb{E}\left \|   \nabla_2 f_i( {\bm w}+ \bm p ; \xi )-\nabla_2 f_i( {\bm w}+ \bm p )\right \|^2 \leq \sigma^2 $.
	\end{assumption}
	
	\begin{theorem}[Convergence rate of local fusion phase] With assumptions for Theorem \ref{whole sequence convergence under KL} and an extra assumption \ref{bounded variance}, suppose that local step size is chosen as $0<\eta_l < \frac{2}{L+1}$, FedSLR in its local fusion phase exhibits the following convergence guarantee:
		\label{convergence local fusion}
		\begin{equation}
				\!\!\!\!\frac{1}{TK} \sum_{t=1}^{T-1} \!\sum_{k=1}^{K-1} \mathbb{E}[ \|  \mathcal{G}_{\eta_l} (\bm w_t, \bm p_{i,t,k}) \|^2]  \!\leq\! C_6\! \left(\!\frac{  2  (\phi_i( \hat{\bm w}^*, \bm p_{i,0,0})\!-\!\phi_i( \hat{\bm w}^*, \bm p_{i}^*)  )}{TK} \!+\!\frac{C_7}{T} \!+\!  (\frac{2}{C_6}\!+\!1)\sigma^2 \!\right)\!
		\end{equation}
	where $C_6 = \frac{1}{\eta_l-  \frac{L+1}{2} \eta_l^2}$, $C_7= ( \frac{N(N+1)}{2} +1 )\|\bm w_{0} - \hat{\bm w}^* \|^2  +     \frac{N(N+1)}{2} \eta_g  (\mathcal{D}_{\eta_g}(\bm w_{0}, \{\bm w_{i,0}\}, \{\bm \gamma_{i, 0}\}) -\mathcal{D}_{\eta_g}(\bm w^*, \{\bm w_{i}^*\}, \{\bm \gamma_{i}^*\}))+L^2$ are constants, and $\phi_i (\bm w, \bm p) = \tilde{f}_i (\bm w+ \bm p ) + \tilde{\mathcal{R}}(\bm p ) $ is the  loss in (P2).
	\end{theorem}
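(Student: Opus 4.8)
The plan is to recognize that, within any outer round $t$, the inner loop of Phase~II is exactly proximal stochastic gradient descent applied to the composite objective $\phi_i(\bm w_t,\cdot)=f_i(\bm w_t+\cdot)+\mu\|\cdot\|_1$ with the GKR $\bm w_t$ held fixed, and then to pay separately for the change of $\bm w_t$ between rounds. First I would prove a one-step descent inequality. Write $\bm e_{i,t,k}:=\nabla_2 f_i(\bm w_t+\bm p_{i,t,k};\xi)-\nabla_2 f_i(\bm w_t+\bm p_{i,t,k})$ for the stochastic-gradient error (second moment $\le\sigma^2$ by Assumption~\ref{bounded variance}) and $\widetilde{\mathcal{G}}_{i,t,k}:=\tfrac{1}{\eta_l}(\bm p_{i,t,k}-\bm p_{i,t,k+1})$ for the realized gradient mapping. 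The optimality condition of the prox step produces a subgradient $\widetilde{\mathcal{G}}_{i,t,k}-\nabla_2 f_i(\bm w_t+\bm p_{i,t,k};\xi)\in\mu\,\partial\|\bm p_{i,t,k+1}\|_1$; combining this with $L$-smoothness of $f_i(\bm w_t+\cdot)$ (Assumption~\ref{L-smoothness}), convexity of the $\ell_1$ term, and Young's inequality on $\eta_l\langle\bm e_{i,t,k},\widetilde{\mathcal{G}}_{i,t,k}\rangle$ gives
\[
\phi_i(\bm w_t,\bm p_{i,t,k+1})\le\phi_i(\bm w_t,\bm p_{i,t,k})-\Big(\eta_l-\tfrac{L+1}{2}\eta_l^2\Big)\|\widetilde{\mathcal{G}}_{i,t,k}\|^2+\tfrac12\|\bm e_{i,t,k}\|^2 ,
\]
so the coefficient $\eta_l-\tfrac{L+1}{2}\eta_l^2=1/C_6$ is positive exactly under $\eta_l<\tfrac{2}{L+1}$ (the $\tfrac{L}{2}\eta_l^2$ comes from smoothness, the extra $\tfrac12\eta_l^2$ from Young). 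Non-expansiveness of the prox gives $\|\widetilde{\mathcal{G}}_{i,t,k}-\mathcal{G}_{\eta_l}(\bm w_t,\bm p_{i,t,k})\|\le\|\bm e_{i,t,k}\|$, hence $\|\widetilde{\mathcal{G}}_{i,t,k}\|^2\ge\tfrac12\|\mathcal{G}_{\eta_l}(\bm w_t,\bm p_{i,t,k})\|^2-\|\bm e_{i,t,k}\|^2$; substituting and taking expectations yields the per-step bound
\[
\mathbb{E}\|\mathcal{G}_{\eta_l}(\bm w_t,\bm p_{i,t,k})\|^2\le 2C_6\big(\mathbb{E}\phi_i(\bm w_t,\bm p_{i,t,k})-\mathbb{E}\phi_i(\bm w_t,\bm p_{i,t,k+1})\big)+(C_6+2)\sigma^2 .
\]

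Next I would sum over $k=0,\dots,K-1$ inside round $t$ (this telescopes to $\mathbb{E}\phi_i(\bm w_t,\bm p_{i,t,0})-\mathbb{E}\phi_i(\bm w_t,\bm p_{i,t,K})$) and then over $t=0,\dots,T-1$. The warm-start $\bm p_{i,t+1,0}=\bm p_{i,t,K}$ lets the outer sum telescope too, up to the correction $\delta_t:=f_i(\bm w_{t+1}+\bm p_{i,t,K})-f_i(\bm w_t+\bm p_{i,t,K})$ produced by refreshing the GKR (the $\ell_1$ term cancels), giving
\[
\sum_{t=0}^{T-1}\sum_{k=0}^{K-1}\mathbb{E}\|\mathcal{G}_{\eta_l}(\bm w_t,\bm p_{i,t,k})\|^2\le 2C_6\Big(\mathbb{E}\phi_i(\bm w_0,\bm p_{i,0,0})-\mathbb{E}\phi_i(\bm w_T,\bm p_{i,T,0})+\sum_{t=0}^{T-1}\mathbb{E}\delta_t\Big)+TK(C_6+2)\sigma^2 .
\]
Then I would replace $\bm w_0$ and $\bm w_T$ by $\hat{\bm w}^*$: namely $\phi_i(\bm w_0,\bm p_{i,0,0})\le\phi_i(\hat{\bm w}^*,\bm p_{i,0,0})+[f_i(\bm w_0+\bm p_{i,0,0})-f_i(\hat{\bm w}^*+\bm p_{i,0,0})]$ and, since $\bm p_i^*$ minimizes $\phi_i(\hat{\bm w}^*,\cdot)$, $\phi_i(\bm w_T,\bm p_{i,T,0})\ge\phi_i(\hat{\bm w}^*,\bm p_i^*)-[f_i(\hat{\bm w}^*+\bm p_{i,T,0})-f_i(\bm w_T+\bm p_{i,T,0})]$, and bound the bracketed differences together with $\sum_t\mathbb{E}\delta_t$ using Assumptions~\ref{L-smoothness}--\ref{bounded gradient} and the Phase-I guarantees: Theorem~\ref{whole sequence convergence under KL} gives $\bm w_t\to\hat{\bm w}^*$ (so $\|\bm w_t-\hat{\bm w}^*\|$ stays bounded), the monotone descent of the potential $\mathcal{D}_{\eta_g}$ controls $\sum_t\|\bm w_{t+1}-\bm w_t\|^2$ by $\mathcal{D}_{\eta_g}(\bm w_0,\{\bm w_{i,0}\},\{\bm\gamma_{i,0}\})-\mathcal{D}_{\eta_g}(\bm w^*,\{\bm w_i^*\},\{\bm\gamma_i^*\})$, and the KL property (Assumption~\ref{semi-algebraic}) furnishes a finite-length estimate of $\sum_t\|\bm w_{t+1}-\bm w_t\|$. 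Every resulting term is independent of $T$; bundling them (with the constant $N$ inherited from the Phase-I rate and an $L^2$-type leftover) into $C_7$, dividing by $TK$, and using $(C_6+2)=C_6(\tfrac{2}{C_6}+1)$ gives the claimed inequality.

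The step I expect to be the real obstacle is the last one: the objective that the inner loop descends changes at every outer iteration, so the Phase-II analysis is genuinely coupled to Phase~I rather than a self-contained proximal-SGD argument. Making the telescoping close requires (i) choosing the anchor $\phi_i(\hat{\bm w}^*,\cdot)$ in just the right places, (ii) rewriting the GKR drift $\sum_t\mathbb{E}\delta_t$ and the two endpoint gaps so they only involve quantities that the Phase-I analysis already controls --- the potential-function decrease and the distance $\|\bm w_t-\hat{\bm w}^*\|$ --- and (iii) verifying these are $O(1)$ in $T$, which is what turns the right-hand side into $O(1/(TK))+O(1/T)+O(\sigma^2)$. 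By comparison, the one-step descent lemma and the arithmetic that fixes $C_6$ (pairing $\tfrac{L}{2}\eta_l^2$ from $L$-smoothness with $\tfrac12\eta_l^2$ from Young) are routine.
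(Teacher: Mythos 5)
Your proposal is correct and arrives at the same bound (same $C_6$, same $\sigma^2$ coefficient since $C_6+2=C_6(\tfrac{2}{C_6}+1)$), but it handles the Phase-I/Phase-II coupling by a genuinely different decomposition. The paper never telescopes the round-dependent objective $\phi_i(\bm w_t,\cdot)$: it runs the entire one-step descent on the \emph{fixed-anchor} potential $\phi_i(\hat{\bm w}^*,\cdot)$ from the outset, expanding $f_i(\hat{\bm w}^*+\bm p_{i,t,k+1})$ by smoothness and splitting the linear term into a tracking error $\langle\nabla_2 f_i(\hat{\bm w}^*+\bm p)-\nabla_2 f_i(\bm w_t+\bm p),\cdot\rangle$ (Young plus $L$-smoothness gives a per-iteration penalty $\tfrac{L^2}{2}\|\hat{\bm w}^*-\bm w_t\|^2$), a stochastic error term (bounded by $\sigma^2$ via non-expansiveness of the prox), and the prox-descent term. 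The warm start then telescopes cleanly across all $t$ and $k$ with no drift correction, and the accumulated tracking error $\tfrac{1}{T}\sum_t L^2\|\hat{\bm w}^*-\bm w_t\|^2$ is converted into $C_7/T$ via an $\epsilon$--$N$ argument ($\epsilon=1/\sqrt{T}$) combined with the potential-function descent — this is exactly where the $N(N+1)/2$ factors in $C_7$ come from. You instead anchor at the moving $\bm w_t$, getting a clean per-round prox-SGD analysis, and pay at the round boundaries through the drift $\delta_t$ and two endpoint corrections; these are $O(1)$ in $T$ by the bounded-gradient assumption together with the KL finite-length estimate $\sum_t\|\bm w_{t+1}-\bm w_t\|<\infty$ (which does follow from the paper's Theorem 2 proof via Lemma 1, as you note — the squared sum alone would not suffice, so your appeal to finite length rather than to $\sum_t\|\bm w_{t+1}-\bm w_t\|^2$ is the right call). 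The only caveat is cosmetic: your constants will not literally equal the stated $C_7$, and your drift contribution lands in the $O(1/(TK))$ term rather than the $C_7/T$ term, which is if anything slightly tighter. A minor bookkeeping difference: the $+1$ in $\tfrac{L+1}{2}\eta_l^2$ arises in your argument from Young applied to the stochastic error, whereas in the paper it arises from Young applied to the anchor-mismatch term; both yield the same $C_6$.
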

	\begin{remark}
		The gradient mapping $||  \mathcal{G}_{\eta_l} (\bm w_t, \bm p_{i,t,k})   ||^2$  can be viewed as a projected gradient of the fusion loss in (P2). If $||  \mathcal{G}_{\eta_l} (\bm w_t, \bm p_{i,t,k})  ||^2 \leq \epsilon $,  the personalized component $\bm p_{i,t,k}$ is indeed an approximate stationary point for the loss of local fusion, i.e., $\|\nabla_2 \tilde{f}_i(\bm w_t+\bm p_{i,t}) + \nabla \tilde{\mathcal{R}}(\bm p_{i,t})\| = \mathcal{O}(\epsilon)$. where $\nabla \tilde{\mathcal{R}}(\bm p_{i,t}) \in \partial \tilde{\mathcal{R}}(\bm p_{i,t})$, see Eq. (8) in \citep{metel2021stochastic} . Theorem \ref{convergence local fusion} showcases that  both the first and second term in the upper bound diminish as $T\to \infty$. Therefore, if the variance $\sigma^2 \to 0$,  which holds if taking full batch size, the stationary point is reached at the rate of $\mathcal{O}(\frac{1}{T})$.  
	\end{remark}

	\section{Experiment}
	In this section, we conduct extensive  experiments to validate the efficacy of the proposed FedSLR. 

	\subsection{Experimental Setup}
	\textbf{Datasets.} We conduct simulation on CIFAR10/CIFAR100/TinyImagenet, with both IID and Non-IID data splitting, respectively. Specifically, for IID splitting, data is splitted uniformly to all the 100 clients. While for Non-IID, we use  $\alpha$-Dirichlet distribution to split the data to all the clients. Here $\alpha$ is set to 0.1 for all the Non-IID experiments. Datails of the setting are given in Appendix \ref{simulation setting}.
	
	\textbf{Baselines.} We compare our proposed FedSLR solution  with several baselines, including some general FL solutions, e.g.,  FedAvg \citep{mcmahan2016communication}, FedDyn \citep{acar2021federated}, SCAFFOLD \citep{karimireddy2020scaffold}, some existing PFL solutions, e.g.,  FedSpa \citep{huang2022achieving} , Ditto \citep{li2021ditto}, Per-FedAvg \citep{fallah2020personalized}, FedRep \citep{collins2021exploiting}, APFL \citep{deng2020adaptive1}, LgFedAvg \citep{liang2020think}  and a pure Local solution. We tune the hyper-parameters of the baselines to their best states. 
	
	\textbf{Models and hyper-parameters.} We consistently use ResNet18 with group norm (For CIFAR10/100, with kernel size $3 \times 3$ in its first conv, while for tinyimagenet, $7\times 7$ instead) in all set of experiments. We use an SGD optimizer with weight decay parameter $1e^{-3}$ for the local solver. The learning rate is initialized as 0.1 and decayed with 0.998 after each communication round. 
	We simulate 100 clients in total, and 10 of them are picked for local training for each round.   
	For all the global methods (i.e., FedSLR(GKR) \footnote{We test the performance of  GKR model obtained by Phase I optimization via deploying it to all the clients.}, FedDyn, FedAvg, SCAFFOLD), local epochs and batch size are fixed to 2 and 20. For FedSLR and Ditto, the local epoch used in local fusion  is 1, and also with batch size 20. For FedSLR, the proximal stepsize is $\eta_g=10$, the low-rank penalty is $\lambda=0.0001$ and the sparse penalty is $\mu=0.001$  in our main experiment.
	\subsection{Main Results}
 	
	\begin{figure*}[!h]
		\centering
  \vspace{-0.2cm}
		\includegraphics[ width=0.85 \linewidth]{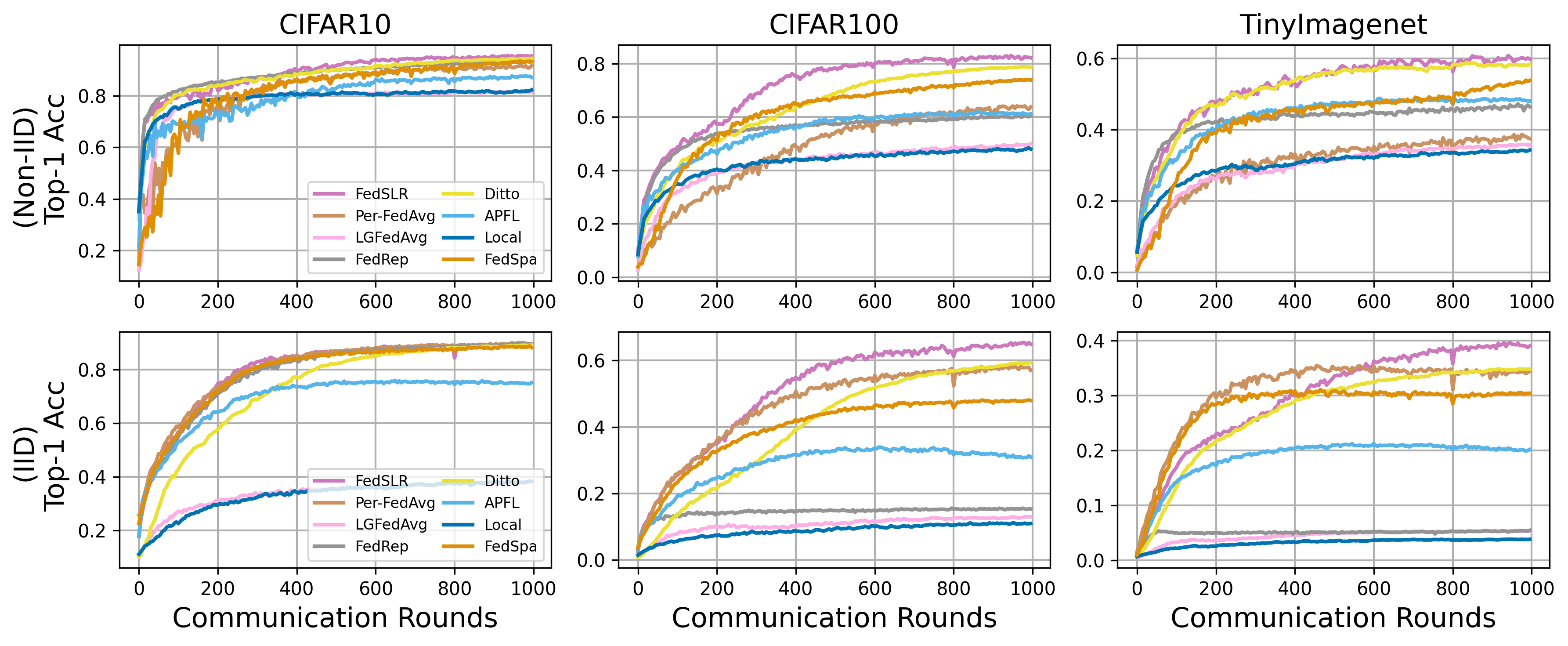}
        \vspace{-0.1cm}
		\caption{Test accuracy vs. communication rounds over IID and Non-IID.   }
		\label{test accuracy main}
	\end{figure*}
	\textbf{Performance.}  We show the accuracy in Figure \ref{test accuracy main} and Table \ref{acc performance table}, and discuss the performance comparison by separating IID and Non-IID cases.     The performance evaluation could also be interpreted by its representation visualization, see Figure \ref{representation} in Appendix \ref{Additional visualization for performance evaluation}.

 \begin{itemize}[leftmargin=*]
 {\color{black}
  \vspace{-0.5mm}
 \item  \textbf{Best accuracy performance in Non-IID}. Our personalized solution  FedSLR(mixed) significantly outperforms all the other personalized baselines in all the three datasets. Particularly, FedSLR achieves 3.52\% higher accuracy to Ditto, 8.32\% and 18.46\% accuracy gain to FedSpa and Per-FedAvg in the Non-IID CIFAR100 task. 
 \vspace{-0.5mm}
 \item \textbf{Most resilient to performance degradation in IID. }  We observe that all the personalized solutions experience performance degradation in the IID setting, i.e., their performance usually cannot emulate the SOTA global solution, e.g., FedDyn. However, we see that  FedSLR (mixed) is the most resilient one. Though it experiences an accuracy drop (1.45\% accuracy drop compared to the non-personalized GKR model), it still maintains the highest accuracy compared to other personalized baselines. 
  \vspace{-0.5mm}
 \item \textbf{Competitive convergence rate.  }  FedSLR maintains a competitive convergence rate compared to other personalized solutions, though we still observe that in some experimental groups (e.g., IID tinyimagenet) its convergence rate seems to be slower than other baselines in the initial rounds.
  \vspace{-0.5mm}
 }
  \end{itemize}
		
	\begin{table}[!hbtp]
		\centering
		\renewcommand{\arraystretch}{1}
		\caption{Test accuracy ($\%$) on the CIFAR-10/100 and TinyImagenet under IID and Non-IID setting. The first 4 solutions are global solutions, and the others are personalized solutions. }
  \small
		\setlength{\tabcolsep}{3mm}{\begin{tabular}{@{}c|cc|cc|cc@{}}
				\toprule
				\multicolumn{1}{c}{\multirow{2}{*}{Method}} & \multicolumn{2}{c}{CIFAR-10}       & \multicolumn{2}{c}{CIFAR-100}      & \multicolumn{2}{c}{TinyImagenet}  \\ \cmidrule(lr){2-3} \cmidrule(lr){4-5} \cmidrule(lr){6-7} 
				& \multicolumn{1}{c}{IID}  & Non-IID & \multicolumn{1}{c}{IID}  & Non-IID & \multicolumn{1}{c}{IID} & Non-IID \\ 
				\cmidrule(lr){1-1}    \cmidrule(lr){2-3} \cmidrule(lr){4-5} \cmidrule(lr){6-7}                 
				SCAFFOLD&91.33&82.91&66.68&59.48&43.13&34.11\\
				FedDyn&\textbf{92.61}&\textbf{90.14}& \textbf{68.68}& \textbf{64.29}& \textbf{43.52}& \textbf{38.02}\\
				FedAvg&90.25&82.89&61.92&55.78&35.17&30.56\\
				\rowcolor{Gray}
				FedSLR (GKR)&91.73&87.52&66.40&64.22&41.32&35.13\\
				\midrule
				Per-FedAvg&89.57&91.55&57.10&63.73&34.41&37.52\\
				LGFedAvg&38.32&81.35&12.96&49.90&5.34&35.68\\
				FedRep&89.28&93.41&15.36&60.42&5.48&46.42\\
				Ditto&89.16&94.39&59.36&78.67&34.78&58.30\\
				APFL&75.11&87.11&30.99&61.14&20.16&48.02\\
				Local&38.30&82.17&10.97&47.99&3.86&34.28\\
				FedSpa&88.31&93.19&48.00&73.87&30.39&53.76\\
				\rowcolor{Gray}
				FedSLR (mixed)&\textbf{89.72}&\textbf{95.30}&\textbf{64.95}&\textbf{82.19}&\textbf{38.20}& \textbf{59.13}\\
				
				\bottomrule
		\end{tabular}}\label{acc performance table}
	\end{table}
	
	\textbf{Communication and model parameters.} Table \ref{comm cost model param} illustrates the communication cost and model complexity with different methods. As shown, GKR model of FedSLR has smaller model complexity with smaller communication overhead in the training phase (approximately 19\% of reduction compared to full-model transmission). In addition, our results also corroborate that, built upon the low-rank model, FedSLR acquires personalized models with only modicum parameters added. The mixed personalizd model acquires 17.97\% accuracy gain with 29\% more parameters added upon the global GKR model.  
	
	\begin{table}[h]
		\centering
		\renewcommand{\arraystretch}{1}
		\caption{Communication cost (GB) and $\#$ of params (M) on the Non-IID splitting of three datasets. FedAvg(*) refers to a class of algorithms with no communication reduction/increase, e.g., FedDyn.}
  \small
		\setlength{\tabcolsep}{0.6mm}{\begin{tabular}{@{}c|cc|cc|cc@{}}
				\toprule
				\multicolumn{1}{c}{\multirow{2}{*}{Method}} & \multicolumn{2}{c}{CIFAR-10}       & \multicolumn{2}{c}{CIFAR-100}      & \multicolumn{2}{c}{TinyImagenet}  \\ \cmidrule(lr){2-3} \cmidrule(lr){4-5} \cmidrule(lr){6-7} 
				& \multicolumn{1}{c}{Comm cost$\downarrow$ }  & \# of params$\downarrow$ & \multicolumn{1}{c}{Comm cost$\downarrow$ } & \# of params$\downarrow$ & \multicolumn{1}{c}{Comm cost$\downarrow$ } & \# of params$\downarrow$\\ 
				\cmidrule(lr){1-1}    \cmidrule(lr){2-3} \cmidrule(lr){4-5} \cmidrule(lr){6-7}                 
				FedAvg (*)&893.92 & 11.17  &893.92 & 11.17  &893.92 & 11.17  \\
				SCAFFOLD&1787.83 & 11.17  &1787.83 & 11.17  &1787.83 & 11.17  \\
				\rowcolor{Gray}
				FedSLR (GKR) & \textbf{626.89} & \textbf{3.51} & \textbf{722.60} & \textbf{5.54} & \textbf{730.00} & \textbf{5.34} \\
				\midrule
				APFL&893.92 & 22.35  &893.92 & 22.35  &893.92 & 22.35  \\
				
				FedSpa& 446.96 & 5.59  & 446.96 & \textbf{5.59}  & 446.96 & \textbf{5.59}  \\
				LG-FedAvg& \textbf{0.41} & 11.17  & \textbf{4.08} & 11.17  & \textbf{8.13} & 11.17  \\
				FedRep&893.51 & 11.17  &889.84 & 11.17  &885.78 & 11.17  \\
				\rowcolor{Gray}
				FedSLR (mixed) &626.89 & \textbf{3.96}    &722.60 & 6.69    &730.00 & 7.59     \\
				\bottomrule
		\end{tabular}}\label{comm overhead}
		\label{comm cost model param}
	\end{table}

	\textbf{Sensitivity of hyper-parameters.}
	We postpone the ablation result to Appendix \ref{sensitivity of experiment}. Our main observations are that i) properly adjusting low-rank penalty for the GKR facilitates a better knowledge representation, and that ii) properly sparsifying the local component facilitates better representation of local pattern, and thereby promoting personalized performance. These observations further verify that our choice of low-rank plus sparse models as personalized models can further boost performance.  {\color{black}Moreover, we perform the sensitivity analysis of the local steps in Section \ref{verying K}. The results show that a sufficiently large $K$, e.g., two local epochs,  is required to guarantee the empirical performance of FedSLR, since our algorithm requires an accurate solution for solving the local sub-problem. }
	
	\section{Conclusion}
	In this paper, we present the optimization framework of FedSLR to fuse the personalized pattern into the global knowledge, so as to achieve personalized federated learning. 
	Empirical experiment shows that our solution acquires better representation of the global knowledge, promises higher personalized performance, but incurs smaller downlink communication cost and requires  fewer parameters in its model. Theoretically, our analysis on FedSLR concludes that the last iterate of GKR  could converge to the stationary point in at least sub-linear convergence rate, and the sparse component, which represents personalized pattern can also asymptotically converge under proper settings.
\subsubsection*{Acknowledgments}
LS is supported by the Major Science and Technology Innovation 2030 “Brain Science and Brain-like Research”
key project (No. 2021ZD0201405). WL is supported by Key-Area Research and Development Program of Guangdong Province (2021B0101420002), National Natural Science Foundation of China (62072187), the Major Key Project of PCL (PCL2021A09), and Guangzhou Development Zone Science and Technology Project (2021GH10).
\newpage
\bibliography{main}
\bibliographystyle{tmlr}
	
	\newpage
	\appendix

 \DoToC
	\newpage
 \section{Implementation details}
 \label{imp detail}
\subsection{The proximal operator}
	\label{proximal operator}
	The proximal operators mentioned in the main paragraph (i.e., nuclear and L1 regularizers) have the following closed-form solution.
 
	\par
	\textbf{Proximal operator for nuclear regularizer.} The computation of this operator involves singular value decomposition (SVD) of the weights matrix. After SVD, the operator performs value shrinkage of the singular value. The explicit form is shown below, 
	
	\begin{equation}
		\operatorname{Prox}_{\lambda \eta_g \| \cdot \|_*}(\bm X) = \bm U  \operatorname{diag}(\mathcal{S}_{\lambda \eta_g }(\bm d))   \bm V^T 
	\end{equation}
	where $\bm X = \bm U \operatorname{diag}(\bm d) \bm V^T$ is the SVD of the averaged matrix,  $\sigma(\bm X)) = \bm d$ is the singular value, and the soft threshold operation $\mathcal{S}_{a}(\bm x) $ is defined as follows:
	\begin{equation}
		(\mathcal{S}_{a}(\bm x))_i = \begin{cases}
			x_i - a &  x_i>a \\
			x_i + a &  x_i<-a  \\
			0 & \text{otherwise}\\ 
		\end{cases}
	\end{equation}
	Note that the computation needed for this operator is non-trivial, since SVD over the target matrix may requires intense computation.
	\par
	\textbf{Proximal operator for L1 regularizer.} This operator involves value shrinkage over the target vector, with closed-form as follows,
	\begin{equation}
		\operatorname{Prox}_{\mu \| \cdot \|_1}(\bm x) =  \mathcal{S}_{\mu}(\bm x)
	\end{equation}
	where the $\mathcal{S}_{\mu}(\bm x)$ is the same soft thresholding operator defined above.  This operator is used in the second phase of FedSLR, which only introduces negligible computation, since only additive on each coordinate is required.

	\subsection{Alternative designs of solving problem (P1)}
	\label{LPGD}
 Proximal  gradient descent is a classical solution with sufficient theoretical guarantee to solve problems with non-smooth regularizer, and specially, regularizer with nuclear norm. We now present two alternative solutions that might potentially solve problem (P1). \par
	\textbf{Direct application of proximal gradient in local steps (LPGD). } One alternative solution for problem (P1)  is to merge the classical proximal gradient descent into local step of FedAvg. Explicitly, we show the following the local update rule for solving problem (P1), as follows. \par
 Firstly,  for local step  $k \in 0,1\dots,K-1$, clients do 
	\begin{equation}
		\begin{split}
			&  \text{(Local Phase)} \quad {\bm w}_{i,t, k+1} =  \Prox_{ \eta \lambda   \|\cdot \|_{*}}( {\bm w}_{i,t,k} - \eta  \nabla   f_i( \bm w_{i,t,k} ) )  \\ 
		\end{split}
	\end{equation}
	Here, $\Prox_{\lambda \|\cdot \|_{*}}(\bm X) \triangleq \arg\min_{\bm Z} \frac{1}{2} \|  \bm Z - \bm X \|^2 + \lambda \| \bm Z \|_{*} $ is the standard proximal operator for nuclear norm, which is guaranteed to have closed-form solution (See Appendix \ref{proximal operator}). \par 
	After $K$ steps of local training, the server performs the general average aggregation over the obtained low rank model, or formally,  
	\begin{equation}
		\begin{split}
			&  \text{(Aggregation)} \quad {\bm w}_{t+1} =  \frac{1}{M}\sum_{i =1}^M  {\bm w}_{i, t,K}
		\end{split}
	\end{equation}
	However, this intuitive solution comes with two main drawbacks:  i) Proximal operator may be computationally prohibitive to perform in every step of local training, especially if the training surrogate does not have enough computation resources. ii) The method cannot produce real low-rank model until the model converges. To see this, notice that the GKR $\bm w_{t+1} $ cannot be low rank unless for all possible pairs $i,j\in [M] $,  $ {\bm w}_{i, t+1}= {\bm w}_{j, t+1}$. In other words, the GKR distributed from server to clients cannot be compressed by factorization, and therefore the downlink communication cannot reduce.  {\color{black}However, this solution might potentially ensure a reduced uplink communication, since the local model after training is guaranteed to be low-rank, and therefore can be factorized to smaller entities to transmit.} \par
 {\color{black}
	\textbf{Direct Application of proximal gradient in global step (GPGD).} Our second alternative solution is to apply the proximal step directly in the server aggregation phase, but the client follows the same local training protocol with FedAvg (without the auxiliary parameter we introduce in FedSLR). \par
 Formally,  for local step  $k \in 0,1\dots,K-1$, clients do:
 \begin{equation}
		\begin{split}
			&  \text{(Local Phase)} \quad {\bm w}_{i,t, k+1} =   {\bm w}_{i,t,k} - \eta  \nabla   f_i( \bm w_{i,t,k} )  \\ 
		\end{split}
	\end{equation}
After local models are sent back to server, server does the following proximal step:
\begin{equation}
		\begin{split}
  \label{aggregation server proximal}
			&  \text{(Aggregation)} \quad {\bm w}_{t+1} =  \Prox_{ \eta \lambda   \|\cdot \|_{*}} \left( \frac{1}{M}\sum_{i =1}^M  {\bm w}_{i, t,K} \right)
		\end{split}
	\end{equation}
 This alternative design shares the same computation and communication efficiency with FedSLR. However, this solution might not produce the desired convergence property as the general proximal GD algorithm. To see this, first notice that the aggregation step in Eq. (\ref{aggregation server proximal}) can be re-written to ${\bm w}_{t+1} =  \Prox_{ \eta \lambda   \|\cdot \|_{*}} \left( \bm w_t - \frac{1}{M}\sum_{i =1}^M  \eta \bm U_{i,t} \right)$ where $\bm U_{i,t} \triangleq \sum_{k=1}^K \nabla f_i (\bm w_{i,t,k})$ is the gradient update from the $i$-th client. However, if we directly apply proximal gradient descent to the global problem (P1), the update rule should be ${\bm w}_{t+1} =  \Prox_{ \eta \lambda   \|\cdot \|_{*}} \left( \bm w_t - \frac{1}{M}\sum_{i =1}^M  \eta \nabla f_i(\bm w_t)\right)$. Notice that $\bm U_{i,t} \neq \nabla f_i(\bm w_t)$ unless $\bm w_{i,t,k} \to \bm w_t$ (which is not true even $t\to \infty$ due to the presence of client drift in local step, see \citep{li2018federated}). Therefore, the structure of proximal GD cannot be recovered, which means the convergence property of this method cannot be guaranteed using the proximal GD framework.  
 }
{\color{black}
  \subsection{Alternative designs of sparse/low-rank formulation}
    In this paper, we enforce the global component to be low-rank while enforcing the personalized component to be sparse (i.e., LR+S). However, we note that one can easily adapt our algorithm to solve other combinations of our two-stage problem, e.g., enforce Sparse global component + LR personalized component (S+LR).  We now discuss some potential combinations as follows. \par
    \textbf{S + LR / LR + LR  }. For enforcing the personalized component to be low-rank, we can simply replace Line 20 in Algorithm \ref{algorithm1} with a proximal operator for the nuclear norm to project out some sub-spaces. However, to achieve this goal, we need to perform SVD towards the weights in every personalized step, which would be extremely computationally inefficient. 
    \par
    \textbf{S + S }. To achieve a sparse global/personalized component, we can modify Line 7 in  Algorithm \ref{algorithm1} with a proximal operator for L1 norm. Though this alternative is also computationally efficient with FedSLR, its expressiveness is limited compared to our LR+S design. Recent studies \citep{yu2017compressing,jeong2022factorized}  suggest that combining two compression technique could promote the expressiveness of the model. Some further studies \citep{chen2021pixelated, chen2021scatterbrain} show that attention matrix for transformer can only be approximated well by sparse + low-rank matrices, but not pure sparse or low-rank matrices. 
 }   
	
	%
 \begin{algorithm}[!h]
		\caption{ FedSLR under partial participation}
		\label{algorithm2}
		{\color{black}
			\hspace*{\algorithmicindent} \textbf{Input} Training iteration $T$; Local learning rate $\eta_l$; Global learning rate $\eta_g$ ; Local steps $K$;}
		\begin{algorithmic}[1]
			\Procedure{Server's Main Loop}{}
			\For { $t = 0, 1, \dots, T-1$}
			\State {\color{red} Uniformly sample a fraction of client into $S_t$ 	}
   \State {\color{black} Factorize  $\bm w_{t}$ to $\{\bm U_{t,l}\}$ and $\{\bm V_{t,l}\}$ by applying layer-wise SVD. }
			\For { $i \in {\color{red}S_{t}}$} 
			\State { \color {black} Send $\{\bm U_{t,l}\}$ and $\{\bm V_{t,l}\}$ to the $i$-th client, invoke its main loop and receive ${\bm w}_{i, t+1}$ }
			\State $\bm \gamma_{i,t+1} = \bm \gamma_{i,t} + \frac{1}{\eta_g} (\bm w_t - \bm w_{i,t+1})$
			\EndFor
			\For { $i \notin {\color{red}S_{t}}$}
			\State $\bm \gamma_{i,t+1} = \bm \gamma_{i,t}$
			\EndFor 
			\State 	${\bm w}_{ t+1} = \operatorname{Prox}_{ \lambda \eta_g  \| \|_*} \left ( {\color{red}\frac{1}{|S_t|}\sum_{i \in S_t}} {\bm w}_{i, t+1}- \frac{1}{M} \sum_{i=1}^M \frac{\bm \gamma_{i, t+1}}{\eta_g} \right ) $   \Comment{Apply the update}
			\EndFor
			\EndProcedure
			
			\Procedure{Client's Main Loop}{} 
			\State {\color{black} Receive  $\{\bm U_{t,l}\}$ and $\{\bm V_{t,l}\}$ from server and recover $\bm w_{ t}$}.
		\State Call Phase I OPT to obtain $\bm w_{i,t+1}$ and $\bm \gamma_{i,t+1}$.
			\State Call Phase II OPT to obtain $\bm p_{i,t,K}$.

			\State  Send  $  {\bm w}_{i, t+1}  $ to Server, and keep $\bm p_{i,t,K}$ and $\bm \gamma_{i,t+1}$ privately.
			\EndProcedure
			\Procedure{Phase I OPT (GKR)}{}
			\State{ $  {\bm w}_{i, t+1}  =  \operatorname{\arg \min}_{\bm w}  f_i( \bm w ) - \langle {\bm \gamma}_{i, t} ,\bm w  \rangle + \frac{1}{2\eta_g} \|{\bm w}_{ t}  -{\bm w}  \|^2 $ \quad }  \Comment{Train GKR in Local}
			\State $ 	{\bm \gamma}_{i, t+1} =  \bm \gamma_{i,t} +  \frac{1}{\eta_g}  ( {\bm w}_{ t} -{\bm w}_{i, t+1}  )  $  	\Comment{Update the Auxiliary Variable}
   \State Return $\bm w_{i,t+1}$  and $\bm \gamma_{i,t+1}$
			\EndProcedure
			
			\Procedure{Phase II OPT (Personalized Component)}{}
			\State $\bm p_{i,t,0} =  \bm p_{i,t^-,K}$  \Comment{$t^-$ is the last time the $i$-th client is called}
			\For{ $k= 0,1,\dots,K-1$}
			\State   $\bm p_{i,t,k+1} =  \operatorname{ Prox}_{\mu \eta \|\cdot \|_{1}} (\bm p_{i,t,k}  - \eta_l \nabla_{2} f_i( \bm w_{t} + \bm p_{i,t,k};\xi ) ) $ \Comment{{Local Fusion with SGD}}
			\EndFor
      \State Return $\bm p_{i,t,K}$ 
      \EndProcedure
			
		\end{algorithmic}
	\end{algorithm} 
	\subsection{FedSLR under partial participation}
	Partial participation, i.e., only a fraction of clients are selected in each round of training, is a common feature for federated learning. In Algorithm \ref{algorithm2}, we implement partial participation into the framework of FedSLR. In our experiment, we would consistently use Algorithm \ref{algorithm2} for  partial participation. Here we can apply arbitrary client selection schemes to further improve the practical performance of FedSLR, e.g., \cite{huang2020efficiency,huang2022stochastic,cho2020client}.
	
	\subsection{FedSLR with memory-efficient refinement}
 {\color{black}
    In algorithm 1, the server needs to track the auxiliary variable $\bm \gamma_{i,t}$ for each client, which may not be memory-efficient, especially when the number of participated clients is large. We rewrite a memory-efficient algorithm in Algorithm \ref{algorithm3}. Algorithm \ref{algorithm3} only tracks the average of auxiliary variables on the server side (Line 6), which will then be applied in the server aggregation in Line 7. This memory-efficient implementation is identical to Algorithm \ref{algorithm1}, as they produce the same global iterates in every round.}
    \begin{algorithm}[!h]
	\small
		\caption{Memory-efficient FedSLR}
		\label{algorithm3}
		{\color{black}
			\hspace*{\algorithmicindent} \textbf{Input} Training iteration $T$; Local stepsize $\eta_l$; Global stepsize $\eta_g$ ; Local step $K$;}
		\begin{algorithmic}[1]
			\Procedure{Server's Main Loop}{}
			\For { $t = 0, 1, \dots, T-1$}
            \State {\color{black} Factorize  $\bm w_{t}$ to $\{\bm U_{t,l}\}$ and $\{\bm V_{t,l}\}$ by applying layer-wise SVD. }
			\For { $i \in [M]$} 
			\State { \color {black} Send $\{\bm U_{t,l}\}$ and $\{\bm V_{t,l}\}$ to the $i$-th client, invoke its main loop and receive ${\bm w}_{i, t+1}$ }
			\EndFor
            
            \State {\color{black} $\bm \gamma_{t+1} = \bm \gamma_{t} +  \frac{1}{M}  \sum_{i=1}^M \frac{1}{\eta_g} (\bm w_t - \bm w_{i,t+1})$ }  \Comment{Update auxiliary variable on server}
			\State 	{\color{black} ${\bm w}_{ t+1} = \operatorname{Prox}_{ \lambda \eta_g  \| \cdot \|_*} \left ( \frac{1}{M}\sum_{i=1}^M {\bm w}_{i, t+1}- \eta_g \bm \gamma_{ t+1} \right ) $  }  \Comment{Apply the update} 
			\EndFor
			\EndProcedure
			
			\Procedure{Client's Main Loop}{} 
			\State {\color{black} Receive  $\{\bm U_{t,l}\}$ and $\{\bm V_{t,l}\}$ from server and recover $\bm w_{ t}$}.
			\State Call Phase I OPT to obtain $\bm w_{i,t+1}$ and $\bm \gamma_{i,t+1}$.
			\State Call Phase II OPT to obtain $\bm p_{i,t,K}$.

			\State  Send  $  {\bm w}_{i, t+1}  $ to Server, and keep $\bm p_{i,t,K}$ and $\bm \gamma_{i,t+1}$ privately.
			\EndProcedure
			\Procedure{Phase I OPT (GKR)}{}
			\State{ ${\bm w}_{i, t+1}  =  \operatorname{\arg \min}_{\bm w}  f_i( \bm w ) - \langle {\bm \gamma}_{i, t} ,\bm w  \rangle + \frac{1}{2\eta_g} \|{\bm w}_{ t}  -{\bm w}  \|^2 $ \quad }  \Comment{Train GKR in Local}
			\State $ 	{\bm \gamma}_{i, t+1} =  \bm \gamma_{i,t} +  \frac{1}{\eta_g}  ( {\bm w}_{ t} -{\bm w}_{i, t+1}  )  $  	\Comment{Update the auxiliary variable}
   \State Return $\bm w_{i,t+1}$ and $\bm \gamma_{i,t+1}$
			\EndProcedure
			
			\Procedure{Phase II OPT (Personalized Component)}{}
			\State {\color{black} $\bm p_{i,t,0} =  \bm p_{i,t-1,K}$}  
			\For{ $k= 0,1,\dots,K-1$}
			\State   $\bm p_{i,t,k+1} =  \operatorname{ Prox}_{\mu \eta \|\cdot \|_{1}} (\bm p_{i,t,k}  - \eta_l \nabla_{2} f_i( \bm w_{t} + \bm p_{i,t,k};\xi ) ) $ \Comment{{Local Fusion with SGD}}
			\EndFor
      \State Return $\bm p_{i,t,K}$ 
			\EndProcedure
		\end{algorithmic}
	\end{algorithm} 
   {\color{black}
    \subsection{Further consideration on  privacy\&robustness}
    \subsubsection{Privacy}
    Federated learning is venerable to data leakage even though data is not directly exposed to other entities. The attacker can reverse-engineer the raw data using the gradient update transmitted from the client, especially when the adopted batch size and local training step in the local training phase are small. In our setting, the same privacy leakage issues might exist when transferring the GKR between the server and clients. Besides, notice that our FedSLR solution involves additional auxiliary parameters $\bm \gamma_{i,t}$, which can indeed approximate the real gradient when training round $t \to \infty$. It is interesting to investigate if using $\bm \gamma_{i,t}$ can better reverse-engineer the original data with the gradient inversion technique since it is known that when the local step is large, the gradient update of the global model (GKR in our case) cannot precisely recover the real gradient from clients. We leave the evaluation of the extent of data leakage towards FedSLR a future work. 
    
    To further promote the privacy-preserving ability of our method, defense solution, e.g., differential privacy \citep{wei2020federated,truex2020ldp}, secure aggregation \citep{bonawitz2016practical,so2022lightsecagg}, trusted execution environment \citep{mo2021ppfl} can potentially be adapted and integrated into our training protocol. 
    \subsubsection{Robustness}
    Federated learning is vulnerable to data poisoning attack \citep{tolpegin2020data,xie2019dba,bagdasaryan2018backdoor}. Malicious clients can modify the data used for training to poison the global model such that the model experiences substantial drops in classification accuracy and recall for all (or some specific) data inputs. For personalized federated learning, the poisoning attack is still effective \citep{ma2022personalized} by poisoning the global component, which is shared among all the clients. It is a future work to compare the resilience of data poisoning attacks using our low-rank plus sparse formulation among existing  personalized solutions. \par
    Some potential defense solutions, e.g., adversarial training \citep{geiping2021doesn,yu2022robust}, certified robustness \citep{xie2021crfl}, robust aggregation \citep{pillutla2019robust} can potentially be applied in the training phase of the global component to further promote robustness of the personalized models. 
    }
 
	\section{Missing contents in experiment}
	\subsection{Data splitting}
	\label{simulation setting}
	There are totally $M=100$ clients in the simulation. We split the training data to these 100 clients under IID and Non-IID setting. For the IID setting, data are uniformly sampled for each client. For the Non-IID setting,  we use $\alpha$-Dirichlet distribution on the label ratios to ensure uneven label distributions among devices as \citep{hsu2019measuring}. The lower the distribution parameter $\alpha$ is, the more uneven the label distribution will be, and would be more challenging for FL. After the initial splitting of training data, we sample 100 pieces of testing data from the testing set to each client, with the same label ratio of their training data.  Testing is performed on each client's own testing data and the overall testing accuracy (that we refer to Top-1 Acc in our experiment) is calculated as the average of all the client's testing accuracy. For all the baselines, we consistently use 0.1 participation ratio, i.e., 10 out of 100 clients are randomly selected in each round. 
	\subsection{Baselines}
	\label{baseline description}
	We implement three general FL solutions to compare with the proposed FedSLR, which all produce one single model that is "versatile" in performing tasks in all clients. Specifically, FedAvg \citep{mcmahan2016communication} is the earliest FL solution. SCAFFOLD \citep{karimireddy2020scaffold} applies variance-reduction based drift correction technique. FedDyn \citep{acar2021federated} applies dynamic regularization to maintain local consistency of the global model. We use Option 2 for the update of control variate of SCAFFOLD, and the penalty of the dynamic regularization in FedDyn is set to 0.1.  \par  
	We also implement several PFL solutions for comparison of FedSLR. Specifically, Ditto \citep{li2021ditto} applies proximal term to constrain the distance between clients' personalized models and global model. Per-FedAvg \citep{fallah2020personalized} applies meta learning to search for a global model that is "easy" to generalize the personalized tasks. APFL \citep{deng2020adaptive1} utilizes linear interpolation to insert personalized component into the global model. FedRep \citep{collins2021exploiting} and LGFedAvg \citep{liang2020think} separate the global layers and personalized layers.  In our implementation, algorithm-related hyper-parameters are tuned to their best-states. Specifically, the proximal penalty of Ditto is 0.1, the finetune step-size and local learning rate (i.e., $\alpha$ and $\beta$) are set to 0.01 and 0.001, the interpolated parameter of APFL is set to 0.5. For FedRep, we fix the convolutional layers of our model to shared layers, while leaving the last linear layer as personalized layer. For Lg-FedAvg, the convolutional layers are personalized, and the linear layer is shared.  
	\subsection{Additional experimental results}
 \label{Additional experimental results}
	\subsubsection{Additional visualization for performance evaluation}
 \label{Additional visualization for performance evaluation}
	We show relative accuracy performance of different schemes in Figure \ref{test accuracy violin plot}, where the median of each violin plot demonstrates the median  relative accuracy among the clients, and the shape of the violin demonstrates the distribution of their relative accuracy. 
 \begin{figure*}[!hbtp]
		\centering
  \vspace{-0.3cm}
		\includegraphics[ width=9cm]{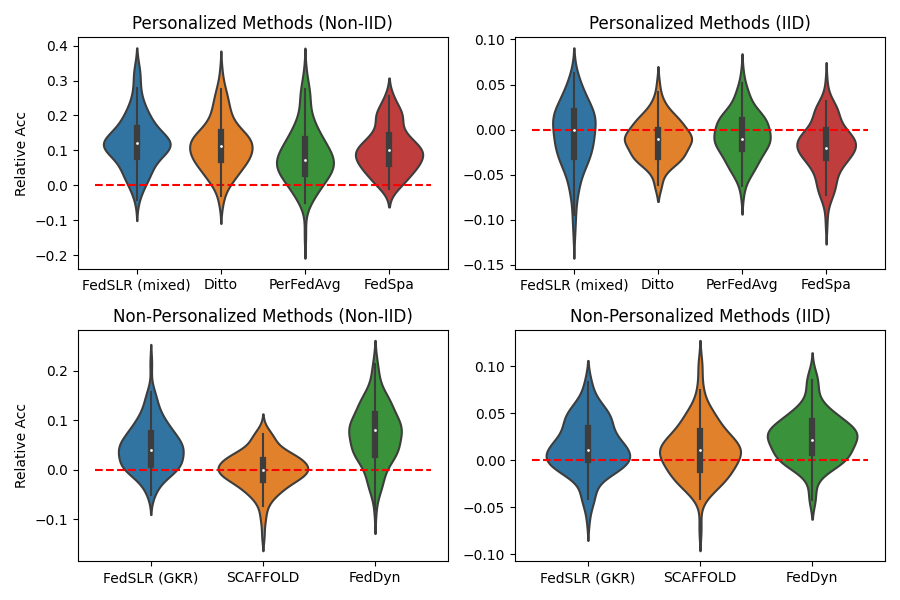}
    \vspace{-0.2cm}
		\caption{Relative accuracy (FedAvg as baseline) on CIFAR100. Accurcay is tested on each client's local data distribution. Wider sections of the violin plot represent a higher portion of the population will take on the given relative accuracy.  Population above red line means that the accuracy is improved (compared with FedAvg ) for this population of clients, otherwise,  is degraded.}
   \vspace{-0.5cm}
		\label{test accuracy violin plot}
	\end{figure*}
 
 We also show the t-SNE 2D illustration of the local representation in Figure. \ref{representation}, to further visualize how the personalized classifier and feature extractor promote classification performance. 
\begin{figure*}[!h]
\vspace{-0.2cm}
		\centering
		\includegraphics[ width=0.75 \linewidth]{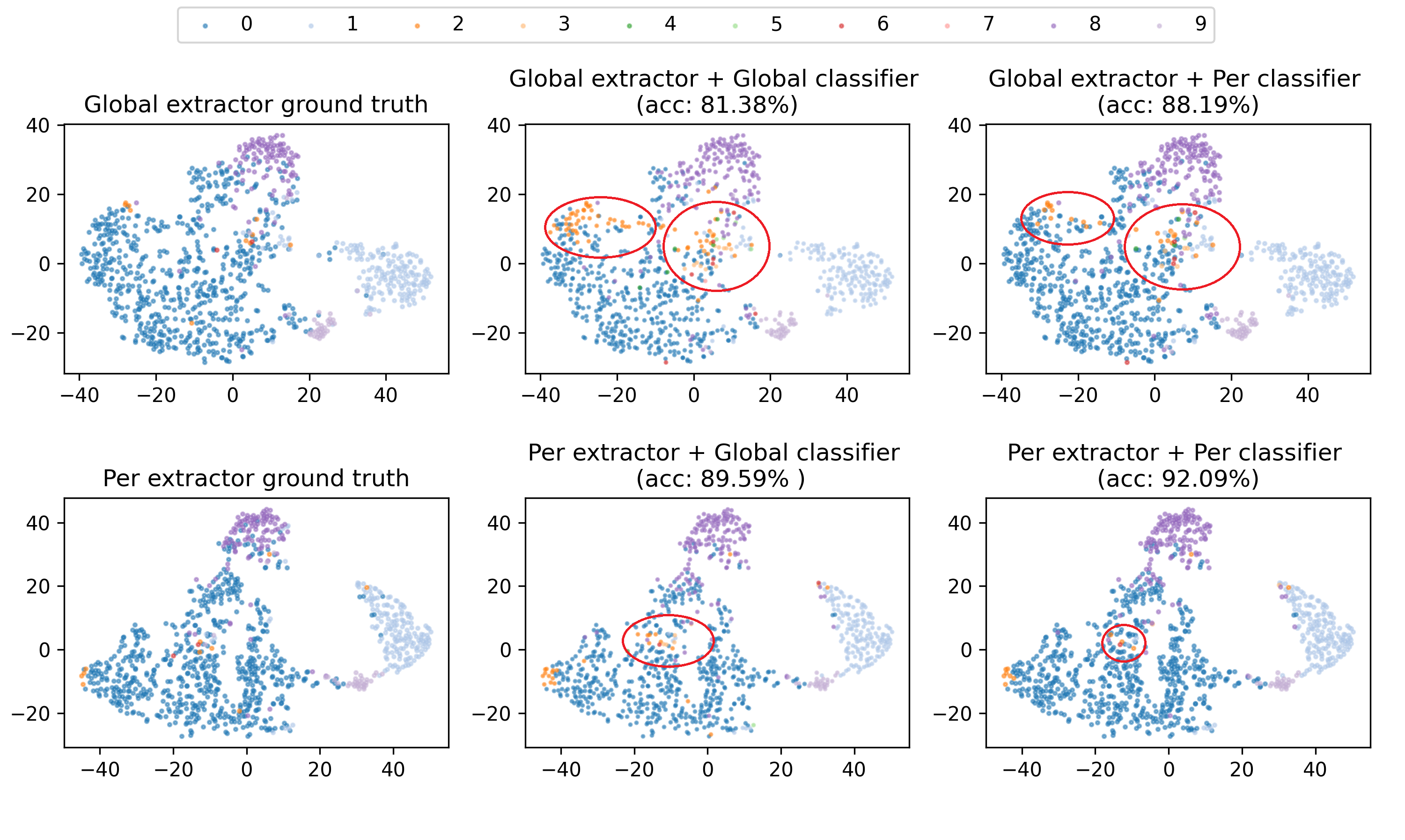}
  \vspace{-0.6cm}
		\caption{ t-SNE illustration of the local representation of a random client. Following \citep{li2021model},  the representation is derived from the last hidden layer of a Resnet18-GN model trained by FedSLR. Per extractor/classifier contains the low-rank plus sparse weights in its convolutional/linear layers. Global extractor/classifier only contains low-rank weights in its convolutional/linear layers, but the sparse personalized component is discarded. We see that via inserting the sparse component over the extractor, the points with different labels are more separated, and therefore are easier to classify. Moreover, via local fusion on both the extractor and classifier, data points have less chance to be mis-labeled (see the red circle, among which some points are mis-classified to label 2).   }
		\label{representation}
		\vspace{-0.2cm}
	\end{figure*}
	\subsubsection{Hyper-parameters sensitivity of FedSLR}
	\label{sensitivity of experiment}
	We conduct experiment on CIFAR-100 to evaluate the hyper-parameters sensitivity of FedSLR. \par
 \begin{table}[h]
		\centering
		\renewcommand{\arraystretch}{1}
		\vspace{-0.2cm}
		\caption{ Parameter sensitivity of low-rank penalty $\lambda$ on CIFAR-100. }
		\vspace{-0.2cm}
		\setlength{\tabcolsep}{3mm}{\begin{tabular}{@{}c|ccc|ccc@{}}
				\toprule
				\multicolumn{1}{c}{\multirow{2}{*}{ $\lambda$ }} & \multicolumn{3}{c}{FedSLR (GKR)}       & \multicolumn{3}{c}{FedSLR (mixed model)}        \\ \cmidrule(lr){2-4} \cmidrule(lr){5-7} 
				& \multicolumn{1}{c}{Acc $\uparrow$} & \multicolumn{1}{c}{Comm cost $\downarrow$}  & \# of params $\downarrow$ 	& \multicolumn{1}{c}{Acc $\uparrow$ } & \multicolumn{1}{c}{Comm cost $\downarrow$}  & \# of params $\downarrow$  \\ 
				\cmidrule(lr){1-1}    \cmidrule(lr){2-4} \cmidrule(lr){5-7}              
				1e-05&63.63  &819.44 & 7.33 &81.89  &819.44 & 8.54     \\
				5e-05& \textbf{64.27} &770.39 & 6.56 &81.20  &770.39 & 7.83    \\
				0.0001&64.22  &722.60 & 5.54 &\textbf{82.19}  &722.60 & 6.69     \\
				0.0005&61.81  &568.43 & 3.09 &80.10  &568.43 & 4.04   \\
				0.001&54.26 & \textbf{498.20} & \textbf{1.27} &74.33  & \textbf{498.20} & \textbf{2.24}    \\
				\bottomrule
		\end{tabular}}\label{sensitivity lambda}
		\vspace{-0.2cm}
	\end{table} 
	
	\textbf{Sensitivity of low-rank penalty $\lambda$.}
	We adjust different values to $\lambda$ while fixing proximal stepsize $\eta_g=10$ and sparse penalty $\mu = 0.001$, whose results are available in Table \ref{sensitivity lambda}. As shown, the communication cost and number of parameters of both the mixed and GKR model are largely lowered as the low-rank penalty escalates. Notably, there is a significant drop of accuracy if the penalty $\lambda$ is set too large (e.g., 0.001), however, we also see that with proper low-rank penalty (e.g., 0.0001), the accuracy performance improves for both the GKR and mixed models. This concludes that making global component to be low-rank can help better represent global knowledge across clients.
	
	\begin{table}[h]
		\centering
		\renewcommand{\arraystretch}{0.9}
		\caption{Parameter sensitivity of sparse penalty $\mu$ on CIFAR-100.  }
		\vspace{-0.2cm}
		\setlength{\tabcolsep}{3mm}{\begin{tabular}{@{}c|ccc}
				\toprule
				\multicolumn{1}{c}{\multirow{2}{*}{ $\mu$ }} & \multicolumn{3}{c}{FedSLR (mixed model)}              \\ \cmidrule(lr){2-4} 
				& \multicolumn{1}{c}{Acc $\uparrow$} & \multicolumn{1}{c}{Comm cost $\downarrow$ }  & \# of params $\downarrow$	  \\ 
				\cmidrule(lr){1-1}    \cmidrule(lr){2-4}             
				0.0001&79.15  &720.60 & 8.50   $\pm$  0.79  \\
				0.0005&81.78  &722.60 & 7.12   $\pm$  0.65  \\
				0.001& \textbf{82.19}  &722.60 & 6.69   $\pm$  0.56  \\
				0.005&81.82  &722.60 & 5.78   $\pm$  0.17  \\
				0.01&80.81  &720.60 & \textbf{5.56}   $\pm$  0.08  \\
				\bottomrule
		\end{tabular}}\label{Sensitivity of Sparse Penalty mu}
		\vspace{-0.2cm}
	\end{table}
	\textbf{Sensitivity of sparse penalty $\mu$.} Then we fix $\eta_g =10$ and low-rank penalty to $\lambda=1e^{-4}$ while adjusting $\mu$. As shown in Table \ref{Sensitivity of Sparse Penalty mu}, via enlarging the sparse penalty, the number of parameters of the personalized models could be lowered, but would sacrifice some accuracy performance. However, we also observe that proper sparse regularization would induce even better performance for the personalized models. This further corroborates that the personalized component to be sparse can better capture the local pattern.

	\begin{table}[h]
		\centering
		\renewcommand{\arraystretch}{1}
		\caption{Parameter sensitivity of proximal step size $\eta_g$  on CIFAR-100. }
		\setlength{\tabcolsep}{3mm}{\begin{tabular}{@{}c|ccc|ccc@{}}
				\toprule
				\multicolumn{1}{c}{\multirow{2}{*}{ $\eta_g$}} & \multicolumn{3}{c}{FedSLR (GKR)}       & \multicolumn{3}{c}{FedSLR (mixed model)}        \\ \cmidrule(lr){2-4} \cmidrule(lr){5-7} 
				& \multicolumn{1}{c}{Acc $\uparrow$} & \multicolumn{1}{c}{Comm cost $\downarrow$}  & \# of params $\downarrow$	& \multicolumn{1}{c}{Acc $\uparrow$ } & \multicolumn{1}{c}{Comm cost  $\downarrow$}  & \# of params  $\downarrow$ \\ 
				\cmidrule(lr){1-1}    \cmidrule(lr){2-4} \cmidrule(lr){5-7}              
				2&60.74 &894.06 & 10.65 &80.37  &894.06 & 12.36     \\
				10& \textbf{64.22}  &722.60 & 5.54 &\textbf{82.19} &722.60 & 6.69    \\
				20&61.02  &668.94 & 5.16 &80.60  &668.94 & 6.15    \\
				100&58.69  &\textbf{559.88} & \textbf{2.99} &78.33  &\textbf{559.88} & \textbf{3.90}    \\
				
				\bottomrule
		\end{tabular}}\label{acc}
		\vspace{-0.2cm}
	\end{table}
	\textbf{Sensitivity of proximal step size $\eta_g$.} We then tune the  proximal step size $\eta_g$ while fixing $\lambda=1e^{-4}$ and $\mu = 0.001$. As can be observed, choosing $\eta_g$ to a proper value is vital to the accuracy performance of FedSLR. Additionally, we see that with a larger $\eta_g$, the obtained model size and communication cost can be reduced,  which can be explained by looking into the proximal operator. Specifically, for $\eta_g$ that is too large, the proximal operator would prune out most of the singular value of the model's weight matrix. Therefore the parameter number along with the communication cost would reduce with a larger $\eta_g$, but the accuracy performance would probably degrade simultaneously.

	\begin{table}[h]
		\centering
		\renewcommand{\arraystretch}{1}
		\caption{Parameter sensitivity of local Steps on CIFAR-100 under Non-IID setting. }
		\setlength{\tabcolsep}{3mm}{\begin{tabular}{c|c|c|c|c}
  \toprule
Methods\textbackslash{}local Steps & 25 (1 epoch)     & 50 (2 epochs)  & 75 (3 epochs)     & 100 (4 epochs) \\
\midrule
FedSLR (GKR)                        & 58.20 &  64.51 & 63.86 & 64.17  \\
FedSLR (mixed)                      & 78.82 & 81.46  & 81.97 & 82.20  \\
\bottomrule
\end{tabular}}\label{local step exp}
		\vspace{-0.2cm}
	\end{table}
	{\color{black}
	\textbf{Sensitivity of local steps.} \label{verying K} In algorithm \ref{algorithm1}, we requires each client to exactly solve the local sub-problem in line 14, which may not be realistic due to limited local steps. We show in Table \ref{local step exp} how applying different epochs would affect the empirical accuracy performance of FedSLR. Results show that with sufficiently large local epochs, e.g., 2 local epochs, the accuracy performance can be well guaranteed. 
}
\subsubsection{Pure global versus pure personalization}
{\color{black}
 To motivate our low-rank-plus-sparse solution, we tune the low-sparse/sparse penalty respectively to extreme cases to recover the pure global and personalized component. Specifically, we first tune the low-rank penalty to 10 (a very large value) to zero out the global component, and adjust the sparse penalty to see how the sparse intensity would affect the personalized component's performance. The results are shown in Table  \ref{pure personalized}. Additionally, we tune the low-rank penalty, while fixing sparse penalty to a large value to see how the pure global component performs. The results are shown in Table \ref{pure global}.\par
\begin{table}[!h]
		\centering
		\renewcommand{\arraystretch}{1}
		\caption{Accuracy performance of pure personalized component on on CIFAR-100 under Non-IID setting. }
		\setlength{\tabcolsep}{3mm}{\begin{tabular}{c|c|c|c}
    \toprule
Sparse Penalty  ($\mu$)            &   1e-3 &  1e-4  & 1e-5     \\
\midrule
Acc of pure personalization &    37.14        &  44.16 & 44.54  \\
\bottomrule
\end{tabular}}
\label{pure personalized}
		\vspace{-0.2cm}
	\end{table}

	\begin{table}[!h]
		\centering
		\renewcommand{\arraystretch}{1}
		\caption{Accuracy performance of pure global component on CIFAR-100 under Non-IID setting. }
		\setlength{\tabcolsep}{3mm}{\begin{tabular}{c|c|c|c}
    \toprule
Low-rank Penalty  ($\lambda$)            &   1e-3 &  1e-4  & 1e-5     \\
\midrule
Acc of pure global &  54.26        &  64.22 & 63.63  \\
\bottomrule
\end{tabular}}
\label{pure global}
		\vspace{-0.2cm}
	\end{table}
Our results show that i) too much sparsity/low-rank penalty would hurt the model's performance, and ii) pure local component cannot perform better than the  pure global component due to lack of information exchange between clients, which justifies the necessity of collaborative training. 
	}

	\subsubsection{Wall time of communication }
  \begin{figure}[!h]
		\centering
  \vspace{-5mm}
		\includegraphics[ width=0.4\linewidth]{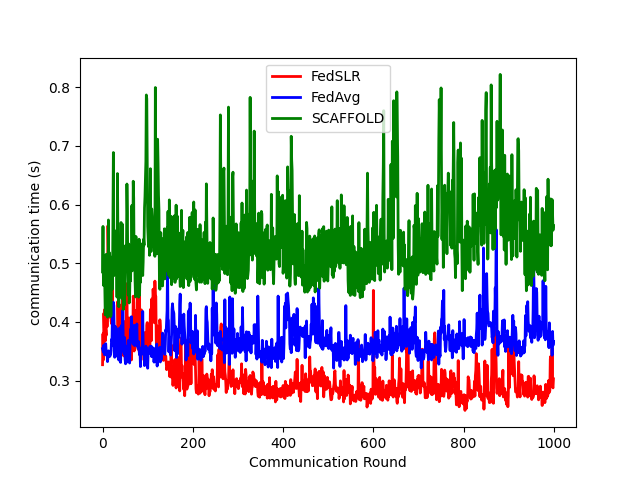}

		\caption{Wall-clock time of communication.}
		\label{walltime}
	\end{figure}
	{\color{black}
     To demonstrate the communication reduction effect of the proposed solutions. We measure the wall time of each round communication using a local WLAN, as shown in Fig. \ref{walltime}. Our results show that as communication round goes, the communication wall-time of FedSLR drops significantly, since the rank of the model weights is decreasing.  Other two baselines, FedAvg and SCAFFOLD maintain the same scale of communication time throughout the training, and SCAFFOLD requires twice communication since it need to transmit the drift control parameters in addition to the model weights. }
	
	\subsubsection{Wall time of inference latency}
{\color{black}
 We measure the inference latency of the low-rank GKR on a Tesla M60 GPU. The batch size of each batch of testing data is set to 20. The result is shown in Table \ref{inference latency}. Our results indicate that factorizing model weights can indeed accelerate the GKR model's inference speed. \par

 }
\begin{table}[h]
		\centering
		\renewcommand{\arraystretch}{1}
		\caption{Inference latency (milliseconds/batch) of GKR under different low-rank penalty. }
		\setlength{\tabcolsep}{3mm}{\begin{tabular}{c|c|c|c|c|c|c}
    \toprule
Low-rank Penalty  ($\lambda$)       &  0 &  1e-5  &   5e-5 &  1e-4  & 5e-4 & 1e-3     \\
\midrule
\# of params (M)  &  11.17 & 7.33  &   6.56 & 5.54  & 3.09 & 1.27     \\
Inference latency (ms) & 10.98  &  10.12        &  9.30 & 9.24  & 7.62  & 6.53 \\
\bottomrule
\end{tabular}}
\label{inference latency}
		\vspace{-0.2cm}
	\end{table}

	\section{Missing contents in theoretical analysis}
 \label{theory part}
	In this section, we shall introduce the details of our theoretical results. 
	\subsection{Definition of KL property}
	\label{kl appendix}
	We first show the definition of KL property, which has been widely used to model the optimization landscape of many machine learning tasks, e.g., \citep{attouch2010proximal}.
	\begin{definition}[KL property] A function $g: \mathbb{R}^{n} \rightarrow \mathbb{R}$ is said to have the Kurdyka- Lojasiewicz (KL) property at $\tilde{x}$ if there exists $v \in (0, +\infty)$, a neighbouhood $U$ of $\tilde{x}$, and a function $\varphi: [0,v) \to \mathbb{R}_+$, such that  for all $x \in U$ with $\{x: g(\tilde{x})<g(x)<g(\tilde{x})+v \}$, the following condition holds,
		$$
		\varphi^{\prime}(g(x)-g(\tilde{x})) \operatorname{dist}(0, \partial g(x)) \geqslant 1,
		$$
		where  $\varphi(v)=c v^{1-\theta} $ for $\theta \in [0,1)$  and $c>0$.
	\end{definition}
	The KL property is a useful analysis tool to characterize the local geometry around the critical points in the non-convex landscape, and could be viewed as a generalization of Polyak-Łojasiewicz (PL)  condition\citep{karimi2016linear} when the KL parameter is $\theta=\frac{1}{2}$ \citep{chen2021proximal}.\par
	\subsection{Facts}
	For sake of clearness, we first provide the following facts that can be readily obtained as per the workflow of  our algorithm. 
	\begin{fact}[Property of solving local subproblem]
		Recall that Eq. (\ref{local}) gives, for $i \in [M]$,
		\begin{equation}
			\label{argmin first update}
			\bm w_{i,t+1} = 	\arg \min_{\bm w}  f_i(\bm w) - \langle \bm \gamma_{i,t}, \bm w  \rangle + \frac{1}{2\eta_g} \| \bm w_t - \bm w\|^2
		\end{equation}
		Moreover, from the optimality condition of the above equation, the following holds true for $i \in [M]$.
		\begin{equation}
			\label{vector local update}
			\nabla f_i(\bm w_{i,t+1}) - \bm \gamma_{i, t} - \frac{1}{\eta_g} ( \bm w_{t} - \bm w_{i,t+1}  ) = 0
		\end{equation}
	\end{fact}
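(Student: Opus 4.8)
The statement is an instance of Fermat's first-order optimality rule for unconstrained smooth minimization, so the plan is short. The objective in Eq. (\ref{argmin first update}), namely $g_i(\bm w) := f_i(\bm w) - \langle \bm \gamma_{i,t}, \bm w\rangle + \frac{1}{2\eta_g}\|\bm w_t - \bm w\|^2$, is differentiable on $\mathbb{R}^d$: the term $f_i$ is $L$-smooth (hence differentiable) by Assumption \ref{L-smoothness}, the inner product is linear, and the quadratic penalty is a polynomial. Since $\bm w_{i,t+1}$ is by definition a global minimizer of $g_i$ over all of $\mathbb{R}^d$ (an unconstrained problem, so the minimizer is an interior point of the domain), the necessary first-order condition gives $\nabla g_i(\bm w_{i,t+1}) = \bm 0$. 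Establishing Eq. (\ref{vector local update}) then reduces to computing $\nabla g_i$ and rearranging.

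First I would differentiate $g_i$ term by term. The gradient of $f_i$ is $\nabla f_i(\bm w)$; the gradient of the linear term $-\langle \bm \gamma_{i,t}, \bm w\rangle$ is $-\bm \gamma_{i,t}$; and for the quadratic penalty, writing $\|\bm w_t - \bm w\|^2 = (\bm w - \bm w_t)^{\top}(\bm w - \bm w_t)$ gives $\nabla_{\bm w}\,\frac{1}{2\eta_g}\|\bm w_t - \bm w\|^2 = \frac{1}{\eta_g}(\bm w - \bm w_t)$. Summing yields
\begin{equation}
\nabla g_i(\bm w) = \nabla f_i(\bm w) - \bm \gamma_{i,t} + \frac{1}{\eta_g}(\bm w - \bm w_t).
\end{equation}
Evaluating at the minimizer $\bm w = \bm w_{i,t+1}$, setting the result to zero, and using the sign identity $\frac{1}{\eta_g}(\bm w_{i,t+1} - \bm w_t) = -\frac{1}{\eta_g}(\bm w_t - \bm w_{i,t+1})$, I obtain exactly
\begin{equation}
\nabla f_i(\bm w_{i,t+1}) - \bm \gamma_{i,t} - \frac{1}{\eta_g}(\bm w_t - \bm w_{i,t+1}) = \bm 0,
\end{equation}
which is Eq. (\ref{vector local update}).

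There is essentially no technical obstacle; the only point deserving a word is the standing convention that the local subproblem is solved \emph{exactly}, i.e.\ $\bm w_{i,t+1}$ is a true stationary point of $g_i$. If the subproblem is only solved approximately (as the algorithm's Phase I allows, using an iterative solver such as SGD), the identity would hold only up to a residual error term that later arguments would need to absorb; but under the exact-solve convention adopted for this Fact the equality is immediate. I would also emphasize that since $g_i$ need not be convex, the condition $\nabla g_i(\bm w_{i,t+1}) = \bm 0$ is invoked here purely as a \emph{necessary} condition holding at any global minimizer, which is all that the downstream convergence analysis relies on.
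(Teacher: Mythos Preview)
Your proposal is correct and matches the paper's approach: the paper simply states that Eq.~(\ref{vector local update}) follows ``from the optimality condition'' of the local subproblem, without further elaboration, and your derivation is exactly the one-line gradient computation that justifies this claim. Your additional remarks on exact versus approximate solves and on necessity versus sufficiency are sound and more careful than what the paper itself says.
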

	\begin{fact}[Property of auxilliary variable]
		The update of auxilliary variable gives, for $i \in [M]$,
		\begin{equation}
			\label{vector dual update}
			\bm \gamma_{i, t+1} -  \bm \gamma_{i, t} = \frac{1}{\eta_g} (\bm w_{t} - \bm w_{i,t+1}) 
		\end{equation}
		Moreover, combining Eq. (\ref{vector local update}) and (\ref{vector dual update}), for $i \in [M]$,
		\begin{equation}
			\label{deduce from first and second update}
			\bm \gamma_{i, t+1} = \nabla f_i (\bm w_{i,t+1})
		\end{equation} 
	\end{fact}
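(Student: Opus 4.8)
The plan is to obtain both identities of Fact~2 by pure algebraic manipulation, treating Eq.~(\ref{vector dual update}) as an immediate rearrangement of the update rule and Eq.~(\ref{deduce from first and second update}) as a consequence of matching that rearrangement against the optimality condition already recorded in Fact~1. No analytic machinery is required; the whole content is a termwise comparison of two expressions.

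First I would establish Eq.~(\ref{vector dual update}). The auxiliary-variable update introduced in Eq.~(\ref{auxilary update}) reads $\bm\gamma_{i,t+1} = \bm\gamma_{i,t} + \frac{1}{\eta_g}(\bm w_t - \bm w_{i,t+1})$. Subtracting $\bm\gamma_{i,t}$ from both sides gives exactly $\bm\gamma_{i,t+1} - \bm\gamma_{i,t} = \frac{1}{\eta_g}(\bm w_t - \bm w_{i,t+1})$, which is the claim. Nothing beyond the definition of the update enters at this step.

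Next I would prove Eq.~(\ref{deduce from first and second update}). The first-order optimality condition for the local subproblem (\ref{argmin first update}), recorded as Eq.~(\ref{vector local update}) in Fact~1, states $\nabla f_i(\bm w_{i,t+1}) - \bm\gamma_{i,t} - \frac{1}{\eta_g}(\bm w_t - \bm w_{i,t+1}) = 0$. Rearranging isolates the gradient as $\nabla f_i(\bm w_{i,t+1}) = \bm\gamma_{i,t} + \frac{1}{\eta_g}(\bm w_t - \bm w_{i,t+1})$. The key observation is that this right-hand side is \emph{termwise identical} to the defining update of $\bm\gamma_{i,t+1}$ from Eq.~(\ref{vector dual update}) (equivalently Eq.~(\ref{auxilary update})); substituting that update yields $\nabla f_i(\bm w_{i,t+1}) = \bm\gamma_{i,t+1}$, as desired.

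There is no substantive obstacle here, since both identities are exact consequences of the update rule together with the exactly-solved local subproblem. The only point requiring a little care is the validity of the optimality condition Eq.~(\ref{vector local update}) as a genuine equality rather than a subgradient inclusion: this holds because $f_i$ is differentiable by Assumption~\ref{L-smoothness}, the quadratic proximal term $\frac{1}{2\eta_g}\|\bm w_t - \bm w\|^2$ and the linear term $-\langle\bm\gamma_{i,t},\bm w\rangle$ are smooth, and the minimization in (\ref{argmin first update}) ranges over all of $\mathbb{R}^d$ with no constraint or nonsmooth regularizer. Hence the stationarity condition $\nabla[\,\cdot\,] = 0$ applies verbatim, and both identities of Fact~2 follow.
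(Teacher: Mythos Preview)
Your proposal is correct and follows essentially the same approach as the paper: Eq.~(\ref{vector dual update}) is just a restatement of the auxiliary-variable update~(\ref{auxilary update}), and Eq.~(\ref{deduce from first and second update}) is obtained by substituting that update into the rearranged optimality condition~(\ref{vector local update}). Your additional remark justifying that the stationarity condition is an equality (because all terms in the local subproblem are smooth and unconstrained) is a slight elaboration beyond what the paper states explicitly, but the underlying argument is identical.
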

	\begin{fact}[Property of global aggregation]
		Aggregation in Eq. (\ref{aggregation}) gives:
		\begin{equation}
			\begin{split}
				\label{second update}
				{\bm w}_{t+1} 
    &=  \arg \min_{\bm w} \frac{1}{2} \left \| \bm w - \left(\frac{1}{M}\sum_{i \in [M]} {\bm w}_{i, t+1}- \eta_g \frac{1}{M} \sum_{i=1}^M \bm \gamma_{i, t+1} \right )  \right \|^2 + \eta_g \mathcal{R}(\bm w) \\
				& =   \arg \min_{\bm w}  \mathcal{R}(\bm w) +  \frac{1}{M}\sum_{i=1}^M  \langle \bm \gamma_{i,t+1} ,  \bm w \rangle + \frac{1}{2\eta_g} \left \| \bm w - \frac{1}{M}\sum_{i =1}^M {\bm w}_{i, t+1}   \right \|^2
			\end{split}
		\end{equation}
		The optimality condition shows that:
		\begin{equation}
			\label{deduce from second update}
			0 \in \partial \mathcal{R}(\bm w_{t+1}) + \frac{1}{M}\sum_{i=1}^M\bm \gamma_{i,t+1} + \frac{1}{\eta_g}(\bm w_{t+1} - \frac{1}{M}\sum_{i =1}^M \bm w_{i, t+1})   
		\end{equation}
		%
	\end{fact}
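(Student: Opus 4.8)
The plan is to unfold the definition of the nuclear-norm proximal operator appearing in the aggregation update Eq. (\ref{aggregation}), complete the square algebraically, and then read off the first-order optimality condition of the resulting convex proximal subproblem. First I would invoke the definition of the proximal operator recalled in Appendix \ref{proximal operator}: for any input $\bm X$, the map $\Prox_{\eta_g \lambda \|\cdot\|_*}(\bm X)$ is by construction the (layer-wise, through the isometric reshaping $\pi$) minimizer of $\frac{1}{2}\|\bm w - \bm X\|^2 + \eta_g \mathcal{R}(\bm w)$. Substituting $\bm X = \frac{1}{M}\sum_{i=1}^M \bm w_{i,t+1} - \frac{\eta_g}{M}\sum_{i=1}^M \bm \gamma_{i,t+1}$ immediately reproduces the first line of Eq. (\ref{second update}).

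For the second equality I would write $\bm a := \frac{1}{M}\sum_{i=1}^M \bm w_{i,t+1}$ and $\bm b := \frac{1}{M}\sum_{i=1}^M \bm \gamma_{i,t+1}$, so that $\bm X = \bm a - \eta_g \bm b$, and expand $\frac{1}{2}\|\bm w - \bm a + \eta_g \bm b\|^2 = \frac{1}{2}\|\bm w - \bm a\|^2 + \eta_g \langle \bm w, \bm b\rangle - \eta_g\langle \bm a, \bm b\rangle + \frac{\eta_g^2}{2}\|\bm b\|^2$. The last two summands are independent of $\bm w$ and therefore do not affect the $\arg\min$; dividing the remaining objective by the positive constant $\eta_g$ and re-substituting $\bm a$ and $\bm b$ yields exactly the second line of Eq. (\ref{second update}).

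Finally, for Eq. (\ref{deduce from second update}) I would decompose the last objective as $F(\bm w) = \mathcal{R}(\bm w) + g(\bm w)$ with the smooth part $g(\bm w) = \frac{1}{M}\sum_{i=1}^M \langle \bm \gamma_{i,t+1}, \bm w\rangle + \frac{1}{2\eta_g}\|\bm w - \bm a\|^2$, whose gradient is $\nabla g(\bm w) = \frac{1}{M}\sum_{i=1}^M \bm \gamma_{i,t+1} + \frac{1}{\eta_g}(\bm w - \bm a)$. The part requiring the most care here is the justification of the subdifferential sum rule $\partial F = \partial \mathcal{R} + \nabla g$ and of Fermat's rule: since $\mathcal{R}$ is a proper convex function (a finite sum of nuclear norms) and $g$ is continuously differentiable, the sum rule holds, and because the proximal subproblem is convex its minimizer $\bm w_{t+1}$ is characterized by $0 \in \partial F(\bm w_{t+1})$. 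Writing this inclusion out gives precisely Eq. (\ref{deduce from second update}), completing the proof of the Fact; the layer-wise structure poses no additional difficulty because $\pi$ preserves the Frobenius norm, so the stacked formulation and its optimality condition are legitimate.
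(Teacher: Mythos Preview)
Your proposal is correct and matches the paper's approach: the paper states this as a ``Fact'' without an explicit proof, relying implicitly on exactly the steps you spell out---the definition of the proximal operator, the square-completion to pass between the two equivalent objectives, and the standard first-order optimality (Fermat's rule plus the sum rule for a convex regularizer and a smooth term). Your write-up is simply a more detailed version of what the paper leaves tacit.
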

	\begin{fact}[Global optimality condition]
		Combining Eq. (\ref{deduce from first and second update}) and Eq, (\ref{deduce from second update}), we have:
		\begin{equation}
			\label{deduce from three update}
			0 \in \partial \mathcal{R}(\bm w_{t+1}) + \frac{1}{M}\sum_{i=1}^M  \nabla f_i(\bm w_{i,t+1})   +  \frac{1}{\eta_g} \left (\bm w_{t+1} - \frac{1}{M}\sum_{i=1}^M \bm w_{i, t} \right)   
		\end{equation}
	\end{fact}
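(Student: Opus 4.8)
The plan is to obtain (\ref{deduce from three update}) as a direct, termwise substitution of the gradient identity (\ref{deduce from first and second update}) into the proximal optimality condition (\ref{deduce from second update}). No limiting argument, smoothness bound, or new estimate is required here: both ingredients are exact relations already derived from the algorithm's update rules, so the present Fact is purely their algebraic consequence.

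First I would recall (\ref{deduce from second update}), the first-order optimality condition of the proximal aggregation step, which reads $0 \in \partial\mathcal{R}(\bm w_{t+1}) + \frac{1}{M}\sum_{i=1}^M\bm\gamma_{i,t+1} + \frac{1}{\eta_g}(\bm w_{t+1} - \frac{1}{M}\sum_{i=1}^M\bm w_{i,t+1})$. The key observation is that the only term involving the auxiliary variables is the average $\frac{1}{M}\sum_{i=1}^M\bm\gamma_{i,t+1}$; both the subgradient set $\partial\mathcal{R}(\bm w_{t+1})$ and the drift term are free of $\bm\gamma$. Next, since (\ref{deduce from first and second update}) asserts $\bm\gamma_{i,t+1} = \nabla f_i(\bm w_{i,t+1})$ for every client $i\in[M]$, I would replace each summand and conclude $\frac{1}{M}\sum_{i=1}^M\bm\gamma_{i,t+1} = \frac{1}{M}\sum_{i=1}^M\nabla f_i(\bm w_{i,t+1})$. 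Substituting this equal vector back into (\ref{deduce from second update}) while carrying the remaining two terms over verbatim yields the claimed inclusion (\ref{deduce from three update}).

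Because the manipulation is a one-line substitution, there is no genuine obstacle; the only points needing (minor) care are that the identity (\ref{deduce from first and second update}) holds simultaneously for all $i\in[M]$, which legitimizes the termwise replacement inside the average, and that the object being rewritten sits inside a set-valued inclusion of the form $0\in\partial\mathcal{R}(\bm w_{t+1}) + \bm v$, so replacing the single vector addend $\bm v$ by an equal vector preserves the membership. The substantive content in fact lies entirely upstream, in the two preceding Facts: (\ref{deduce from first and second update}) is obtained by combining the optimality condition of the local subproblem (\ref{local}) with the auxiliary-variable recursion (\ref{auxilary update}), and (\ref{deduce from second update}) is the optimality condition of the proximal aggregation (\ref{aggregation}). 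Granting those, the present Fact follows immediately by the substitution just described.
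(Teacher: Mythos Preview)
Your approach is correct and is exactly the paper's: the Fact is a one-line substitution of $\bm\gamma_{i,t+1}=\nabla f_i(\bm w_{i,t+1})$ from (\ref{deduce from first and second update}) into the optimality condition (\ref{deduce from second update}). One caveat: carrying the drift term over verbatim actually gives $\frac{1}{\eta_g}\bigl(\bm w_{t+1}-\frac{1}{M}\sum_{i=1}^M\bm w_{i,t+1}\bigr)$, not $\bm w_{i,t}$ as printed in (\ref{deduce from three update}); this is a typographical slip in the paper, and indeed its later use of (\ref{deduce from three update}) (in the proof of Theorem~\ref{subsequence convergence}, where all indices appear at the same level $t^j$) is consistent with the $t{+}1$ version your substitution produces.
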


 	\begin{fact}[Gradient mapping between steps of local fusion]
	Recall that the gradient mapping is defined as  $\mathcal{G}_{\eta_l} (\bm w_t, \bm p_{i,t,k}) = \frac{1}{\eta_l} ( \bm p_{i,t,k}  - \Prox_{\mu \eta_l \| \cdot\|_1} ( \bm p_{i,t,k} - \eta_l \nabla_2 f_i (\bm w_t, \bm p_{i,t,k}) ) ) $. Per Eq. (\ref{local adaptation}) the following holds,
\begin{equation}
    \mathcal{G}_{\eta_l} (\bm w_t, \bm p_{i,t,k}) = \frac{1}{\eta_l} (\bm p_{i,t,k} -\bm p_{i,t,k+1}) 
\end{equation}
	\end{fact}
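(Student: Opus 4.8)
This is a definitional identity, so the argument is a single substitution; I record it in plan form for completeness. First I would write out the local-fusion update from Eq.~(\ref{local adaptation}),
\begin{equation}
\bm p_{i,t,k+1} = \Prox_{\eta_l \mu \|\cdot\|_1}\bigl( \bm p_{i,t,k} - \eta_l \nabla_2 f_i(\bm w_t + \bm p_{i,t,k}; \xi) \bigr).
\end{equation}
Next I would recall the definition of the gradient mapping used as the convergence criterion in Theorem~\ref{convergence local fusion},
\begin{equation}
\mathcal{G}_{\eta_l}(\bm w_t, \bm p_{i,t,k}) = \frac{1}{\eta_l}\Bigl( \bm p_{i,t,k} - \Prox_{\mu \eta_l \|\cdot\|_1}\bigl( \bm p_{i,t,k} - \eta_l \nabla_2 f_i(\bm w_t, \bm p_{i,t,k}) \bigr) \Bigr),
\end{equation}
and observe that the vector fed to the proximal operator on the right-hand side is precisely the vector fed to $\Prox$ in the update. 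Therefore the proximal image equals $\bm p_{i,t,k+1}$, and substituting it back yields $\mathcal{G}_{\eta_l}(\bm w_t, \bm p_{i,t,k}) = \tfrac{1}{\eta_l}(\bm p_{i,t,k} - \bm p_{i,t,k+1})$, which is the claim.

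The only point that needs care --- and the closest thing here to an obstacle --- is consistency of the gradient oracle: the gradient mapping in Theorem~\ref{convergence local fusion} is written with the population gradient $\nabla_2 f_i(\bm w_t, \bm p_{i,t,k})$, whereas Eq.~(\ref{local adaptation}) uses the stochastic gradient $\nabla_2 f_i(\bm w_t + \bm p_{i,t,k}; \xi)$. For the stated identity to hold verbatim one must read $\mathcal{G}_{\eta_l}$ in this Fact as the \emph{stochastic} gradient mapping formed from the same sample $\xi$ that produced $\bm p_{i,t,k+1}$; with that reading the substitution above is exact. This convention is harmless for the downstream analysis, since Assumption~\ref{bounded variance} controls the oracle error $\mathbb{E}\|\nabla_2 f_i(\cdot;\xi) - \nabla_2 f_i(\cdot)\|^2 \le \sigma^2$, so one can pass between the stochastic and population gradient mappings at the cost of the $\sigma^2$ terms already present in the bound of Theorem~\ref{convergence local fusion}.

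Beyond that, no estimates are involved. The purpose of recording this Fact is simply that it identifies one proximal-SGD step on the personalized component with a gradient-mapping step, which is exactly what will let the per-step decrease of the fusion objective $\phi_i(\hat{\bm w}^*, \cdot)$ be expressed directly in terms of $\|\mathcal{G}_{\eta_l}(\bm w_t, \bm p_{i,t,k})\|^2$ when telescoping toward the rate in Theorem~\ref{convergence local fusion}.
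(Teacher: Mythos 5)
Your proof is correct and matches the paper's (implicit) argument: the Fact is justified by nothing more than substituting the update rule of Eq.~(\ref{local adaptation}) into the definition of the gradient mapping, exactly as you do. Your caveat about the stochastic versus population gradient is well taken --- as literally written the identity only holds for the stochastic gradient mapping formed from the same sample $\xi$, and the paper's proof of Theorem~\ref{convergence local fusion} tacitly concedes this by separately introducing $\mathcal{G}_{\eta_l,\xi}$ and paying the $\sigma^2$ cost via the nonexpansiveness of the proximal operator (Lemma~\ref{smoothness of gm}) when passing back to $\mathcal{G}_{\eta_l}$.
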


	\subsection{Missing proof of Theorem \ref{subsequence convergence}}
	Now we proceed to give the proof of Theorem \ref{subsequence convergence}. \par
	\textbf{Proof sketch.} Our proof sketch can be summarized as follows: i) We showcase in Lemma \ref{suffcient descent} that the potential function is non-decreasing along the sequence, and its descent is positively related to $\|\bm w_{i,t+1}-\bm w_{i,t}\|$ and $\| \bm w_{t+1}-\bm w_t\|$. Telescoping its descent along the whole sequence to infinite, we  can prove that the final converged value of the potential function is the infinite sum of the above two norms. ii) By Lemma \ref{lower bound of potential function}, we see that converged value of the potential function can not take negatively infinite, and therefore, we further conclude that $\bm w_{i,t+1} \to \bm w_{i,t} $ and $\bm w_{t+1} \to \bm w_{t} $. Combining this with Lemma \ref{relation of iterate point and primal residual}, $ \bm \gamma_{i,t+1} \to \bm \gamma_{i,t} $ immediately follows, which corroborates our first claim in the theorem.   iii)  Then we start our proof of stationary property of the cluster point.  Conditioned on the sequence convergence property obtained before, we sequentially show that the residual term in the RHS of the condition is eliminable, that the local gradient at the cluster point and iterates point are interchangeable, and that the subgradient of regularizer at iterates point is a subset of  that at the cluster point.  iv) Plugging these claims into the global optimality condition Eq. (\ref{deduce from three update}), the stationary property follows as stated.

	\subsubsection{Key lemmas}
	\begin{lemma}[Bounded gap between global and local models]
		\label{relation of iterate point and primal residual}
		Combining Eq. (\ref{vector dual update}) and L-smoothness assumption,
		the following relation immediately follows:
		$\|\bm w_{t} -\bm w_{i,t+1} \| \leq  L \eta_g  \|\bm w_{i,t+1}- \bm w_{i,t} \||$ 
	\end{lemma}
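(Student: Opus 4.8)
The plan is to start from the two characterizations of the update already collected in the Facts. First I would recall the optimality condition of the local subproblem, Eq.~(\ref{vector local update}), which reads $\nabla f_i(\bm w_{i,t+1}) - \bm\gamma_{i,t} - \frac{1}{\eta_g}(\bm w_t - \bm w_{i,t+1}) = 0$, and hence $\frac{1}{\eta_g}(\bm w_t - \bm w_{i,t+1}) = \nabla f_i(\bm w_{i,t+1}) - \bm\gamma_{i,t}$. Next I would use Eq.~(\ref{deduce from first and second update}) from the previous round, namely $\bm\gamma_{i,t} = \nabla f_i(\bm w_{i,t})$ (valid because $\bm\gamma_{i,t}$ was produced by the auxiliary-variable update at round $t-1$; the base case $t=0$ may need a separate convention on $\bm\gamma_{i,0}$, which I would note). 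Substituting gives $\frac{1}{\eta_g}(\bm w_t - \bm w_{i,t+1}) = \nabla f_i(\bm w_{i,t+1}) - \nabla f_i(\bm w_{i,t})$.

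Then I would take norms on both sides and apply Assumption~\ref{L-smoothness} ($L$-smoothness of $f_i$) to the right-hand side:
\begin{equation}
\frac{1}{\eta_g}\|\bm w_t - \bm w_{i,t+1}\| = \|\nabla f_i(\bm w_{i,t+1}) - \nabla f_i(\bm w_{i,t})\| \leq L\|\bm w_{i,t+1} - \bm w_{i,t}\|.
\end{equation}
Multiplying through by $\eta_g > 0$ yields $\|\bm w_t - \bm w_{i,t+1}\| \leq L\eta_g\|\bm w_{i,t+1} - \bm w_{i,t}\|$, which is exactly the claimed inequality.

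This argument is essentially a one-line chain once the right Facts are lined up, so I do not expect a serious obstacle; the only subtlety is making sure the identity $\bm\gamma_{i,t} = \nabla f_i(\bm w_{i,t})$ is available at the round we need it (it holds for $t \geq 1$ from Eq.~(\ref{deduce from first and second update}) applied one round earlier, and for $t=0$ one typically initializes $\bm\gamma_{i,0} = \nabla f_i(\bm w_{i,0})$ or absorbs the discrepancy into a finite additive term). I would state that convention explicitly and then the bound follows uniformly in $t$.
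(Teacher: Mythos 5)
Your proposal is correct and follows essentially the same route as the paper: both derive the identity $\frac{1}{\eta_g}(\bm w_t - \bm w_{i,t+1}) = \nabla f_i(\bm w_{i,t+1}) - \nabla f_i(\bm w_{i,t})$ by combining the local optimality condition with the fact $\bm\gamma_{i,t} = \nabla f_i(\bm w_{i,t})$, and then conclude by taking norms and applying $L$-smoothness. Your remark about the $t=0$ initialization of $\bm\gamma_{i,0}$ is a fair point of care that the paper does not spell out, but it does not change the argument.
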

	
	\begin{proof}
		Eq. (\ref{vector dual update}), together with Eq.(\ref{deduce from first and second update}),  read:
		\begin{equation}
			\nabla f_i(\bm w_{i,t+1})- \nabla f_i(\bm w_{i,t})= \frac{1}{\eta_g} (\bm w_{t}-\bm w_{i,t+1}) 
		\end{equation}
		
		Then, we arrive at,
		\begin{equation}
			\begin{split}
				\|\bm w_{t}-\bm w_{i,t+1}  \| =  \eta_g \|	\nabla f_i(\bm w_{i,t+1})- \nabla f_i(\bm w_{i,t})  \|  {\leq }  L \eta_g  \|\bm w_{i,t+1}- \bm w_{i,t} \| \\
			\end{split}
		\end{equation}
		where the last inequality holds by L-smoothness Assumption \ref{L-smoothness}. This completes the proof.
	\end{proof}
	\begin{lemma}[Lower bound of potential function]
		\label{lower bound of potential function}
		If the cluster point ($(\bm w^*, \{\bm w_{i}^*\}, \{\bm \gamma_{i}^*\} ) $) exists, the potential function at the cluster point exhibits the following lower bound:
		\begin{equation}
			-\infty  <	\mathcal{D}_{\eta_g}(\bm w^*, \{\bm w_{i}^*\}, \{\bm \gamma_{i}^*\} )
		\end{equation} 
	\end{lemma}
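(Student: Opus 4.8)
The plan is to lower-bound $\mathcal{D}_{\eta_g}(\bm w^*, \{\bm w_i^*\}, \{\bm \gamma_i^*\})$ term by term. Two of the four groups of terms in the potential function are manifestly harmless: the regularizer $\mathcal{R}(\bm w^*) = \lambda \sum_l \|\bm W_l^*\|_* \geq 0$ since the nuclear norm is nonnegative and $\lambda > 0$, and each quadratic proximity term $\tfrac{1}{2\eta_g}\|\bm w^* - \bm w_i^*\|^2 \geq 0$. Hence the only terms that could drag the potential to $-\infty$ are the averaged losses $\tfrac1M\sum_i f_i(\bm w_i^*)$ and the linear coupling $\tfrac1M\sum_i \langle \bm \gamma_i^*, \bm w^* - \bm w_i^*\rangle$, and I would control both via the bounded-gradient assumption.

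First I would note that the auxiliary limits are bounded: along the generated sequence Eq.~(\ref{deduce from first and second update}) gives $\bm \gamma_{i,t} = \nabla f_i(\bm w_{i,t})$ for every $t \geq 1$, so Assumption~\ref{bounded gradient} yields $\|\bm \gamma_{i,t}\| \leq B$, and passing to the limit along the subsequence that defines the cluster point gives $\|\bm \gamma_i^*\| \leq B$. Next, Assumption~\ref{bounded gradient} makes each $f_i$ globally $B$-Lipschitz, so $f_i(\bm w_i^*) \geq f_i(\bm w^*) - B\|\bm w_i^* - \bm w^*\|$, and averaging, $\tfrac1M\sum_i f_i(\bm w_i^*) \geq f(\bm w^*) - \tfrac{B}{M}\sum_i \|\bm w_i^* - \bm w^*\|$; by Cauchy--Schwarz the coupling term is at least $-\tfrac{B}{M}\sum_i \|\bm w^* - \bm w_i^*\|$. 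The one subtlety worth flagging is that this step moves every $f_i$ to the \emph{common} point $\bm w^*$ before invoking properness of the \emph{average} $f$ from Assumption~\ref{Lower bounded global objective}; one has no lower bound on the individual $f_i$'s, so this detour through $\bm w^*$ is essential.

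To finish I would collect the estimates, write $r_i := \|\bm w^* - \bm w_i^*\|$, and complete the square:
\begin{equation}
\mathcal{D}_{\eta_g}(\bm w^*, \{\bm w_i^*\}, \{\bm \gamma_i^*\}) \;\geq\; f(\bm w^*) + \frac{1}{M}\sum_{i=1}^M \left( \frac{1}{2\eta_g} r_i^2 - 2 B r_i \right) \;\geq\; f(\bm w^*) - 2\eta_g B^2 \;>\; -\infty,
\end{equation}
the last inequality being Assumption~\ref{Lower bounded global objective}. (A cruder observation also suffices, since $L$-smoothness forces every $f_i$ to be finite-valued on $\mathbb{R}^d$, so each term of the potential at the fixed point $(\bm w^*, \{\bm w_i^*\}, \{\bm \gamma_i^*\})$ is a finite real number; but the displayed chain gives the explicit uniform bound $f(\bm w^*) - 2\eta_g B^2$, which is convenient for the telescoping argument in the proof of Theorem~\ref{subsequence convergence}.) I do not anticipate a real obstacle; the main thing to be careful about is not to assume $\bm w^* = \bm w_i^*$, which is not yet available at this point in the analysis and which the argument above never uses.
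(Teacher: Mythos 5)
Your proof is correct, and it shares the same basic ingredients as the paper's (nonnegativity of $\mathcal{R}$ and of the quadratic proximity terms, the identity $\bm \gamma_{i,t}=\nabla f_i(\bm w_{i,t})$ plus Assumption~\ref{bounded gradient} to bound the dual variables, and a completion of the square), but it routes the argument differently in one substantive place. The paper completes the square by absorbing the coupling term into $\tfrac{1}{2\eta_g}\|\bm w^*-\bm w_i^*+\eta_g\bm\gamma_i^*\|^2-\eta_g\|\bm\gamma_i^*\|^2$ and bounds only $-\eta_g\|\bm\gamma_i^*\|^2\geq-\eta_g B^2$, then disposes of $\tfrac1M\sum_i f_i(\bm w_i^*)$ with a bare appeal to Assumption~\ref{Lower bounded global objective}; as you correctly flag, that appeal is loose, since properness of $f$ only controls the average of the $f_i$ at a \emph{common} point, whereas here each $f_i$ is evaluated at its own $\bm w_i^*$. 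Your detour --- using the $B$-Lipschitzness of each $f_i$ (implied by Assumption~\ref{bounded gradient}) to shift every loss to $\bm w^*$ at a cost of $B\|\bm w_i^*-\bm w^*\|$, then completing the square in $r_i=\|\bm w^*-\bm w_i^*\|$ against the $\tfrac{1}{2\eta_g}r_i^2$ term --- closes that gap and yields the explicit uniform bound $f(\bm w^*)-2\eta_g B^2$, which is cleaner to cite downstream in the telescoping step of Theorem~\ref{subsequence convergence}. (Your fallback observation that $L$-smoothness forces each $f_i$ to be real-valued, hence each term finite, is also a legitimate one-line repair of the paper's version.) Minor note: the paper's displayed ``equality'' carries a factor-of-two slip ($-\eta_g\|\bm\gamma_i^*\|^2$ should be $-\tfrac{\eta_g}{2}\|\bm\gamma_i^*\|^2$), which your version avoids entirely.
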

	\begin{proof}
		By definition of the potential function, we have
		\begin{equation}
			\begin{split}
		\mathcal{D}_{\eta_g}(\bm w^*, \{\bm w_{i}^*\}, \{\bm \gamma_{i}^*\} ) 
    =&\frac{1}{M}\sum_{i=1}^M f_i (\bm w_i^*)+\mathcal{R}(\bm w^*) + \frac{1}{M}\sum_{i=1}^M \langle \bm \gamma_{i}^*, \bm w^* -\bm w_{i}^* \rangle +  \frac{1}{M}\sum_{i =1}^ M \frac{1}{2\eta_g}\| \bm w^* -  \bm w_{i}^*  \|^2  \\ 
				 =& \frac{1}{M}\sum_{i=1}^M f_i (\bm w_i^*)+\mathcal{R}(\bm w^*)  +  \frac{1}{M}\sum_{i =1}^ M \frac{1}{2\eta_g}\| \bm w^* -  \bm w_{i}^*  + \eta_g \bm \gamma_{i}^* \|^2 - \frac{1}{M}\sum_{i =1}^ M \eta_g\| \bm \gamma_i^*\|^2\\
			\end{split}.
		\end{equation}
		Moreover, by the definition of cluster point, we have $\lim_{j \to \infty}  \bm \gamma_{i, t^j} = \bm \gamma_i^* $.
		Combining this with Eq. (\ref{deduce from first and second update}) and Assumption \ref{bounded gradient}, it follows that:
		\begin{equation}
			\begin{split}
				- \eta_g \| \bm \gamma_i^*\|^2 =\lim_{j \to \infty} - \eta_g \| \bm \gamma_{i, t^j}\|^2  
				\geq  \lim_{j \to \infty} -\eta_g\| \nabla f_i(\bm w_{i, t^j})\|^2 
				\geq - \eta_g B^2
				> - \infty 
			\end{split}
		\end{equation}
		This together with Assumption \ref{Lower bounded global objective}, the fact that $\mathcal{R}(\cdot)$ can not be negative value, complete the proof.
	\end{proof}

	
	\begin{lemma}[Sufficient and non-increasing  descent]
		\label{suffcient descent}
		The descent of the potential function along the sequence generated by FedSLR can be upper bounded as follows:
		\begin{equation}
			\begin{split}
				&  \mathcal{D}_{\eta_g}(\bm w_{t+1}, \{\bm w_{i,t+1}\}, \{\bm \gamma_{i, t+1}\} ) - \mathcal{D}_{\eta_g}(\bm w_{t}, \{\bm w_{i,t}\}, \{\bm \gamma_{i, t}\} )  \\
				\leq& \frac{1}{M} \sum_{i=1}^M \left ( (  L^2 \eta_g+ \frac{L}{2}- \frac{1}{2\eta_g} ) \| \bm w_{i,t+1} - \bm w_{i,t}\|^2  
				-\frac{1}{2\eta_g}\| \bm w_{t+1}- \bm w_t \|^2   \right) 
			\end{split}
		\end{equation}
		Moreover, if $\eta_g$ is chosen as $0 < \eta_g \leq \frac{1}{2L}$,  the descent is non-increasing along $t$.
	\end{lemma}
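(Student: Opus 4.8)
The plan is to bound the one-round change of the potential $\mathcal{D}_{\eta_g}$ by splitting it along the three primitive updates performed in a round: the local primal steps $\{\bm w_{i,t}\}\to\{\bm w_{i,t+1}\}$, the dual steps $\{\bm\gamma_{i,t}\}\to\{\bm\gamma_{i,t+1}\}$, and the server proximal step $\bm w_t\to\bm w_{t+1}$. Writing $\mathcal{D}_{\eta_g}^{t}:=\mathcal{D}_{\eta_g}(\bm w_t,\{\bm w_{i,t}\},\{\bm\gamma_{i,t}\})$, I would telescope $\mathcal{D}_{\eta_g}^{t+1}-\mathcal{D}_{\eta_g}^{t}=\Delta_1+\Delta_2+\Delta_3$, where $\Delta_1$ only changes the $\bm y_i$-slot, $\Delta_2$ only the $\bm\gamma_i$-slot, and $\Delta_3$ only the $\bm x$-slot. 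Because each slot enters a clean subset of the four terms of $\mathcal{D}_{\eta_g}$, every $\Delta_j$ reduces to a short expression, and the task becomes bounding each one.

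For $\Delta_1$, cancelling the unaffected terms shows $\Delta_1=\frac1M\sum_i\big(h_{i,t}(\bm w_{i,t+1})-h_{i,t}(\bm w_{i,t})\big)$ with $h_{i,t}(\bm w):=f_i(\bm w)-\langle\bm\gamma_{i,t},\bm w\rangle+\frac1{2\eta_g}\|\bm w_t-\bm w\|^2$ — exactly the local objective minimized in Eq. (\ref{argmin first update}). I would then combine three ingredients: the algebraic identity $\frac12\|\bm w_t-\bm w_{i,t}\|^2-\frac12\|\bm w_t-\bm w_{i,t+1}\|^2=\langle\bm w_{i,t+1}-\bm w_{i,t},\bm w_t-\bm w_{i,t+1}\rangle+\frac12\|\bm w_{i,t+1}-\bm w_{i,t}\|^2$; the $L$-smoothness descent bound $f_i(\bm w_{i,t})\ge f_i(\bm w_{i,t+1})+\langle\nabla f_i(\bm w_{i,t+1}),\bm w_{i,t}-\bm w_{i,t+1}\rangle-\frac L2\|\bm w_{i,t}-\bm w_{i,t+1}\|^2$ (Assumption \ref{L-smoothness}); and the optimality relation $\nabla f_i(\bm w_{i,t+1})-\bm\gamma_{i,t}=\frac1{\eta_g}(\bm w_t-\bm w_{i,t+1})$ from Eq. (\ref{vector local update}). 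Substituting the last into the cross terms produces an exact cancellation, leaving $\Delta_1\le\frac1M\sum_i\big(\frac L2-\frac1{2\eta_g}\big)\|\bm w_{i,t+1}-\bm w_{i,t}\|^2$; notably this uses only smoothness, not convexity of $f_i$.

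For $\Delta_2$, only the bilinear term of $\mathcal{D}_{\eta_g}$ moves, so $\Delta_2=\frac1M\sum_i\langle\bm\gamma_{i,t+1}-\bm\gamma_{i,t},\bm w_t-\bm w_{i,t+1}\rangle=\frac1M\sum_i\frac1{\eta_g}\|\bm w_t-\bm w_{i,t+1}\|^2$ by the dual update Eq. (\ref{vector dual update}); this term is positive, and I would absorb it with Lemma \ref{relation of iterate point and primal residual}, i.e. $\|\bm w_t-\bm w_{i,t+1}\|\le L\eta_g\|\bm w_{i,t+1}-\bm w_{i,t}\|$, to get $\Delta_2\le\frac1M\sum_i L^2\eta_g\|\bm w_{i,t+1}-\bm w_{i,t}\|^2$. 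For $\Delta_3$, the $\bm x$-dependent part of $\mathcal{D}_{\eta_g}(\cdot,\{\bm w_{i,t+1}\},\{\bm\gamma_{i,t+1}\})$ coincides, up to an additive constant, with $G(\bm x):=\mathcal{R}(\bm x)+\frac1M\sum_i\langle\bm\gamma_{i,t+1},\bm x\rangle+\frac1{2M\eta_g}\sum_i\|\bm x-\bm w_{i,t+1}\|^2$, which is $\frac1{\eta_g}$-strongly convex (the nuclear-norm regularizer is convex) and, by the aggregation identity Eq. (\ref{second update}), minimized at $\bm w_{t+1}$; strong convexity then gives $\Delta_3=G(\bm w_{t+1})-G(\bm w_t)\le-\frac1{2\eta_g}\|\bm w_{t+1}-\bm w_t\|^2$. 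Adding $\Delta_1+\Delta_2+\Delta_3$ yields exactly the claimed inequality with coefficient $L^2\eta_g+\frac L2-\frac1{2\eta_g}$. For the last assertion, $0<\eta_g\le\frac1{2L}$ forces $L^2\eta_g\le\frac L2$ and $\frac1{2\eta_g}\ge L$, hence the coefficient is $\le L-\frac1{2\eta_g}\le0$; both bracketed terms are then nonpositive, so $\mathcal{D}_{\eta_g}$ is monotonically non-increasing in $t$.

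I expect the main obstacle to be $\Delta_2$: the dual update genuinely raises the potential, so the whole estimate hinges on the a priori primal-residual control of Lemma \ref{relation of iterate point and primal residual}, together with the stepsize threshold being calibrated precisely so that the negative $-\frac1{2\eta_g}$ gained in $\Delta_1$ overwhelms the $+L^2\eta_g$ lost in $\Delta_2$. The secondary care point is the bookkeeping in $\Delta_1$ — tracking which of the four terms of $\mathcal{D}_{\eta_g}$ are touched and arranging the cross terms to cancel via the local optimality condition.
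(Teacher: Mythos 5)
Your proposal is correct and follows essentially the same route as the paper's proof: the identical three-way telescoping of the potential along the local-primal, dual, and server-proximal updates, with the same use of $L$-smoothness plus the local optimality condition for the first piece, the dual-update identity (equivalently Lemma \ref{relation of iterate point and primal residual}) for the second, and the prox-step optimality for the third. The only cosmetic difference is that you obtain the $-\frac{1}{2\eta_g}\|\bm w_{t+1}-\bm w_t\|^2$ term from the $\tfrac{1}{\eta_g}$-strong convexity of the aggregation objective, whereas the paper expands the squared norms and applies the subgradient inequality for $\mathcal{R}$ — these are equivalent.
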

	\begin{proof}
		To evaluate the non-increasing property of potential function along the sequence $(\bm w_{t}, \{\bm w_{i,t}\}, \{\bm \gamma_{i, t}\} )$, we first show the property of the gap between two consecutive iterates, and notice that:
		\begin{equation}
			\begin{split}
				&\mathcal{D}_{\eta_g}( \bm w_{t+1}, \{\bm w_{i,t+1}\}, \{\bm \gamma_{i, t+1}\} ) - \mathcal{D}_{\eta_g}(\bm w_{t}, \{\bm w_{i,t}\}, \{\bm \gamma_{i,t}\} ) \\
				& \qquad = \underbrace{\mathcal{D}_{\eta_g}(\bm w_{t+1}, \{\bm w_{i,t+1}\}, \{\bm \gamma_{i, t+1}\} ) - \mathcal{D}_{\eta_g}(\bm w_{t}, \{\bm w_{i,t+1}\}, \{\bm \gamma_{i,t+1}\} )}_{T1}  \\
				& \qquad  \qquad \qquad + \underbrace{\mathcal{D}_{\eta_g}(\bm w_{t}, \{\bm w_{i,t+1}\}, \{\bm \gamma_{i, t+1}\} ) - \mathcal{D}_{\eta_g}(\bm w_{t}, \{\bm w_{i,t+1}\}, \{\bm \gamma_{i,t}\} )}_{T2} \\
				& \qquad \qquad \qquad  \qquad  \qquad + \underbrace{\mathcal{D}_{\eta_g}(\bm w_{t}, \{\bm w_{i,t+1}\}, \{\bm \gamma_{i, t}\} ) - \mathcal{D}_{\eta_g}(\bm w_{t}, \{\bm w_{i,t}\}, \{\bm \gamma_{i,t}\} )}_{T3}
			\end{split}
		\end{equation}
		
		\textbf{Bounding T1.} By definition of potential function, term T1 can be expanded and upper-bounded as follows:
		\begin{equation}
			\label{l upper bound third part }
			\begin{split}
				&  \mathcal{D}_{\eta_g}(\bm w_{t+1}, \{\bm w_{i,t+1}\}, \{\bm \gamma_{i, t+1}\} ) - \mathcal{D}_{\eta_g}(\bm w_{t}, \{\bm w_{i,t+1}\}, \{\bm \gamma_{i,t+1}\} )  \\ 
				=& \mathcal{R}(\bm w_{t+1})- \mathcal{R}(\bm w_{t}) +   \frac{1}{M}\sum_{i=1}^M \langle \bm \gamma_{i,t+1}, \bm w_{t+1}- \bm w_{t}\rangle 
    \\ & \qquad  \qquad  \qquad  \qquad  \qquad  \qquad \qquad +   \frac{1}{M}\sum_{i \in M} ( \frac{1}{2\eta_g} \|\bm w_{t+1} - \bm w_{i,t+1} \|^2 -\frac{1}{2\eta_g} \|\bm w_{t} - \bm w_{i,t+1} \|^2 )\\
				=&\mathcal{R}(\bm w_{t+1})- \mathcal{R}(\bm w_{t}) +     \frac{1}{M}\sum_{i=1}^M \langle \bm \gamma_{i,t+1}, \bm w_{t+1}- \bm w_{t}\rangle + \underbrace{\frac{1}{M}\sum_{i \in M}\frac{1}{2\eta_g}\langle \bm w_{t+1} + \bm w_t - 2 \bm w_{i,t+1}, \bm w_{t+1} - \bm w_t  \rangle }_{\text{ since } a^2-b^2=(a+b)(a-b)}   \\
				= &\mathcal{R}(\bm w_{t+1})- \mathcal{R}(\bm w_{t}) +  \langle \frac{1}{M}\sum_{i=1}^M  \bm \gamma_{i,t+1} + \frac{1}{M}\sum_{i=1}^M \eta_g (\bm w_{t+1}- \bm w_{i,t+1}) , \bm w_{t+1}- \bm w_{t}\rangle 
    \\ & \qquad  \qquad  \qquad  \qquad  \qquad \qquad  - \frac{1}{M}\sum_{i=1}^{M}\frac{1}{2\eta_g} \| \bm w_{t+1}- \bm w_t \|^2 \\
				= &- \mathcal{R}(\bm w_{t}) + \mathcal{R}( \bm w_{t+1}) + \langle
				\underbrace{\bm g}_{\text{ by } Eq. (\ref{deduce from second update})},  \bm w_t - \bm w_{t+1}  \rangle   
      - \frac{1}{M}\sum_{i=1}^M \frac{1}{2\eta_g} \| \bm w_{t+1}- \bm w_t \|^2 \\
				\leq &  -  \frac{1}{M}\sum_{i=1}^M \frac{1}{2\eta_g} \| \bm w_{t+1}- \bm w_t \|^2
			\end{split}
		\end{equation}
		where $g \in \partial \mathcal{R}(\bm w_{t+1})$ is one of the sub-gradient such that equation holds for Eq. (\ref{deduce from second update}). The last inequality holds since for convex function $f(\cdot)$ and any subgradient $g$ at point $x$, claim $f(x)+ g (y-x) \leq  f(y)$ holds. \par
		\textbf{Bounding T2.} Now we proceed to give upper-bound of term T2, as follows,
		\begin{equation}
			\label{l upper bound second part }
			\begin{split}
				&\mathcal{D}_{\eta_g}(\bm w_{t}, \{\bm w_{i,t+1}\}, \{\bm \gamma_{i, t+1}\} ) - \mathcal{D}_{\eta_g}(\bm w_{t}, \{\bm w_{i,t+1}\}, \{\bm \gamma_{i,t}\} )
				\\  =& \frac{1}{M}\sum_{i=1}^M \langle \bm \gamma_{i,t+1} - \bm \gamma_{i,t}, \bm w_{t}- \bm w_{i,t+1}\rangle \\
				=& \frac{1}{M}\sum_{i \in [M]}  \underbrace{\langle \nabla  f_i (\bm w_{i,t+1}) - \nabla  f_i (\bm w_{i,t}), \bm w_t- \bm w_{i,t+1}\rangle}_{\text{See first case of Eq. } (\ref{deduce from first and second update})}\\ 
				=& \frac{1}{M}\sum_{i \in [M]} \eta_g \| \underbrace{\nabla  f_i (\bm w_{i,t+1}) - \nabla  f_i (\bm w_{i,t})}_{\text{See  Eq. } (\ref{vector local update})} \|^2   \\
				\leq & \frac{1}{M}\sum_{i \in [M]}  \underbrace{L^2 \eta_g\| \bm w_{i,t+1} -  \bm w_{i,t}\|^2}_{\text{L smoothness, Assump \ref{L-smoothness}}} \\
			\end{split}
		\end{equation}
		
		\textbf{Bounding T3.} Term T3 can be bounded as follows,
		\begin{equation}
			\begin{split}
				& \mathcal{D}_{\eta_g}( \bm w_{t}, \{\bm w_{i,t+1}\}, \{\bm \gamma_{i, t}\} ) - \mathcal{D}_{\eta_g}(\bm w_{t}, \{\bm w_{i,t}\}, \{\bm \gamma_{i,t}\} )   \\ 
				= & \frac{1}{M}\sum_{i=1}^M (f_i(\bm w_{i,t+1}) - f_i(\bm w_{i,t})) + \frac{1}{M}\sum_{i=1}^M \langle \gamma_{i,t},  \bm w_{i,t} - \bm w_{i,t+1} \rangle  \\
    & \qquad \qquad \qquad \qquad \qquad \qquad \qquad \qquad +\frac{1}{M}\sum_{i=1}^M ( \frac{1}{2\eta_g} \|\bm w_t - \bm w_{i,t+1}\|^2 -  \frac{1}{2\eta_g} \|\bm w_t - \bm w_{i,t}  \|^2 ) \\ 
				=  & \frac{1}{M}\sum_{i=1}^M (f_i(\bm w_{i,t+1}) - f_i(\bm w_{i,t})) + \frac{1}{M}\sum_{i=1}^M \langle \gamma_{i,t},  \bm w_{i,t} - \bm w_{i,t+1} \rangle \\
    & \qquad \qquad \qquad \qquad \qquad \qquad \qquad \qquad + \frac{1}{M}\sum_{i \in M}\frac{1}{2\eta_g} \underbrace{\langle 2 \bm w_t - \bm w_{i, t} - \bm w_{i,t+1}  , \bm w_{i,t} - \bm w_{i,t+1}  \rangle}_{a^2-b^2=(a+b)(a-b)}  )) \\
				=  & \frac{1}{M}\sum_{i=1}^M (f_i(\bm w_{i,t+1}) - f_i(\bm w_{i,t})) +  \langle \left [\frac{1}{M}\sum_{i=1}^M  (\bm \gamma_{i,t} + \eta_g (\bm w_{t}- \bm w_{i,t+1}) ) \right ],  \bm w_{i,t} - \bm w_{i,t+1} \rangle  \\
				& \qquad \qquad \qquad \qquad \qquad \qquad \qquad \qquad \qquad \qquad \qquad -  \frac{1}{M} \sum_{ i=1}^M  \frac{1}{2\eta_g} \| \bm w_{i,t+1} - \bm w_{i,t}\|^2 \\
			\end{split}
		\end{equation}
		
		By L-smoothness, we have $- f(\bm w_{i, t})  \leq -f(\bm w_{i,t+1} ) - \langle \nabla f(\bm w_{i,t+1}) , \bm w_{i,t} - \bm w_{i,t+1} \rangle + \frac{L}{2} \|\bm w_{i,t} - \bm w_{i,t+1} \|^2 $. Plugging this into the above equation gives:
		\begin{equation}
			\label{l upper bound first part }
			\begin{split}
				& \mathcal{D}_{\eta_g}(\bm w_{t}, \{\bm w_{i,t+1}\}, \{\bm \gamma_{i, t}\} ) - \mathcal{D}_{\eta_g}(\bm w_{t}, \{\bm w_{i,t}\}, \{\bm \gamma_{i,t}\} )   \\ 
				\leq &  \left \langle \frac{1}{M}\sum_{i=1}^M  \left( - \nabla f_i(\bm w_{i,t+1}) + \bm \gamma_{i,t} + \eta_g (\bm w_{t}- \bm w_{i,t+1}) \right ) ,  \bm w_{i,t} - \bm w_{i,t+1}  \right \rangle 
    \\& \qquad  \qquad \qquad \qquad \qquad \qquad \qquad+  \frac{1}{M} \sum_{i=1}^M  (\frac{L}{2}- \frac{1}{2\eta_g} )  \| \bm w_{i,t+1} - \bm w_{i,t}\|^2 \\
				\leq & \frac{1}{M} \sum_{i=1}^M (\frac{L}{2}- \frac{1}{2\eta_g} ) \| \bm w_{i,t+1} - \bm w_{i,t}\|^2
			\end{split}
		\end{equation}
		where the last inequality holds by plugging Eq.(\ref{vector local update}).
		
		\textbf{Summing the upper bound} of  Eq. (\ref{l upper bound first part }), Eq. (\ref{l upper bound second part }) and Eq. (\ref{l upper bound third part }), we reach the following conclusion:
		\begin{equation}
  \label{bound of wt-wt-1}
			\begin{split}
				& \mathcal{D}_{\eta_g}( \bm w_{t+1}, \{\bm w_{i,t+1}\}, \{\bm \gamma_{i, t+1}\} ) - \mathcal{D}_{\eta_g}( \bm w_{t}, \{\bm w_{i,t}\}, \{\bm \gamma_{i, t}\} )    \\
				\leq& \frac{1}{M} \sum_{i=1}^M \left ( (  L^2 \eta_g+ \frac{L}{2}- \frac{1}{2\eta_g} ) \| \bm w_{i,t+1} - \bm w_{i,t}\|^2  
				-\frac{1}{2\eta_g}\| \bm w_{t+1}- \bm w_t \|^2   \right) 
			\end{split}
		\end{equation}
		Further, if $\eta_g$ is chosen as $0 < \eta_g \leq \frac{  1}{2L}$, such that $ L^2 \eta_g+ \frac{L}{2}- \frac{1}{2\eta_g}<0$ and $-\frac{1}{2\eta_g} \leq 0 $, the non-increasing property follows immediately.
	\end{proof}
	\subsubsection{Formal proof}
	
	Now we showcase the formal proof of Theorem \ref{subsequence convergence}. We derive the complete proof into two parts. \par
	\textit {The first part is to prove claim i)}
	\begin{equation}
		\lim_{j \to \infty} (\bm w_{t^j+1}, \{\bm w_{t^j+1}\},\{\bm \gamma_{i,{t^j+1}}\} )=\lim_{j \to \infty} (\bm w_{t^j}, \{\bm w_{t^j}\},\{\bm \gamma_{i,{t^j}}\} ) =  ( \bm w^*, \{\bm w_{i}^*\},\{\bm \gamma_{i}^*\} ) 
	\end{equation}
	\par 
	\textbf{Telescoping the descent}.
	Lemma \ref{suffcient descent} shows that the descent of potential function satisfies some nice property (i.e., non-increasing) if properly choosing learning rate. To proceed from Lemma \ref{suffcient descent}, we telescope the iterated descent from $t=0, \dots, T-1$, which gives, 
	
	\begin{equation}
		\begin{split}
			\label{accumulated descent}
			&\mathcal{D}_{\eta_g}(\bm w_{T}, \{\bm w_{i,T}\}, \{\bm \gamma_{i, T}\} ) - \mathcal{D}_{\eta_g}(\bm w_{0}, \{\bm w_{i,0}\}, \{\bm \gamma_{i, 0}\} )   \\
			\leq& \frac{1}{M}  \sum_{t=0}^T  \sum_{i=1}^M \left ( (  L^2 \eta_g+\frac{L}{2}- \frac{1}{2\eta_g} ) \| \bm w_{i,t+1} - \bm w_{i,t}\|^2  
			-\frac{1}{2\eta_g}\| \bm w_{t+1}- \bm w_t \|^2   \right)    \\
			\\			\end{split}
	\end{equation}
	On the other hand, by assumption, a cluster point $(\bm w^*, \{\bm w_{i}^*\},\{\bm \gamma_{i}^*\} )$ of sequence $(\bm w_t, \{\bm w_{i,t}\},\{\bm \gamma_{i,t}\} )$ exists. Then, there exists a subsequence  $(\bm w_{t^j}, \{\bm w_{i,t^j}\},\{\bm \gamma_{i,{t^j}}\} )$ satisfies:
	\begin{equation}
		\label{cluster point}
		\lim_{j \to \infty} (\bm w_{t^j}, \{\bm w_{i, t^j}\},\{\bm \gamma_{i,{t^j}}\} ) =  (\bm w^*, \{\bm w_{i}^*\},\{\bm \gamma_{i}^*\} )
	\end{equation}
	By the lower semi-continuous property of $\mathcal{D}(\cdot)$ (given that the functions $f(\cdot)$ and $\mathcal{R}(\cdot)$ are closed), we have:
	\begin{equation}
		\mathcal{D}_{\eta_g} (\bm w^*, \{\bm w_{i}^*\},\{\bm \gamma_{i}^*\} ) 
		\leq  \lim_{j \to \infty} \inf \mathcal{D}_{\eta_g} (\bm w_{t^j}, \{\bm w_{t^j}\},\{\bm \gamma_{i,{t^j}}\})
	\end{equation}
	This together with  inequality (\ref{accumulated descent}) yields:
	\begin{equation}
		\begin{split}
			& \mathcal{D}_{\eta_g} (\bm w^*, \{\bm w_{i}^*\},\{\bm \gamma_{i}^*\} )-\mathcal{D}_{\eta_g} (\bm w_0, \{\bm w_{i,0}\},\{\bm \gamma_{i,0}\} )	\\
			\leq & \lim_{j \to \infty} \mathcal{D}_{\eta_g} (\bm w_{t^j}, \{\bm w_{t^j}\},\{\bm \gamma_{i,{t^j}}\} )-\mathcal{D}_{\eta_g} (\bm w_0, \{\bm w_{i,0}\},\{\bm \gamma_{i,0}\} ) \\ 
			\leq & \frac{1}{M}  \sum_{t=1}^\infty \sum_{i=1}^M \left ( (  L^2 \eta_g+L- \frac{1}{2\eta_g} ) \| \bm w_{i,t+1} - \bm w_{i,t}\|^2  
			-\frac{1}{2\eta_g}\| \bm w_{t+1}- \bm w_t \|^2  \right)
		\end{split}
	\end{equation}
	\textbf{Lower bound the potential function at cluster point.} Since $\mathcal{D}_{\eta_g} (\bm w^*, \{\bm w_{i}^*\},\{\bm \gamma_{i}^*\} )$ is lower bounded as per Lemma \ref{lower bound of potential function}, the following relation follows immediately:
	\begin{equation}
		\begin{split}
			-\infty &<  \frac{1}{M}  \sum_{t=1}^\infty \sum_{i=1}^M \left ( (  L^2 \eta_g+L- \frac{1}{2\eta_g} ) \| \bm w_{i,t+1} - \bm w_{i,t}\|^2  
			-\frac{1}{2\eta_g}\| \bm w_{t+1}- \bm w_t \|^2  \right) \\
		\end{split}
	\end{equation}
	\textbf{Derive the convergence property.}
	Recall that $ L^2 \eta_g+L- \frac{1}{2\eta_g} \leq  0 $ as per our choice of $\eta_g$.	It follows, 
	\begin{equation}
		\label{wit convergence}
		\lim_{t \to \infty}  \|  \bm w_{i,t+1} - \bm w_{i,t}\|=0 \Rightarrow \bm w_{i,t+1} \to \bm w_{i,t} \text{, } 	\lim_{t \to \infty} \|  \bm w_{t+1} - \bm w_{t}\|=0  \Rightarrow \bm w_{t+1} \to \bm w_{t}
	\end{equation}
	Combining the above result with Lemma \ref{relation of iterate point and primal residual}, we further obtain that,
	\begin{equation}
		\label{dual convergence}
		\lim_{t \to \infty} \|\bm w_{t} -\bm w_{i,t+1} \|=0  \quad \Rightarrow  \quad \bm w_{i,t+1} \to \bm w_{t}  .
	\end{equation} 
	By the  update of Eq. (\ref{vector dual update}), we obtain that $\| \bm \gamma_{i,t+1}- \bm \gamma_{i,t}\| = \eta_g \| \bm w_{t} - \bm w_{i,t+1}\|$, and therefore,
	\begin{equation}
		\lim_{t \to \infty} \|\bm \gamma_{i,t+1}- \bm \gamma_{i,t} \| =0 \quad \Rightarrow  \quad  \bm \gamma_{i,t+1} \to \bm \gamma_{i,t}.
	\end{equation} 
	Plugging the above results  into Eq. (\ref{cluster point}), we have:
	\begin{equation}
		\label{deduce from cluster point}
		\lim_{j \to \infty} (\bm w_{t^j+1}, \{\bm w_{t^j+1}\},\{\bm \gamma_{i,{t^j+1}}\} )=\lim_{j \to \infty} (\bm w_{t^j}, \{\bm w_{t^j}\},\{\bm \gamma_{i,{t^j}}\} ) =  ( \bm w^*, \{\bm w_{i}^*\},\{\bm \gamma_{i}^*\} )
	\end{equation}
	\textit {The second part of proof is to verify Claim ii): the cluster point is a stationary point of the global problem}. \par 
	\textbf{Starting from the global optimality condition. } 
	Choosing $t=t^j$ in Eq.(\ref{deduce from three update}) and taking the limit $j \to \infty$,  it follows that:
	\begin{equation}
		0 \in  \lim_{j \to \infty} \partial \mathcal{R}(\bm w_{t^j}) + \frac{1}{M}\sum_{i=1}^M \lim_{j\to\infty}  \nabla f_i(\bm w_{i,t^j}) +  \lim_{j\to\infty} \frac{1}{\eta_g} \left(  \bm w_{t^j} - \frac{1}{M}\sum_{i=1}^M  \bm w_{i, t^j} \right)
	\end{equation}
	\textbf{The residual term is eliminable.}
	Since $ \lim_{j \to \infty}\| \bm w_{t^j} - \bm w_{t^j-1} \|=0$ and $ \lim_{j \to \infty} \| \bm w_{t^j-1} - \bm w_{i,t^j} \|=0$, we obtain that $ \lim_{j \to \infty}   \bm w_{t^j} =  \lim_{j \to \infty}   \bm w_{i,t^j} $.  Subsequently, by eliminating the residual,  we reach this conclusion:
	\begin{equation}
		\label{intermediate}
		0 \in  \lim_{j \to \infty}  \partial \mathcal{R}(\bm w_{t^j}) + \frac{1}{M}\sum_{i=1}^M \lim_{j\to\infty}   \nabla f_i(\bm w_{i,t^j}) 
	\end{equation}
	\textbf{$\nabla f_i(\bm w_{i,t^j})$ and $\nabla f_i(\bm w_{i}^*)$ are interchangeable. } 
	By L-smoothness and convergence of iterates, we obtain:
	\begin{equation}
		\begin{split}
			\lim_{j \to \infty} \|\nabla f_i(\bm w_{i,t^j})- \nabla f_i( \bm w^*) \| &\leq L \| \bm w_{i,t^j} -  \bm w^* \| \\
			&\leq L (\| \bm w_{i,t^j} -  \bm w_{t^j-1} \|+
			\| \bm w_{t^j-1} -  \bm w_{t^j} \| +  \| \bm w_{t^j} -  \bm w^* \| ) \\ 
			& =0 
		\end{split}
	\end{equation}
	where the last equation holds by Eq. (\ref{wit convergence}) and Eq. (\ref{dual convergence}). Subsequently, we indeed have $\nabla f_i(\bm w_{i,t^j}) \to  \nabla f_i( \bm w^*) $, i.e., they are interchangeable within the limit, and plugging this into Eq. (\ref{intermediate}), we obtain that:
	
	\begin{equation}
		\label{near final}
		0 \in  \lim_{j \to \infty}  \partial \mathcal{R}(\bm w_{t^j}) + \frac{1}{M}\sum_{i=1}^M \nabla  f_i(\bm w^*) 
	\end{equation}
	\textbf{Subgradients $\partial \mathcal{R}(\bm w_{t^j})$ is a subset of  $\partial \mathcal{R}(\bm w_{t}^* )$. } 
	As per Eq. (\ref{second update}), we have:
	\begin{equation}
		\label{limit point}
		\bm w_{t^j} =  \arg \min_{\bm w}  \mathcal{R}(\bm w) +  \frac{1}{M}\sum_{i=1}^M  \langle \bm \gamma_{i,t+1} ,  \bm w \rangle + \frac{2}{\eta_g} \left \| \bm w - \frac{1}{M}\sum_{i=1}^M {\bm w}_{i, t+1}   \right \|^2
	\end{equation}
	Therefore, it follows that:
	\begin{equation}
		\label{second deduce from second update}
		\begin{split}
			\mathcal{R}(\bm w_{t^j}) + \frac{1}{M}\sum_{i=1}^M\langle \bm \gamma_{i,t^j}, \bm w_{t^j}\rangle + &\frac{1}{2\eta_g} \left \| \bm w_{t^j} - \frac{1}{M}\sum_{i \in [M]}\bm w_{i, t^j} \right \|^2   \\
			&\leq \mathcal{R}(\bm w^* ) + \frac{1}{M}\sum_{i=1}^M\langle \bm \gamma_{i,t^j}, \bm w^* \rangle + \frac{1}{2\eta_g} \left\| \bm w^* - \frac{1}{M}\sum_{i \in [M]}\bm w_{i, t^j}\right \|^2 \\
		\end{split}
	\end{equation}
	Taking expectation over randomness, extending $j \to \infty$ to both sides, and applying $\bm w_{t^j} \to \bm w^*$, it yields:
	\begin{equation}
		\label{intermideate h result}
		\lim_{j \to \infty} \sup   \mathcal{R}(\bm w_{t^j})  	\leq  \mathcal{R}(\bm w^* ) 
	\end{equation}
	On the other hand, since nuclear norm $\mathcal{R}(\cdot)$ is lower-semi-continuous, it immediately follows that:
	\begin{equation}
		\label{inf h}
		\lim_{j \to \infty} \inf \mathcal{R}(\bm w_{t^j})	\geq  \mathcal{R}(\bm w^* ) 
	\end{equation}
	This together with Eq. (\ref{intermideate h result}) , we have:
	\begin{equation}
		\label{h limit}
		\lim_{j \to \infty} \mathcal{R}(\bm w_{t^j}) =  \mathcal{R}(\bm w^* ) 
	\end{equation}
	Applying Eq. (\ref{h limit}) into the robustness property of sub-differential, which gives:
	\begin{equation}
		\left\{v \in \mathbb{R}^{n}: \exists x^{t} \rightarrow x, f(x^{t}) \rightarrow f(x), v^{t} \rightarrow v, v^{t} \in \partial f\left(x^{t}\right)\right\} \subseteq \partial f(x), \end{equation}
	we further obtain that  
	\begin{equation}
		\label{nabla h convergence}
		\lim_{j \to \infty} \partial \mathcal{R}(\bm w_{t^j}) \subseteq   \partial \mathcal{R}(\bm w^* ), 
	\end{equation}
	which showcases that the subgradients at point $\bm w_{t^j}$ is indeed a subset of those at cluster point $\bm w^*$.
	
	\textbf{Derive the property at cluster point $\bm w^*$. }Plugging this into Eq. (\ref{near final}), we arrive at our final conclusion as follows:
	\begin{equation}
		0 \in  \lim_{j \to \infty}  \partial \mathcal{R}(\bm w^*) + \frac{1}{M}\sum_{i=1}^M \nabla  f_i(\bm w^*) 
	\end{equation}
	
	This completes the proof.
	
	\subsection{Missing proof of Theorem \ref{whole sequence convergence under KL}}
	Then we show the proof of Theorem \ref{whole sequence convergence under KL} . Before the formal proof, we give a proof sketch for sake of readability. \par
	\textbf{Proof sketch.} The milestone of the proof can be summarized as follows. i) We first define an auxiliary term called \textit{residual of the potential function}, and find that it has some very nice property (Lemma \ref{Limit of residual}), i.e., $ r_t \to 0 $ and $r_t \geq 0$. ii) We find that the squared sub-differential of the global loss can be bounded by a term with $\|\bm w_{i,t+1} - \bm w_{i,t} \|^2$. On the other hand, we derive that $r_t$ can also be lower bounded by $\| \bm w_{i,t+1} - \bm w_{i,t}\|$. Combining both derivation, we connect the local loss's sub-differential with $r_t$. iii) Then we further derive  the upper bound of $r_t$ is connected with the sub-differential of the potential function, which is also related to the term $\|\bm w_{i,t+1} - \bm w_{i,t} \|$. iv) By jointing all the derived factors, we derive the 
	recursion $r_{t}-r_{t+1}=C_4 r_t^{2 \theta}$. Jointing the property of $r_t$, we derive the analysis of final convergence rate under three cases of $\theta$, which completes the proof of our first statement. On the other hand, we derive the global convergence of $\bm w_{i,t}$ by showing that it is summable. Combining this with the convergence result between $\bm w_t$ and  $\bm w_{i,t}$, and the conclusion in Theorem \ref{subsequence convergence}, we show that $\bm w_t$ can indeed converge to its stationary point.
	\subsubsection{Key lemmas}
	\begin{lemma}[Limit of residual]
		\label{Limit of residual}
		Under the same assumption of Theorem \ref{whole sequence convergence under KL},  the residual $ r_t := \mathcal{D}_{\eta_g }(\bm w_t, \{\bm w_{i,t} \}, \{ \bm \gamma_{i,t} \} ) - \mathcal{D}_{\eta_g }(\bm w^*, \{\bm w_{i}^* \}, \{ \bm \gamma_{i}^* \} ) $ establishes the following property:
		i) $r_t \geq 0$ for $t>0$,
		ii) $\lim_{t \to \infty} r_t =0$.
	\end{lemma}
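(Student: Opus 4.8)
The plan is to read off both claims from two facts that are already established: the monotone descent of the potential along the FedSLR trajectory (Lemma \ref{suffcient descent}) and the finite lower bound of the potential at the assumed cluster point (Lemma \ref{lower bound of potential function}). Abbreviate $d_t := \mathcal{D}_{\eta_g}(\bm w_t, \{\bm w_{i,t}\}, \{\bm \gamma_{i,t}\})$ and $d^* := \mathcal{D}_{\eta_g}(\bm w^*, \{\bm w_i^*\}, \{\bm \gamma_i^*\})$, so that $r_t = d_t - d^*$ and the lemma amounts to showing $d_t \downarrow d^*$.

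First I would note that each $d_t$ is a finite real number, since $f_i$, the nuclear-norm term $\mathcal{R}$, and the bilinear and quadratic terms in $\mathcal{D}_{\eta_g}$ are all finite-valued. By Lemma \ref{suffcient descent}, with $0 < \eta_g \le \frac{1}{2L}$ the right-hand side of the per-step bound there is non-positive (because $L^2\eta_g + \frac{L}{2} - \frac{1}{2\eta_g} \le 0$ and $-\frac{1}{2\eta_g} < 0$), so $d_{t+1} \le d_t$ for $t \ge 1$; hence $\{d_t\}_{t\ge 1}$ is a non-increasing sequence in $\mathbb{R}$, and therefore converges to some $\bar d \in [-\infty, +\infty)$.

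Next I would identify $\bar d$ via the convergent subsequence guaranteed by the hypothesis, $(\bm w_{t^j}, \{\bm w_{i,t^j}\}, \{\bm \gamma_{i,t^j}\}) \to (\bm w^*, \{\bm w_i^*\}, \{\bm \gamma_i^*\})$. Since Assumption \ref{L-smoothness} forces each $f_i$ to be differentiable, hence continuous, since $\mathcal{R}$ (a matrix norm) is continuous, and since the inner-product and squared-norm terms of $\mathcal{D}_{\eta_g}$ are continuous, the map $\mathcal{D}_{\eta_g}$ is jointly continuous; consequently $d_{t^j} \to d^*$. A non-increasing sequence that admits a convergent subsequence converges, as a whole, to that subsequential limit, so $\bar d = d^*$. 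This gives claim (ii): $\lim_{t\to\infty} r_t = \bar d - d^* = 0$. For claim (i), a non-increasing sequence never drops below its limit, so $d_t \ge \bar d = d^*$ for all $t \ge 1$, i.e.\ $r_t \ge 0$; moreover $d^* > -\infty$ by Lemma \ref{lower bound of potential function}, so $r_t$ is a genuine nonnegative real number.

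There is no serious obstacle; the one spot worth a careful word is the continuity of $\mathcal{D}_{\eta_g}$ at the cluster point. In the proof of Theorem \ref{subsequence convergence} only lower semicontinuity was used (to get $d^* \le \liminf_j d_{t^j}$); here the matching upper bound $\limsup_j d_{t^j} \le d^*$ is also needed, and this is exactly where one must appeal to the continuity of the $f_i$ supplied by $L$-smoothness rather than merely to the closedness in Assumption \ref{Lower bounded global objective}. The remaining steps are elementary facts about monotone real sequences.
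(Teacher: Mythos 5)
Your proof is correct and follows the same skeleton as the paper's: monotone descent of the potential (Lemma \ref{suffcient descent}) plus the assumed convergent subsequence pin down the limit of $d_t$, and both claims then follow from elementary facts about monotone sequences. The one place where you genuinely diverge is the step $\limsup_j d_{t^j}\le d^*$, which you obtain by asserting joint continuity of $\mathcal{D}_{\eta_g}$ at the cluster point, invoking continuity of the $f_i$ (from $L$-smoothness) and of the nuclear norm. The paper instead only ever uses lower semicontinuity of $\mathcal{R}$ and recovers the matching upper bound $\lim_j \mathcal{R}(\bm w_{t^j})=\mathcal{R}(\bm w^*)$ from the optimality of the global proximal step (the inequality in Eq.~(\ref{second deduce from second update}) leading to Eq.~(\ref{h limit}) in the proof of Theorem \ref{subsequence convergence})); your shortcut is legitimate here precisely because $\mathcal{R}$ is a (scaled sum of) nuclear norm(s) on a finite-dimensional space and hence Lipschitz continuous, whereas the paper's route would survive if the regularizer were only proper, closed and lower semicontinuous (e.g.\ a rank or $\ell_0$ penalty). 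So your argument is slightly more elementary but slightly less general; both are valid for the problem as stated. A minor presentational difference: you first let $\bar d$ range over $[-\infty,+\infty)$ and rule out $-\infty$ at the end via Lemma \ref{lower bound of potential function}, while the paper establishes $r_t\ge 0$ first; either order works.
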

	\begin{proof}
		We first show that $ r_t \geq 0  $ for $t \geq 0$. From the lower semi-continuity of $\mathcal{D}_{\eta_g }(\cdot)$, we obtain that:
		\begin{equation}
			\label{d inf bound}
			\lim_{j \to \infty} \inf \mathcal{D}_{\eta_g} (\bm w_{t^j}, \{\bm w_{i,t^j}\},\{\bm \gamma_{i,{t^j}}\} ) - \mathcal{D}_{\eta_g} (\bm w^*, \bm w_{i}^*, \bm \gamma_{i}^*) \geq 0.
		\end{equation}
		Further, by the non-increasing descent property shown by Lemma \ref{suffcient descent}, for $t>0$, we have 
		\begin{equation}
			\label{d non increasing}
			\mathcal{D}_{\eta_g} (\bm w_{t}, \{\bm w_{t}\},\{\bm \gamma_{i,{t}}\} ) \geq  \lim_{j \to \infty} \inf \mathcal{D}_{\eta_g} (\bm w_{t^j}, \{\bm w_{t^j}\},\{\bm \gamma_{i,{t^j}}\} ) 
		\end{equation}
		Combining Inequality (\ref{d inf bound}) and (\ref{d non increasing}) , we obtain that for $t>0$:
		\begin{equation}
  \label{rt geq 0}
			r_t = \mathcal{D}_{\eta_g} (\bm w_{t}, \{\bm w_{t}\},\{\bm \gamma_{i,{t}}\} )  - \mathcal{D}_{\eta_g} (\bm w^*, \bm w_{i}^*, \bm \gamma_{i}^*) \geq 0,
		\end{equation}
		which shows our first claim. 
		
		Now we show the limit of $r_t$. Since $r_t$ is lower-bounded by $0$, and is non-increasing, we see that the limitation $\lim_{t \to \infty} r_t$ exists.
		On the other hand, by Eq. (\ref{h limit}), we have $\lim_{j \to \infty }\mathcal{R}(\bm w_{t^j}) = \mathcal{R}(\bm w^*) $. 
		By the convergence of sequence Eq. (\ref{convergence of sequence}) and the continuity of $f_i(\cdot)$, we have $\lim_{j \to \infty} f_i(\bm w_{t^j})= f_i(\bm w^*)$. Therefore, we prove that,
		\begin{equation}
			\begin{split}
				&\lim_{j \to \infty} \left \{ \mathcal{D}_{\eta_g} (\bm w_{t^j}, \{\bm w_{t^j}\},\{\bm \gamma_{i,{t^j}}\} = \frac{1}{M}\sum_{i=1}^M f_i(\bm w_{t^j}) + \mathcal{R}(\bm w_{t^j}) + \frac{1}{M}\sum_{i=1}^M \langle \bm \gamma_{i,t^j}, \bm w_{t^j} - \bm w_{i,t^j}  \rangle + \frac{1}{M}\sum_{i=1}^M \frac{2}{\eta_g} \| \bm w_{t^j} - \bm w_{i,t^j} \|^2 \right \} \\	
				& \leq \left \{\mathcal{D}_{\eta_g} (\bm w^*, \{\bm w^*\},\{\bm \gamma_{i}^*\}= \frac{1}{M}\sum_{i=1}^M f_i(\bm w^*) + \mathcal{R}(\bm w^*) + \frac{1}{M}\sum_{i=1}^M \langle \bm \gamma_{i}^*, \bm w^* - \bm w_{i}^*  \rangle  + \frac{1}{M}\sum_{i=1}^M \frac{2}{\eta_g} \| \bm w^* - \bm w_{i}^* \|^2\right \} 
			\end{split}
		\end{equation}
		which indeed shows $\lim_{j \to \infty} \sup  r_{t^j} \leq 0$.
		Combining this with Eq. (\ref{d inf bound}), we arrive at $\lim_{j \to \infty}  r_{t^j} = 0 $. Given that $\lim_{t \to \infty}  r_{t}$ exists, we reach the conclusion $\lim_{t \to \infty}  r_{t} = 0$.
	\end{proof}
	\subsubsection{Formal proof}
	\begin{proof}
		Let $r_t = \mathcal{D}_{\eta_g }(\bm w_t, \{\bm w_{i,t} \}, \{ \bm \gamma_{i,t} \} ) -\mathcal{D}_{\eta_g }(\bm w^*, \{\bm w_i^* \}, \{ \bm \gamma_{i,t}^* \} )$ captures the residual of potential function between an iterated point $(\bm w_t, \{\bm w_i \}, \{ \bm \gamma_{i,t} \})$ and the cluster point $(\bm w^*, \bm w_{i}^*, \bm \gamma_{i}^*)$. \par
		\textbf{Derive the upper bound of subgradient.}
		By inequality (\ref{deduce from three update}), we have:
		\begin{equation}
			\begin{split}
				0 &\in \partial \mathcal{R}(\bm w_t) + \frac{1}{M}\sum_{i=1}^M \nabla f_i(\bm w_{i,t}) + \frac{1}{\eta_g}(\bm w_t - \frac{1}{M}\sum_{i =1}^M \bm w_{i, t})  \\
			\end{split}
		\end{equation}
		which in turn implies that:
		\begin{equation}
			\begin{split}
				\frac{1}{M} \sum_{i=1}^M (\nabla f_i(\bm w_{t})  -\nabla f_i(\bm w_{i,t}) ) - \frac{1}{\eta_g} ( \bm w_t - \frac{1}{M} \sum_{i=1}^M \bm w_{i,t}) & \in \frac{1}{M} \sum_{i=1}^M\nabla f_i(\bm w_{t}) +\partial \mathcal{R}(\bm w_t)
			\end{split}
		\end{equation}
		Recall that the definition of distance between two sets is $\dist(C, D) =\inf\{ \| x-y\| | x\in C, y \in D\}$. Viewing the LHS of the above inequality as a point in the set (i.e., the RHS), the following relation follows,
		\begin{equation}
			\begin{split}
				&  \dist^2 \left(0,	\frac{1}{M} \sum_{i=1}^M\nabla f_i(\bm w_{t}) +\partial \mathcal{R}(\bm w_t) \right )  \\
				= 	& \left \| 	\frac{1}{M} \sum_{i=1}^M (\nabla f_i(\bm w_{t})  -\nabla f_i(\bm w_{i,t}) ) - \frac{1}{\eta_g}( \bm w_t - \frac{1}{M} \sum_{i=1}^M \bm w_{i,t}) \right \|^2  \\
				\leq 	&  \frac{2}{M} \sum_{i=1}^M \left \|\nabla f_i(\bm w_{t})  -\nabla f_i(\bm w_{i,t}) \right \|^2 + \frac{2}{\eta_g M} \sum_{i =1}^M \left\| \bm w_t -  \bm w_{i,t} \right \|^2 \\
				\leq 	& \frac{2}{M}\sum_{i \in M} (\frac{1}{\eta_g}+L^2)\| \bm w_t -  \bm w_{i,t}  \|^2   \\ 
				\leq 	&  \frac{4}{M}\sum_{i \in M}  (\frac{1}{\eta_g}+L^2) (  \|  \bm w_t -  \bm w_{i,t+1}  \|^2 + \|\bm w_{i,t+1} - \bm w_{i,t} \|^2  ) \\
				\leq 	& \frac{4}{M}\sum_{i \in M} ((\eta_g+\eta_g^2 L^2) (  \|  \nabla f_i (\bm w_{i,t+1}) - \nabla f_i (\bm w_{i,t}) \|^2 + (\frac{1}{\eta_g}+L^2) \|\bm w_{i,t+1} - \bm w_{i,t} \|^2  ) \\ 
				\leq 	&  \frac{1}{M}\sum_{i \in M} C_1 \| \bm w_{i,t+1} - \bm w_{i,t} \|^2  \\
				.
			\end{split} 
		\end{equation}
		where $C_1  = 4(\eta_g L^2 + \eta_g^2 L^4+ \frac{1}{\eta_g}+L^2)$. The second-to-last inequality holds by Lemma \ref{relation of iterate point and primal residual}. \par
		\textbf{Connect the subdifferential with $r_t$.}	On the other hand, by Lemma \ref{suffcient descent}, we have:
		\begin{equation}
			\begin{split}
				\label{rt-rt-1}
				r_{t}-r_{t+1}  & \geq  \frac{1}{M} \sum_{i=1}^M \left ( C_2   \| \bm w_{i,t+1} - \bm w_{i,t}\|^2  
				+ \frac{1}{2\eta_g} \| \bm w_{t+1}- \bm w_t \|^2 \right) \geq   \frac{1}{M} \sum_{i=1}^M  C_2   \| \bm w_{i,t+1} - \bm w_{i,t}\|^2    \\
			\end{split}
		\end{equation}
		where $C_2=  L^2 \eta_g+ \frac{L}{2}- \frac{1}{2\eta_g}$ is a positive constant by assumption. \par
		Since $r_{t+1} \geq 0$ for any $t>0$ (See Lemma \ref{Limit of residual}), the following relation holds true:
		\begin{equation}
			\label{rate bound before case study}
			\dist \left(0,	\frac{1}{M} \sum_{i=1}^M\nabla f_i(\bm w_{t}) +\partial \mathcal{R}(\bm w_t) \right )  \leq \sqrt{\frac{C_1}{C_2}} \cdot  \sqrt{r_{t}} 	
		\end{equation}
		Notice above that the subgradient of the global loss can be upper bound by $r_t$. In the following, we shall introduce KL property to achieve an upper bound of $r_t$.
		\par
		\textbf{Upper bound $r_t$ with KL property of the potential function.} Since the potential function satisfies KL property with $\phi(v)=cv^{1-\theta}$, we know for all $t$ that satisfies $r_t>0$, the following relation holds true,
		\begin{equation}
			c(1-\theta) r_t^{-\theta } \dist(0, \partial \mathcal{D}_{\eta_g} (\bm w_t, \{\bm w_{i,t}\}, \{\bm \gamma_{i,t} \})) \geq  1 ,
		\end{equation}  \par
		with its equivalence form as follows,
		\begin{equation}
			\label{key Kl inequality}
			r_t^{\theta}  \leq   {c(1-\theta) \dist(0, \partial \mathcal{D}_{\eta_g} (\bm w_t, \{\bm w_{i,t}\}, \{\bm \gamma_{i,t} \})) } ,
		\end{equation} 
		
		\textbf{Upper bound the subdifferential of the potential function.} We now show that the subgradient of the potential function can indeed be upper bounded. Note that $ \partial  \mathcal{D}_{\eta_g} (\cdot,\cdot,\cdot) \triangleq (	\partial_{\bm w_t}  \mathcal{D}_{\eta_g} (\cdot,\cdot,\cdot) , \nabla_{\bm w_{i,t}}  \mathcal{D}_{\eta_g} (\cdot,\cdot,\cdot)  , 	\nabla_{\bm \gamma_{i,t}}  \mathcal{D}_{\eta_g} (\cdot,\cdot,\cdot) )$. Now we separately give the subdifferential with respect to different groups of variables.
		\begin{equation}
			\begin{split}
				\partial_{\bm w_t}  \mathcal{D}_{\eta_g} (\bm w_t, \{\bm w_{i,t}\}, \{\bm \gamma_{i,t} \}) &= \partial \mathcal{R}(\bm w_t) +  \frac{1}{M}\sum_{i=1}^M  \bm \gamma_{i,t} +   \frac{1}{\eta_g }  (\bm w_{t} - \frac{1}{M}\sum_{i=1}^M   \bm w_{i,t} )   \ni  0
			\end{split}
		\end{equation}
		where the last inequality holds by the optimality condition (\ref{deduce from second update}). 
		\par Then we proceed to show the gradient with respect to $\bm w_{i,t}$, as follows,
		\begin{equation}
			\begin{split}
				&\nabla_{\bm w_{i,t}}  \mathcal{D}_{\eta_g} (\bm w_t, \{\bm w_{i,t}\}, \{\bm \gamma_{i,t} \}) 
				\\=&  \frac{1}{M} (\nabla f_i(\bm w_{i,t}) -  \bm \gamma_{i,t}  -    \frac{1}{\eta_g} (\bm w_{t} - \bm w_{i,t} ) ) \\
				=&  \frac{1}{M} ( \underbrace{\nabla f_i(\bm w_{i,t+1}) -  \bm \gamma_{i,t}  -    \frac{1}{\eta_g} (\bm w_{t} - \bm w_{i,t+1} )}_{=0, \text{ by Eq. (\ref{vector local update}})}  + \frac{1}{\eta_g} (\bm w_{i,t}- \bm w_{i,t+1})  +  \nabla f_i(\bm w_{i,t}) - \nabla f_i(\bm w_{i,t+1}) ) \\
				=&  \frac{1}{M} ( \nabla f_i(\bm w_{i,t}) - \nabla f_i(\bm w_{i,t+1}) +\frac{1}{\eta_g} (\bm w_{i,t}- \bm w_{i,t+1}) ) \\ 
			\end{split}
		\end{equation}
		Finally, the gradient with respect to $\gamma_{i,t}$ has this equivalent form:
		\begin{equation}
			\begin{split}
				\nabla_{\bm \gamma_{i,t}}  D_{\eta_g} (\bm w_t, \{\bm w_{i,t}\}, \{\bm \gamma_{i,t} \}) &=  \frac{1}{M} ( \bm w_t - \bm w_{i,t} ) \\
				&=  \frac{1}{M} ( \bm w_{t} - \bm w_{i,t+1} + \bm w_{i,t+1}  - \bm w_{i, t} ) \\
				&= \frac{\eta_g}{M } (  \underbrace{ \bm \gamma_{i,t+1} - \bm \gamma_{i,t}+ \frac{1}{\eta_g} (  \bm w_{i,t+1} -\bm w_{i,t} ) }_{\text{by Eq. \ref{vector local update}}}) \\
				&= \frac{\eta_g}{M } (  \nabla f_i(\bm w_{i,t+1}) - \nabla f_i(\bm w_{i,t})+ \frac{1}{\eta_g} (  \bm w_{i,t+1} -\bm w_{i,t} ))
			\end{split}
		\end{equation}
		Note that 	$\dist(\bm 0, \partial \mathcal{ D}_{\eta_g} (\cdot, \cdot, \cdot) )= \sqrt {\dist^2(\bm 0, \partial_{\bm w_t}  \mathcal{D}_{\eta_g} (\cdot,\cdot,\cdot) ) + \|\nabla_{\bm w_{i,t}}  \mathcal{D}_{\eta_g} (\cdot,\cdot,\cdot)\|^2  + 	\| \nabla_{\bm \gamma_{i,t}}  \mathcal{D}_{\eta_g} (\cdot,\cdot,\cdot) \|^2 }$. Summing the above sub-differentials, we arrive at, 
		\begin{equation}
			\begin{split}
				\dist(\bm 0, \partial D_{\eta_g} (\bm w_t,  \{\bm w_{i,t}\}, \{\bm \gamma_{i,t} \}) )  &\leq  \frac{ 1+\eta_g}{M}  \sum_{i=1}^M (\| \nabla f_i(\bm w_{i,t}) - \nabla f_i(\bm w_{i,t+1}) +\frac{1}{\eta_g} (\bm w_{i,t}- \bm w_{i,t+1}) \|  \\
				&\leq   \frac{ 1+\eta_g}{M} \sum_{i=1}^M  (\| \nabla f_i(\bm w_{i,t}) - \nabla f_i(\bm w_{i,t+1})\|  + \frac{1}{\eta_g} \| \bm w_{i,t}  - \bm w_{i,t+1} \| ) \\
				& = \frac{1 }{M}  \sum_{i=1}^M C_3 \|\bm w_{i,t}  - \bm w_{i,t+1} \|
			\end{split}
		\end{equation}
		where $C_3 =L+ \eta_g L+ \frac{1}{\eta_g}+ 1 $. \par
		\textbf{Upper bound to $r_t$ with the subdifferential of the potential function. }
		This together with Eq. (\ref{key Kl inequality}) show that, $r_t$ can be bounded as follows,
		\begin{equation}
			\label{temp result1}
			r_t^{\theta}  \leq  {\frac{1 }{M} \sum_{i=1}^M c(1-\theta)   C_3 \|\bm w_{i,t}  - \bm w_{i,t+1} \|} ,
		\end{equation} Recall that the norm term $\|\bm w_{i,t} -\bm w_{i,t+1} \|^2$ is bounded as Inequality (\ref{rt-rt-1}). We first taking square of both sides of 	(\ref{temp result1}), yielding
		\begin{equation}
			r_t^{2\theta}  \leq \left({\frac{1 }{M} \sum_{i=1}^M \frac{1}{c(1-\theta)}   C_3 \|\bm w_{i,t}  - \bm w_{i,t+1} \|} \right)^2 \leq {\frac{1 }{M} \sum_{i=1}^M c^2(1-\theta)   C_3^2 \|\bm w_{i,t}  - \bm w_{i,t+1} \|^2}
		\end{equation}
		Plugging Eq. (\ref{rt-rt-1}) into the above results, under the case that $r_t >0 $ for all $t>0$, we can ensure:
		\begin{equation}
			\begin{split}
				\label{bound rt}
				r_t- r_{t+1} &\geq  \frac{C_2 }{C_3^2 c^2(1-\theta)^2 } r_t^{2\theta } \\
				&	=  C_4 r_t^{2\theta }
			\end{split}
		\end{equation}
		where $C_4 = \frac{C_2}{C_3^2 c^2(1-\theta)^2 }$ is a positive constant for $\theta \in [0,1)$. \par 
		\textbf{Separate into three cases.}
		We then separate our analysis under three different settings of $\theta$.
		
		\begin{itemize}[leftmargin=*]
			\item 	Firstly, assume $D_{\eta_g}(\bm x, \bm y, \bm \gamma )$ satisfies the KL property with $\theta=0$.  As per Eq. (\ref{bound rt}), if $r_t>0$ holds for all $t>0$, we have $r_t \geq C_4$ for all $t>0$. Recall from Lemma  \ref{Limit of residual} that $\lim_{t \to \infty}{r_t} = 0$, which means $r_T \geq C_4$ cannot be true when $T$ is a sufficiently large number. Therefore, there must exist a $t_0$ such that $r_{t_0}=0$. If this is the case, observed from Lemma \ref{Limit of residual} that $r_t \geq 0$ for all $t>0$, and  that $r_t$ is non-increasing. It is sufficient to conclude that for a sufficiently large number $T>t_0$, $r_{T} = 0$ must hold true. Inserting this result into RHS of Eq. (\ref{rate bound before case study}), the desired rate follows immediately. 
			
			\item Then, consider the case $D_{\eta_g}(\bm x, \bm y, \bm \gamma )$ satisfies the KL property with $\theta \in (0, \frac{1}{2}]$. First we assume that $r_t>0$ for all $t>0$.  From Eq. (\ref{bound rt}), it follows that: $r_{t+1} \leq r_t -C_4 r_t^{2\theta}$. Since $\lim_{t \to \infty} r_t = 0$, there must exist a $t_0^{\prime}$ such that, $r_T^{2\theta} \geq r_T$ hold for all $T>t_0^{\prime}$, and equivalently, $r_{T+1} \leq (1-C_4) r_T$. This further implies that $r_{T} \leq (1-C_4)^{T-t_0^{\prime}} r_{t_0^{\prime}}$. Now consider another case that there exists a $t_0$ such that $r_{t}=0$ for all $T>t_0$, following the same analysis given in the previous case we reach the same result $r_T=0$ holds for all sufficiently large $T \geq t_0^{\prime}$.  These together with  Eq. (\ref{rate bound before case study}) implying that for a sufficiently large $T>t_0^{\prime}$, 
			$\dist \left(0,	\frac{1}{M} \sum_{i=1}^M\nabla f_i(\bm w_{T}) +\partial \mathcal{R}(\bm w_T) \right )  \leq \max(\sqrt{\frac{C_1 r_{t_0^{\prime}}}{C_2} (1-C_4)^T},0 ) \leq \sqrt{\frac{C_1 r_{t_0^{\prime}}}{C_2} (1-C_4)^{T-t_0^{\prime}}}$.
			
			\item Finally, suppose $D_{\eta_g}(\bm x, \bm y, \bm \gamma )$ satisfies the KL property with $\theta \in (\frac{1}{2}, 1) $. We first evaluate the case that $r_t>0$ for all $t>0$. Define a continuous non-increasing function $g: (0, +\infty) \to \mathbb{R}$ by $g(x) =x^{-2\theta}$. Plugging this definition into	Eq. (\ref{bound rt}), we have $ C_4 \leq (r_t-r_{t+1}) g(r_t) \leq \int^{r_t}_{r_{t+1}} g(x) dx = \frac{r_{t+1}^{1-2\theta} - r_{t}^{1-2\theta}}{2\theta- 1} $ holds for all $t \geq 0$. Since $2\theta-1>0$, we have $r_{t+1}^{1-2\theta} - r_{t}^{1-2\theta} \leq (2\theta-1) C_4 $. Summing from $t=0$ to $t=T-1$, we have $r_{T} \leq \sqrt[1-2\theta] {T (2\theta-1) C_4} $. Moreover, same as the previous analysis, we have $r_{T}$ for all $t \geq 0$. Thus, these together with  Eq. (\ref{rate bound before case study}) show that for $T \geq 0$, $ \dist \left(0,	\frac{1}{M} \sum_{i=1}^M\nabla f_i(\bm w_{T}) +\partial \mathcal{R}(\bm w_T) \right ) \leq \max(\sqrt{\frac{C_1}{C_2}} \sqrt[2-4\theta]{T (2\theta-1) C_4},0 ) \leq C_5 T^{- (4\theta-2)}$ where $C_5 = \sqrt{\frac{C_1}{C_2}} \sqrt[2-4\theta]{(2\theta-1)C_4} $.
		\end{itemize}
		
		Now we try to showcase that the iterate $\bm w_t$ can globally converge the the stationary point $ \hat{\bm w}^*$.
		
		Define $\mathcal{D}^* = \mathcal{D}_{\eta_g}(\bm w^*,\{\bm w_{i}^*\}, \{\bm \gamma_{i}^* \} ) $. By construction, we derive that,
		\begin{equation}
			\begin{split}
				&\frac{1}{M}\sum_{i=1}^M C_3 \|\bm w_{i,t}  - \bm w_{i,t+1} \| \cdot (\varphi (\mathcal{D}_{\eta_g} (\bm w_t,  \{\bm w_{i,t}\}, \{\bm \gamma_{i,t} \}) - \mathcal{D}^* ) - \varphi (\mathcal{D}_{\eta_g} (\bm w_{t+1}, \{\bm w_{i,t+1}\}, \{\bm \gamma_{i,t+1} \}) - \mathcal{D}^* ) )  
				\\  \geq&     \dist(0, \partial \mathcal{D}_{\eta_g} (\bm w_t, \{\bm w_{i,t}\}, \{\bm \gamma_{i,t} \}) )\cdot (\varphi (\mathcal{D}_{\eta_g} (\bm w_t, \{\bm w_{i,t}\}, \{\bm \gamma_{i,t} \}) - \mathcal{D}^* )  \\
     & \qquad \qquad \qquad \qquad \qquad \qquad \qquad - \varphi (\mathcal{D}_{\eta_g} (\bm w_{t+1}, \{\bm w_{i,t+1}\}, \{\bm \gamma_{i,t+1} \}) - \mathcal{D}^* ) ) \\
				\geq & \dist(0, \partial \mathcal{D}_{\eta_g} (\bm w_t, \{\bm w_{i,t}\}, \{\bm \gamma_{i,t} \}) )\cdot \varphi^{\prime} (\mathcal{D}_{\eta_g} (\bm w_t, \{\bm w_{i,t}\}, \{\bm \gamma_{i,t} \})- D^*) \\
				& \qquad \qquad \qquad \cdot (\mathcal{D}_{\eta_g} (\bm w_t, \{\bm w_{i,t}\}, \{\bm \gamma_{i,t} \}) - \mathcal{D}_{\eta_g} (\bm w_{t+1}, \{\bm w_{i,t+1}\}, \{\bm \gamma_{i,t+1} \})) \\
				\geq &
				\mathcal{D}_{\eta_g} (\bm w_t, \{\bm w_{i,t}\}, \{\bm \gamma_{i,t} \}) - \mathcal{D}_{\eta_g} (\bm w_{t+1}, \{\bm w_{i,t+1}\}, \{\bm \gamma_{i,t+1} \})
			\end{split}
		\end{equation}
		where the second to last inequality holds by concavity of $\varphi$ and the last one holds by KL property. Plugging Eq.  (\ref{rt-rt-1})
		into the above inequality, it yields,
		\begin{equation}
			\begin{split}
				&\frac{1}{M}\sum_{i=1}^M C_3 \|\bm w_{i,t}  - \bm w_{i,t+1} \| \cdot (\varphi (\mathcal{D}_{\eta_g} (\bm w_t, \{\bm w_{i,t}\}, \{\bm \gamma_{i,t} \}) - \mathcal{D}^* ) - \varphi (\mathcal{D}_{\eta_g} (\bm w_{t+1}, \{\bm w_{i,t+1}\}, \{\bm \gamma_{i,t+1} \}) - \mathcal{D}^* ) ) \\
				\geq &  \frac{1}{M} \sum_{i=1}^M  C_2   \| \bm w_{i,t+1} - \bm w_{i,t}\|^2 
			\end{split}
		\end{equation}
		Therefore, by reorganize, we obtain that, 
		\begin{equation}
  \begin{split}
      &\frac{1}{M}\sum_{i=1}^M  \|\bm w_{i,t}  - \bm w_{i,t+1} \|  \\
      \leq & \frac{C_3}{C_2} (\varphi (\mathcal{D}_{\eta_g} (\bm w_t, \{\bm w_{i,t}\}, \{\bm \gamma_{i,t} \}) - \mathcal{D}^* )   -\varphi (\mathcal{D}_{\eta_g} (\bm w_{t+1}, \{\bm w_{i,t+1}\}, \{\bm \gamma_{i,t+1} \}) - \mathcal{D}^* ))
  \end{split}
		\end{equation}
  By telescoping from $t=1$ to $\infty$, we have,
		\begin{equation}
			\begin{split}
				&\sum_{t=1}^\infty \frac{1}{M}\sum_{i=1}^M  \|\bm w_{i,t}  - \bm w_{i,t+1} \| \\
     \leq & \frac{C_3}{C_2} (\varphi (\mathcal{D}_{\eta_g} (\bm w_0, \{\bm w_{i,0}\}, \{\bm \gamma_{i,0} \}) - \mathcal{D}^* )   -\varphi (\mathcal{D}_{\eta_g} (\bm w_{\infty}, \{\bm w_{i,\infty}\}, \{\bm \gamma_{i,\infty} \}) - \mathcal{D}^* )) \\
				 \leq  & \frac{C_3}{C_2}(\varphi (\mathcal{D}_{\eta_g} (\bm w_0, \{\bm w_{i,0}\}, \{\bm \gamma_{i,0} \}) - \mathcal{D}^* )) <  \infty
			\end{split}
		\end{equation}
  where the second inequality holds by $\lim_{t \to \infty} r_t =0$, as stated in Lemma \ref{Limit of residual}. Therefore $\| \bm w_{i,t}-\bm w_{i,t+1} \|$ is summable, which means the whole sequence $\bm w_{i,t}$ is convergent to its cluster point $\bm w_i^*$, i.e., $\lim_{t \to \infty} \bm w_{i,t} = \bm w_i^*$. Moreover, based on the convergence result between $\bm w_{i,t+1}$ and $\bm w_t$ in Eq. (\ref{dual convergence}), we also see that $\lim_{t \to \infty} \bm w_t = \bm w^*$. Note that in Theorem \ref{subsequence convergence}, we observe that the cluster point is indeed the stationary point, and therefore $\lim_{t \to \infty} \bm w_t = \hat{\bm w}^*$. This concludes the proof.
	\end{proof}
	\subsection{Missing proof of Theorem 3} Let  a  stochastic Proximal GM as $G_{\eta_l, \xi}(\bm w, \bm p ) \triangleq \frac{1}{\eta_l}\left(\bm p -\operatorname{prox}_{\eta_l, \tilde{\mathcal{R}}}(\bm p - \eta_l \nabla f_i(\bm w + \bm p; \xi )\right)$, which serves as an auxiliary variable in the proof.
	
	\textbf{Proof Sketch.} Our proof starts with the L-smoothness expansion at the iterative point. For the linear term in the expansion, we treat it with Lemma \ref{proximal gradient key lemma} to recover the l2-norm between sequential iterates, i.e., $\| \bm p_{i,t+1} - \bm p_{i,t}\|^2$. Then we measure the error brought by  stochastic proximal gradient by constructing the term $ \left \langle  \nabla_2 f_i( {\bm w}_t+ \bm p_{i,t}  ) - \nabla_2 f_i( {\bm w}_t+ \bm p_{i,t} ; \xi ),  {\bm {p}}_{i,t+1}-  {\bm {p}}_{i,t}  \right \rangle$ , which can indeed be bounded by the variance $\sigma^2$. By reorganizing, the stochastic gradient mapping (GM) can be directly bounded. Finally, we leverage the smoothness of gradient mapping, i.e., Lemma \ref{smoothness of gm} to track the error between stochastic/real gradient mapping.
	\subsubsection{Key lemmas}
	\begin{lemma} 
		\label{proximal gradient key lemma} The following relation holds true,
		\begin{equation}
			\langle \nabla_2 f_i(\bm w_t+ \bm p_{i,t};\xi ),  \bm p_{i,t,k} - \bm p_{i,t,k+1} \rangle \geq (\tilde{\mathcal{R}}(\bm p_{i,t,k+1})-  \tilde{\mathcal{R}}(\bm p_{i,t,k}) )+  \frac{1}{\eta_l}\| \bm p_{i,t,k+1} - \bm p_{i,t,k}\|^2
		\end{equation}
	\end{lemma}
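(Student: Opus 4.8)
The plan is to read the stated inequality off directly from the optimality condition of the proximal subproblem that defines $\bm p_{i,t,k+1}$, together with convexity of the $L^1$ regularizer $\tilde{\mathcal{R}}(\cdot)=\mu\|\cdot\|_1$. First I would rewrite the local-fusion update in Eq.~(\ref{local adaptation}) in its variational form, namely
\[
\bm p_{i,t,k+1} = \arg\min_{\bm p}\left\{ \tilde{\mathcal{R}}(\bm p) + \frac{1}{2\eta_l}\Big\| \bm p - \big(\bm p_{i,t,k} - \eta_l \nabla_2 f_i(\bm w_t + \bm p_{i,t,k};\xi)\big)\Big\|^2 \right\},
\]
which is exactly the definition of $\operatorname{Prox}_{\eta_l\mu\|\cdot\|_1}$ applied to $\bm p_{i,t,k} - \eta_l \nabla_2 f_i(\bm w_t + \bm p_{i,t,k};\xi)$.

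Since this objective is strongly convex, it has a unique minimizer, and the first-order optimality condition yields a subgradient $\bm g \in \partial\tilde{\mathcal{R}}(\bm p_{i,t,k+1})$ satisfying
\[
\bm g = -\frac{1}{\eta_l}(\bm p_{i,t,k+1} - \bm p_{i,t,k}) - \nabla_2 f_i(\bm w_t + \bm p_{i,t,k};\xi).
\]
Next, by convexity of $\tilde{\mathcal{R}}$, the subgradient inequality at $\bm p_{i,t,k+1}$ evaluated against $\bm p_{i,t,k}$ gives $\tilde{\mathcal{R}}(\bm p_{i,t,k}) \geq \tilde{\mathcal{R}}(\bm p_{i,t,k+1}) + \langle \bm g,\, \bm p_{i,t,k} - \bm p_{i,t,k+1}\rangle$. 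Substituting the expression for $\bm g$, using $\langle -\tfrac{1}{\eta_l}(\bm p_{i,t,k+1} - \bm p_{i,t,k}),\, \bm p_{i,t,k} - \bm p_{i,t,k+1}\rangle = \tfrac{1}{\eta_l}\|\bm p_{i,t,k+1}-\bm p_{i,t,k}\|^2$, and rearranging produces exactly the claimed bound.

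This argument is elementary and I do not expect a genuine obstacle — the work is purely bookkeeping. The two things to be careful about are the $\eta_l$ scaling inside the proximal operator (the notation $\operatorname{Prox}_{\eta_l\mu\|\cdot\|_1}$ corresponds to the $\tfrac{1}{2\eta_l}\|\cdot\|^2$ penalty above, so the extracted subgradient carries the factor $1/\eta_l$) and the mild index slip on the left-hand side of the statement (the stochastic gradient is evaluated at $\bm p_{i,t,k}$, consistent with the update rule). Convexity of $\tilde{\mathcal{R}}$ is the only structural ingredient needed; neither $L$-smoothness of $f_i$ nor the bounded-variance Assumption~\ref{bounded variance} is required here — those enter only at the later steps of Theorem~\ref{convergence local fusion}, where this lemma is combined with the smoothness expansion and with the smoothness of the gradient mapping.
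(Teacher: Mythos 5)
Your proposal is correct and follows essentially the same route as the paper: extract the subgradient $\bm g \in \partial\tilde{\mathcal{R}}(\bm p_{i,t,k+1})$ from the first-order optimality condition of the proximal subproblem, then apply the convex subgradient inequality for $\tilde{\mathcal{R}}$ and rearrange. (You even place the subgradient at the correct point $\bm p_{i,t,k+1}$, where the paper's write-up contains a small typo writing $\partial\tilde{\mathcal{R}}(\bm p_{i,t,k})$.)
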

	\begin{proof}
		By the optimality condition of the prox function, we know that there exists a sub-gradient $s \in \partial \tilde{\mathcal{R}}(\bm p_{i,t,k})$ such that
		$\bm p_{i,t,k+1} - \bm p_{i,t,k}  +\eta_l \nabla_2 f_i (\bm w_t + \bm p_{i,t,k}) + \eta_l s =0 $. Further, we obtain that, $\langle \bm p_{i,t,k+1} - \bm p_{i,t,k}  +\eta_l \nabla_2 f_i (\bm w_t + \bm p_{i,t,k}; \xi) + \eta_l \bm s ,  \bm p_{i,t,k} - \bm p_{i,t,k+1} \rangle = 0$, which further implies that, 
		\begin{equation} 
			\begin{split}
				    \langle \nabla_2 f_i(\bm w_t+ \bm p_{i,t,k};\xi ),  \bm p_{i,t,k} - \bm p_{i,t,k+1} \rangle  & =  \langle \bm s , \bm p_{i,t,k+1} - \bm p_{i,t,k} \rangle +  \frac{1}{\eta_l}\| \bm p_{i,t,k+1} - \bm p_{i,t,k}\|^2
				\\
				& \geq  (\tilde{\mathcal{R}}(\bm p_{i,t,k+1})-  \tilde{\mathcal{R}}(\bm p_{i,t,k}) )+  \frac{1}{\eta_l}\| \bm p_{i,t,k+1} - \bm p_{i,t,k}\|^2
			\end{split}
		\end{equation}
		where  the last inequality holds by convexity of $\tilde{\mathcal{R}}(\cdot)$.
	\end{proof}
	
	\begin{lemma}[Young Inequality]
		\begin{equation}
			\langle \bm a, \bm b \rangle \leq \frac{1}{2} \| \bm a\|^2 + \frac{1}{2}\| \bm b\|^2
		\end{equation}
	\end{lemma}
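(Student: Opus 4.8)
The plan is to derive this special case of Young's inequality from the single observation that the squared norm of any real vector is nonnegative. Concretely, I would begin from the expansion $\|\bm a - \bm b\|^2 = \|\bm a\|^2 - 2\langle \bm a, \bm b\rangle + \|\bm b\|^2$, which is just the definition of the $L^2$ norm together with bilinearity and symmetry of the inner product. Since this quantity is a squared norm, it is nonnegative, so $\|\bm a\|^2 - 2\langle \bm a, \bm b\rangle + \|\bm b\|^2 \geq 0$.

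Next I would simply rearrange this inequality to isolate the cross term. Moving $2\langle \bm a, \bm b\rangle$ to the right and dividing by $2$ gives $\langle \bm a, \bm b\rangle \leq \tfrac{1}{2}\|\bm a\|^2 + \tfrac{1}{2}\|\bm b\|^2$, which is exactly the claimed bound. No assumptions beyond the inner-product structure of $\mathbb{R}^d$ are required, and in particular the statement holds for all $\bm a, \bm b$ with no smallness or positivity condition.

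There is essentially no obstacle here: the entire content is the nonnegativity of $\|\bm a - \bm b\|^2$, and the only thing to be careful about is bookkeeping the sign of the cross term and the factor of $\tfrac{1}{2}$ after division. I would also note in passing that equality holds precisely when $\bm a = \bm b$, i.e., when the expanded square vanishes, though this refinement is not needed for the inequality as stated. The result is then immediate and can be recorded in a single short line of algebra.
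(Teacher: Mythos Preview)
Your proof is correct and is the standard one-line argument. The paper itself does not provide a proof of this lemma---it is simply stated as a well-known fact---so there is nothing to compare against; your derivation from $\|\bm a - \bm b\|^2 \ge 0$ is exactly what one would expect.
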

	\begin{lemma}[Smoothness of Gradient Mapping] The gradient mapping and a stochastic one exhibits the following smoothness property,
		\label{smoothness of gm}
		\begin{equation}
			\begin{split}
				\left \| \mathcal{G}_{\eta_l, \xi} (\bm w_t, \bm p_{i,t,k}) - \mathcal{G}_{\eta_l} (\bm w_t, \bm p_{i,t,k})  \right \| &= \| \frac{1}{\eta_l} (\operatorname{prox}_{\eta_l, \tilde{\mathcal{R}}}(\bm p - \eta_l \nabla f_i(\bm w + \bm p; \xi )  )  \\
				&  \qquad  \qquad \qquad\qquad \qquad \qquad - \operatorname{prox}_{\eta_l, \tilde{\mathcal{R}}}(\bm p - \eta_l \nabla f_i(\bm w + \bm p )  ))\| \\
				&	\leq  \| \nabla f_i(\bm w + \bm p;\xi ) -\nabla f_i(\bm w + \bm p )\|
			\end{split}
		\end{equation}
		where the inequality holds due to the nonexpansivity of the proximal operator of proper
		closed convex functions, see 
		Property 5 in \citep{metel2021stochastic}.
	\end{lemma}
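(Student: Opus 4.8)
The plan is to prove the bound by directly unfolding the definitions of the two gradient mappings, reducing their difference to a difference of two proximal evaluations, and then invoking the nonexpansivity (1-Lipschitz property) of the proximal operator. Writing $\bm q := \bm w_t + \bm p_{i,t,k}$ for brevity, recall that $\mathcal{G}_{\eta_l}(\bm w_t, \bm p_{i,t,k}) = \frac{1}{\eta_l}\bigl(\bm p_{i,t,k} - \operatorname{prox}_{\eta_l, \tilde{\mathcal{R}}}(\bm p_{i,t,k} - \eta_l \nabla_2 f_i(\bm q))\bigr)$, while the stochastic mapping $\mathcal{G}_{\eta_l, \xi}$ is identical except that the deterministic gradient is replaced by the stochastic one $\nabla_2 f_i(\bm q; \xi)$. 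First I would subtract the two mappings: the common $\frac{1}{\eta_l}\bm p_{i,t,k}$ terms cancel exactly, leaving only $\frac{1}{\eta_l}$ times the difference of the two $\operatorname{prox}$ evaluations. Inside the norm this is precisely the middle expression displayed in the lemma (the overall sign is immaterial under $\|\cdot\|$).

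Next I would invoke the key structural fact: because $\tilde{\mathcal{R}}(\cdot) = \mu\|\cdot\|_1$ is proper, closed and convex, its proximal operator is firmly nonexpansive and in particular 1-Lipschitz, which is the cited Property 5 in \citep{metel2021stochastic}. Applying this with arguments $\bm p_{i,t,k} - \eta_l \nabla_2 f_i(\bm q)$ and $\bm p_{i,t,k} - \eta_l \nabla_2 f_i(\bm q; \xi)$, the $\bm p_{i,t,k}$ parts again cancel inside the difference of the arguments, so that difference equals $\eta_l\bigl(\nabla_2 f_i(\bm q; \xi) - \nabla_2 f_i(\bm q)\bigr)$. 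Nonexpansivity then bounds the difference of the two prox values by $\eta_l\|\nabla_2 f_i(\bm q; \xi) - \nabla_2 f_i(\bm q)\|$, and the leading $1/\eta_l$ cancels this $\eta_l$, yielding exactly the claimed bound $\|\nabla_2 f_i(\bm q; \xi) - \nabla_2 f_i(\bm q)\|$.

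There is essentially no analytical obstacle here: the argument is pure algebraic cancellation plus a single invocation of a standard property, and crucially it requires none of the smoothness, bounded-variance, or KL assumptions used elsewhere. The only point needing care is to confirm that the prox is indeed taken of a proper closed convex function so that the 1-Lipschitz property is legitimate; since $\tilde{\mathcal{R}} = \mu\|\cdot\|_1$ is a norm (hence convex and closed) this holds, and as the L1 prox is the explicit coordinatewise soft-thresholding operator, one could even verify nonexpansivity directly rather than citing Property 5. I would finish by emphasizing that the bound holds pathwise for every realization of $\xi$, which is what the downstream analysis needs when it subsequently takes expectations and applies Assumption \ref{bounded variance} to control $\mathbb{E}\|\nabla_2 f_i(\bm q; \xi) - \nabla_2 f_i(\bm q)\|^2 \le \sigma^2$.
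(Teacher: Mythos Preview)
Your proposal is correct and mirrors exactly what the paper does: the lemma's displayed chain itself is the proof, obtained by unfolding the two gradient-mapping definitions so that the common $\bm p_{i,t,k}$ terms cancel, and then applying the nonexpansivity (1-Lipschitz) property of the proximal operator of the proper closed convex function $\tilde{\mathcal{R}}=\mu\|\cdot\|_1$, which cancels the factor $\eta_l$. There is no additional idea in the paper beyond what you wrote.
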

	\subsubsection{Formal proof}
	\begin{proof}
		By L-smoothness, we obtain that,
		\begin{equation}
			\label{gradient mapping entry}
			\begin{split}
				& \mathbb{E}_{t,k} f_i\left( \hat{\bm w}^* + {{\bm p}}_{i,t,k+1} \right) \\
    \leq&  f_i( \hat{\bm w}^*+ {\bm {p}}_{i,t,k})  +   \mathbb{E}_{t,k} \left \langle \nabla_2 f_i( \hat{\bm w}^*+ \bm p_{i,t,k} ),  {\bm {p}}_{i,t,k+1}-  {\bm {p}}_{i,t,k}  \right \rangle  + \frac{ L}{2} \mathbb{E}_{t,k} ||  {\bm {p}}_{i,t,k+1}-  {\bm {p}}_{i,t,k}   ||^2  \\
				\leq&  f_i( \hat{\bm w}^*+ {\bm {p}}_{i,t,k})  +   \underbrace{\mathbb{E}_{t,k} \left \langle \nabla_2 f_i( \hat{\bm w}^*+ \bm p_{i,t,k} )- \nabla_2 f_i( {\bm w}_t+ \bm p_{i,t,k} ),  {\bm {p}}_{i,t,k+1}-  {\bm {p}}_{i,t,k}  \right \rangle}_{T_1}  \\
				&+ \underbrace{\mathbb{E}_{t,k} \left \langle  \nabla_2 f_i( {\bm w}_t+ \bm p_{i,t,k}  ) - \nabla_2 f_i( {\bm w}_t+ \bm p_{i,t,k} ; \xi ),  {\bm {p}}_{i,t,k+1}-  {\bm {p}}_{i,t,k}  \right \rangle}_{T_2}   \\
				&+ \underbrace{\mathbb{E}_{t,k} \left \langle  \nabla_2 f_i( {\bm w}_t+ \bm p_{i,t,k} ;\xi ),  {\bm {p}}_{i,t,k+1}-  {\bm {p}}_{i,t,k}  \right \rangle}_{T_3}  + \frac{ L}{2} \mathbb{E}_{t,k}||  {\bm {p}}_{i,t,k+1}-  {\bm {p}}_{i,t,k}   ||^2   \\
			\end{split}
		\end{equation}
		
		By Young's inequality and L-smoothness, we obtain that,
		\begin{equation}
			\begin{split}
				T_1 &\leq \frac{1}{2}\mathbb{E}_{t,k}\| \nabla_2 f_i( \hat{\bm w}^*+ \bm p_{i,t,k} )- \nabla_2 f_i( {\bm w}_t+ \bm p_{i,t,k} )\|^2 +  \frac{1}{2} \mathbb{E}_{t,k}\| {\bm {p}}_{i,t,k+1}-  {\bm {p}}_{i,t,k} \|^2 \\
				& \leq \frac{L^2}{2} \mathbb{E}_{t,k} \| \hat{\bm w}^* - \bm w_t  \|^2 + \frac{1}{2} \mathbb{E}_{t,k} \| {\bm {p}}_{i,t,k+1}-  {\bm {p}}_{i,t,k} \|^2
			\end{split}
		\end{equation}
		Besides, $T_2$ can be bounded as follows,
		
		\begin{equation}
			\begin{split}
				 & \quad T_2  \\
    &= \mathbb{E}_{t,k} \left \langle \nabla_2 f_i( {\bm w}_t+ \bm p_{i,t,k} ; \xi )- \nabla_2 f_i( {\bm w}_t+ \bm p_{i,t,k}  ) ,  \mathcal{G}_{\eta_l, \xi} (\bm w_t, \bm p_{i,t,k}) \right \rangle  \\
				&\leq \mathbb{E}_{t,k} \left \langle  \nabla_2 f_i( {\bm w}_t+ \bm p_{i,t,k} ; \xi )-\nabla_2 f_i( {\bm w}_t+ \bm p_{i,t,k}  ) ,  \mathcal{G}_{\eta_l, \xi} (\bm w_t, \bm p_{i,t,k}) - \mathcal{G}_{\eta_l} (\bm w_t, \bm p_{i,t,k})+\mathcal{G}_{\eta_l} (\bm w_t, \bm p_{i,t,k})   \right \rangle  \\
				&\leq \mathbb{E}_{t,k} \left \langle   \nabla_2 f_i( {\bm w}_t+ \bm p_{i,t,k} ; \xi )-\nabla_2 f_i( {\bm w}_t+ \bm p_{i,t,k} ),  \mathcal{G}_{\eta_l} (\bm w_t, \bm p_{i,t,k}) \right \rangle 
				\\ & \qquad \qquad + \mathbb{E}_{t,k}\|\nabla_2 f_i( {\bm w}_t+ \bm p_{i,t,k} ; \xi )-\nabla_2 f_i( {\bm w}_t+ \bm p_{i,t,k}  )  \| \| \mathcal{G}_{\eta_l, \xi} (\bm w_t, \bm p_{i,t,k}) - \mathcal{G}_{\eta_l} (\bm w_t, \bm p_{i,t,k}) \|  \\
				&\leq \mathbb{E}_{t,k} \left \|   \nabla_2 f_i( {\bm w}_t+ \bm p_{i,t,k} ; \xi )-\nabla_2 f_i( {\bm w}_t+ \bm p_{i,t} )\right \|  \left \| \mathcal{G}_{\eta_l, \xi} (\bm w_t, \bm p_{i,t,k}) - \mathcal{G}_{\eta_l} (\bm w_t, \bm p_{i,t,k})  \right \|  \\
				&\leq \mathbb{E}_{t,k} \left \|   \nabla_2 f_i( {\bm w}_t+ \bm p_{i,t,k} ; \xi )-\nabla_2 f_i( {\bm w}_t+ \bm p_{i,t,k} )\right \|^2  \\
				&\leq  \sigma^2  \\
			\end{split}
		\end{equation}
		where the second-to-last and the last inequality respectively holds by Lemma \ref{smoothness of gm}, and assumption \ref{bounded variance}.
		
		Finally, term T3 can be bounded with Lemma \ref{proximal gradient key lemma}, as follows,
		\begin{equation}
			T_3 \leq \tilde{\mathcal{R}}(\bm p_{i,t,k})-  \mathbb{E}_{t,k} \tilde{\mathcal{R}}(\bm p_{i,t,k+1})  -\frac{1}{\eta_l} \mathbb{E}_{t,k} ||  {\bm {p}}_{i,t,k+1}-  {\bm {p}}_{i,t,k}   ||^2
		\end{equation}

		By re-arranging $T_1$ and $T_2$ into Eq. (\ref{gradient mapping entry}), and let $\phi_i(\bm w, \bm p) = \tilde{f}_i(\bm w+ \bm p) + \tilde{\mathcal{R}}(\bm p)$, we have,
		\begin{equation}
			\begin{split}
				\mathbb{E}_{t,k} \phi_i(\hat{\bm w}^*, \bm p_{i,t,k+1})  & \leq  \phi_i(\hat{\bm w}^*, \bm p_{i,t,k})+  (\frac{ L }{2}-\frac{1}{\eta_l} + \frac{1}{2}) \mathbb{E}_{t,k} ||  {\bm {p}}_{i,t,k+1}-  {\bm {p}}_{i,t,k}   ||^2 \\
    &\qquad \qquad \qquad \qquad \qquad \qquad \qquad \qquad \qquad +\frac{L^2}{2} \mathbb{E}_{t,k} \| \hat{\bm w}^* - \bm w_t  \|^2+\sigma^2 \\
				& \leq  \phi_i(\hat{\bm w}^*, \bm p_{i,t,k})+  (\frac{ (L+1) \eta_l^2}{2}- \eta_l ) \mathbb{E}_{t,k} \|\mathcal{G}_{\eta_l, \xi }(\bm w_t, \bm p_{i,t,k} ) \|^2   \\
    &\qquad \qquad \qquad \qquad \qquad \qquad \qquad \qquad \qquad +\frac{L^2}{2} \mathbb{E}_{t,k} \| \hat{\bm w}^* - \bm w_t  \|^2+\sigma^2, \\
			\end{split}
		\end{equation}
		If $0<\eta_l < \frac{2}{L+1}$, it follows that, 
		\begin{equation}
			\mathbb{E}_{t,k} ||  \mathcal{G}_{\eta_l, \xi} (\bm w_t, \bm p_{i,t,k})   ||^2 \leq  \frac{2 \mathbb{E}_{t,k}(\phi_i(\hat{\bm w}^*, \bm p_{i,t,k}) - \phi_i(\hat{\bm w}^*, \bm p_{i,t,k+1}) ) + L^2 \mathbb{E}_{t,k}\| \hat{\bm w}^* -\bm w_t\|^2+\sigma^2}{2\eta_l-  (L+1) \eta_l^2} 
		\end{equation}
		
		Taking expectation over the condition and telescoping  the bound from $k=0$ to $K-1$ and $t=0$ to $T-1$, it gives,
		\begin{equation}
			\begin{split}
				\label{final grad map bound}
				&\frac{1}{TK} \sum_{t=0}^{T-1} \sum_{k=0}^{K-1} \mathbb{E} ||  \mathcal{G}_{\eta_l,\xi} (\bm w_t, \bm p_{i,t,k})   ||^2 \\
    \leq & \frac{2 (\phi_i(\hat{\bm w}^*, \bm p_{i,0,0}) - \phi_i(\hat{\bm w}^*, \bm p_{i,T-1,K-1}) ) + \frac{1}{T}\sum_{t=0}^{T-1}  L^2 \| \hat{\bm w}^* -\bm w_t\|^2+\sigma^2}{2\eta_l-  (L+1) \eta_l^2} \\
				\leq& \frac{2 \mathbb{E}(\phi_i(\hat{\bm w}^*, \bm p_{i,0,0}) - \phi_i(\hat{\bm w}^*, \bm p_{i}^*) ) + \frac{1}{T}\sum_{t=0}^{T-1}  L^2 \mathbb{E} \| \hat{\bm w}^* -\bm w_t\|^2+\sigma^2}{2\eta_l-  (L+1) \eta_l^2} 
			\end{split}
		\end{equation}
		Then we shall expand the term $\frac{1}{T}\sum_{t=1}^T  L^2 \| \hat{\bm w}^* -\bm w_t\|^2$ using the global convergence result given by Theorem \ref{whole sequence convergence under KL},  and the epsilon definition of limits. Specifically, by $\bm w_t \to \hat{\bm w}^*$, there exists a positive constant $N$ such that for any $t \geq N$, 
		$ \| \bm w_t - \hat{\bm w}^*\| \leq \epsilon$ holds for $\epsilon>0$. Then plugging this result into the expansion, we have:
		\begin{equation}
  \label{sum over N}
			\begin{split}
			\frac{1}{T}\sum_{t=0}^{T-1}	 \mathbb{E} L^2 \|\bm w_t - \hat{\bm w}^* \|^2 &=    \frac{1}{T} (\sum_{t=0}^N L^2 \mathbb{E}\|\bm w_t - \hat{\bm w}^* \|^2 + \sum_{t=N+1}^{T-1} L^2 \|\bm w_t - \hat{\bm w}^* \|^2  )
				\\
				&=    \frac{1}{T} (\sum_{t=0}^N L^2 \mathbb{E} \|\bm w_t - \hat{\bm w}^* \|^2 + \sum_{t=N+1}^{T-1} L^2 \epsilon^2 )
				\\
			\end{split}
		\end{equation}
  Choosing $\epsilon=\frac{1}{\sqrt{T}}$, we have:
		\begin{equation}
  \label{expand with epsilon}
			\begin{split}
				\frac{1}{T} \sum_{t=0}^{T-1} L^2 \|\bm w_t - \hat{\bm w}^* \|^2 & \leq     \frac{\sum_{t=0}^N L^2 \mathbb{E} \|\bm w_t - \hat{\bm w}^* \|^2}{T} + \frac{ L^2}{T} 
			\end{split}
		\end{equation}
  Notice that $\| \bm w_{t+1} -\bm w_t \|^2 \leq 2 \eta_g ( \mathcal{D}_{\eta_g}(\bm w_{t}, \{\bm w_{i,t}\}, \{\bm \gamma_{i, t}\} )- \mathcal{D}_{\eta_g}(\bm w_{t+1}, \{\bm w_{i,t+1}\}, \{\bm \gamma_{i, t+1}\} )  )$ as per Inequality (\ref{bound of wt-wt-1}).
  Utilizing this fact, the following inequality holds for $t \geq 1$,
\begin{equation}.
\begin{split}
    \mathbb{E} \|\bm w_t - \hat{\bm w}^* \| ^2 \leq &  t \cdot \mathbb{E} (\|\bm w_{0} - \hat{\bm w}^* \|^2 +\sum_{j=1}^t \|\bm w_{j-1} - \bm w_j  \|^2)  \\
   \leq  &   t \cdot  \mathbb{E} (\|\bm w_{0} - \hat{\bm w}^* \|^2 +\sum_{j=1}^t \|\bm w_{j-1} - \bm w_j  \|^2) \\
    \leq  &   t \cdot  \mathbb{E} (\|\bm w_{0} - \hat{\bm w}^* \|^2  +  \eta_g ( \mathcal{D}_{\eta_g}(\bm w_{0}, \{\bm w_{i,0}\}, \{\bm \gamma_{i, 0}\}) -\mathcal{D}_{\eta_g}(\bm w_{t}, \{\bm w_{i,t}\}, \{\bm \gamma_{i, t}\})  ) \\
    \leq  &      t \cdot \|\bm w_{0} - \hat{\bm w}^* \|^2  +    t \eta_g  (\mathcal{D}_{\eta_g}(\bm w_{0}, \{\bm w_{i,0}\}, \{\bm \gamma_{i, 0}\})-\mathcal{D}_{\eta_g}(\bm w^*, \{\bm w_{i}^*\}, \{\bm \gamma_{i}^*\})     
\end{split}
\end{equation}
where the last inequality holds by Inequality (\ref{rt geq 0}).
Plugging this into Inequality  (\ref{sum over N}), we obtain that,
\begin{equation}
\begin{split}
    &\frac{1}{T} \sum_{t=0}^{T-1} L^2 \|\bm w_t - \hat{\bm w}^* \|^2 \\ \leq & \frac{ ( \frac{N(N+1)}{2} +1 )\|\bm w_{0} - \hat{\bm w}^* \|^2  +     \frac{N(N+1)}{2} \eta_g  (\mathcal{D}_{\eta_g}(\bm w_{0}, \{\bm w_{i,0}\}, \{\bm \gamma_{i, 0}\}) -\mathcal{D}_{\eta_g}(\bm w^*, \{\bm w_{i}^*\}, \{\bm \gamma_{i}^*\}))+L^2}{T}   
\end{split} 
\end{equation}
Let $C_7= ( \frac{N(N+1)}{2} +1 )\|\bm w_{0} - \hat{\bm w}^* \|^2  +     \frac{N(N+1)}{2} \eta_g  (\mathcal{D}_{\eta_g}(\bm w_{0}, \{\bm w_{i,0}\}, \{\bm \gamma_{i, 0}\}) -\mathcal{D}_{\eta_g}(\bm w^*, \{\bm w_{i}^*\}, \{\bm \gamma_{i}^*\}))+L^2$. Plugging Inequality (\ref{expand with epsilon}) into RHS of (\ref{final grad map bound}),  we arrive at,
		\begin{equation}
			\begin{split}
				\label{real final grad map bound}
				\frac{1}{TK} \sum_{t=1}^{T-1} \sum_{k=1}^{K-1} \mathbb{E} ||  \mathcal{G}_{\eta_l,\xi} (\bm w_t, \bm p_{i,t,k})   ||^2  \leq \frac{2 (\phi_i(\hat{\bm w}^*, \bm p_{i,0,0}) - \phi_i(\hat{\bm w}^*, \bm p_{i}^*) ) + \frac{C_7}{T}+\sigma^2}{2\eta_l-  (L+1) \eta_l^2} 
			\end{split}
		\end{equation}
		Further notice that the real gradient mapping can be bounded as follows,
		\begin{equation}
			\begin{split}
				&\frac{1}{TK} \sum_{t=1}^{T-1} \sum_{k=1}^{K-1} \mathbb{E} ||  \mathcal{G}_{\eta_l} (\bm w_t, \bm p_{i,t,k})   ||^2    \\
				\leq  & \frac{2}{TK} \sum_{t=1}^{T-1} \sum_{k=1}^{K-1} (\mathbb{E} ||  \mathcal{G}_{\eta_l,\xi} (\bm w_t, \bm p_{i,t,k})   ||^2 + \mathbb{E} ||  \mathcal{G}_{\eta_l} (\bm w_t, \bm p_{i,t,k})-\mathcal{G}_{\eta_l,\xi} (\bm w_t, \bm p_{i,t,k})   ||^2) \\
				\leq  & \frac{2}{TK} \sum_{t=1}^{T-1} \sum_{k=1}^{K-1} (\mathbb{E} ||  \mathcal{G}_{\eta_l} (\bm w_t, \bm p_{i,t,k})   ||^2 + \mathbb{E}  \| \nabla_2 f_i(\bm w_t + \bm p_{i,t,k} )-\nabla_2 f_i(\bm w_t + \bm p_{i,t,k};\xi ) \|^2) \\
				\leq  & \frac{2}{TK} \sum_{t=1}^{T-1} \sum_{k=1}^{K-1} (\mathbb{E} ||  \mathcal{G}_{\eta_l} (\bm w_t, \bm p_{i,t,k})   ||^2 +  \sigma^2) \\
				\leq & \frac{2 (\phi_i(\hat{\bm w}^*, \bm p_{i,0,0}) - \phi_i(\hat{\bm w}^*, \bm p_{i}^*) ) + \frac{C_7}{T}+ ( 2\eta_l-  (L+1)\eta_l^2 +1)\sigma^2}{\eta_l-  \frac{L+1}{2} \eta_l^2} 
			\end{split}
		\end{equation}
		where the second inequality holds by Lemma \ref{smoothness of gm}. This completes the proof. 
	\end{proof}
	
\end{document}